\documentclass[lettersize,journal]{IEEEtran}
\usepackage{amsmath,amsfonts,amssymb}
\usepackage{algorithmic}
\usepackage{graphicx}
\usepackage{array}
\usepackage{bm}
\usepackage{textcomp}
\usepackage{multirow}
\usepackage{stfloats}
\usepackage{url}
\usepackage{balance}
\usepackage{verbatim}
\usepackage{xcolor}
\usepackage{cite}
\usepackage[font={small}]{caption}
\usepackage[colorlinks, linkcolor=black, anchorcolor=black, citecolor=black]{hyperref}
\usepackage{siunitx}
\usepackage{subcaption} % subfigure
\usepackage{threeparttable}

\hyphenation{op-tical net-works semi-conduc-tor IEEE-Xplore}

\newtheorem{mydef}{Definition}
\newtheorem{theorem}{Theorem}
\newtheorem{remark}{Remark}
\newtheorem{assumption}{Assumption}
\newenvironment{proof}{{\indent \indent \it Proof:\quad}}{\hfill$\blacksquare$\par}

\hyphenation{op-tical net-works semi-conduc-tor IEEE-Xplore}
\pdfminorversion=4

\begin{document}
\title{Dynamic Collision Avoidance Using Velocity Obstacle-Based Control Barrier Functions}
\author{Jihao Huang$^{1}$, Jun Zeng$^{3}$, Xuemin Chi$^{1}$, Koushil Sreenath$^{3}$, Zhitao Liu$^{2\dagger}$, Hongye Su$^{1}$
\thanks{This work was supported in part by National Key R\&D Program of China (Grant NO. 2021YFB3301000); National Natural Science Foundation of China (NSFC:62173297), Zhejiang Key R\&D Program (Grant NO. 2022C01035), Fundamental Research Funds for the Central Universities (NO.226-2022-00086). ($^\dagger$Corresponding author)}
\thanks{
$^{1}$ State Key Laboratory of Industrial Control Technology, Institute of Cyber-Systems and Control, Zhejiang University, Hangzhou, China {\tt\footnotesize \{jihaoh, chixuemin, hysu\}@zju.edu.cn}.
$^{2} $Institute of Intelligence Science and Engineering, Shenzhen Polytechnic University, Shenzhen, China {\tt\footnotesize ztliu@zju.edu.cn}.
$^{3}$ Hybrid Robotics Group at the Department of Mechanical Engineering, UC Berkeley, USA {\tt\footnotesize \{zengjunsjtu, koushils\}@berkeley.edu}.
}%
}

% The paper headers
\markboth{Journal of \LaTeX\ Class Files,~Vol.~14, No.~1, February~2025}%
{Shell \MakeLowercase{\textit{et al.}}: A Sample Article Using IEEEtran.cls for IEEE Journals}

\maketitle
\begin{abstract}
Designing safety-critical controllers for acceleration-controlled unicycle robots is challenging, as control inputs may not appear in the constraints of control Lyapunov functions (CLFs) and control barrier functions (CBFs), leading to invalid controllers. 
Existing methods often rely on state-feedback-based CLFs and high-order CBFs (HOCBFs), which are computationally expensive to construct and fail to maintain effectiveness in dynamic environments with fast-moving, nearby obstacles.
To address these challenges, we propose constructing velocity obstacle-based CBFs (VOCBFs) in the velocity space to enhance dynamic collision avoidance capabilities, instead of relying on distance-based CBFs that require the introduction of HOCBFs.
Additionally, by extending VOCBFs using variants of VO, we enable reactive collision avoidance between robots.
We formulate a safety-critical controller for acceleration-controlled unicycle robots as a mixed-integer quadratic programming (MIQP), integrating state-feedback-based CLFs for navigation and VOCBFs for collision avoidance.
To enhance the efficiency of solving the MIQP, we split the MIQP into multiple sub-optimization problems and employ a decision network to reduce computational costs. 
Numerical simulations demonstrate that our approach effectively guides the robot to its target while avoiding collisions.
Compared to HOCBFs, VOCBFs exhibit significantly improved dynamic obstacle avoidance performance, especially when obstacles are fast-moving and close to the robot. 
Furthermore, we extend our method to distributed multi-robot systems.
\end{abstract}

\begin{IEEEkeywords}
Safety-critical control, control barrier function, velocity obstacle, control Lyapunov function.
\end{IEEEkeywords}
\section{Introduction}
\subsection{Motivation}
\IEEEPARstart{W}{ith} advancements in robotics, robots are increasingly used in applications such as autonomous driving, delivery services, and industrial production~\cite{ferr2023dis}.
Ensuring reliable collision avoidance is essential for these applications to ensure safety and prevent potential losses.
Dynamic collision avoidance, which requires considering both the positions and the velocities of obstacles, has gained significant attention.
Recently, control barrier function (CBF)-based approaches are widely used in safety-critical controllers~\cite{ames2014control, ames2016control, zeng2021feasibility, xiao2021bridge, alan2022disturbance, molnar2023safe, huang2023obs, sun2024safety, dong2024safety}, which prioritize safety over other aspects, such as tracking.
When designing CBFs for acceleration-controlled unicycle robots based on position and Euclidean distance, it is necessary to introduce high-order CBFs (HOCBFs)~\cite{xiao2019control} to ensure that all control inputs explicitly appear in the CBF constraints.
However, designing appropriate HOCBFs is computationally expensive and is less effective for dynamic obstacle avoidance. 
Therefore, to avoid the use of HOCBFs and achieve better dynamic obstacle avoidance performance, designing CBFs in the velocity space is an effective solution.
The velocity obstacle (VO)~\cite{fiorini1998motion}-based methods are widely used for local collision avoidance between circular robots and obstacles by selecting velocities outside the VO, with several works~\cite{zhang2022velocity} showing promising results by formulating VO-based obstacle avoidance constraints in optimization problems. 
To achieve better dynamic obstacle avoidance performance while ensuring real-time capability, we propose constructing VO-based CBFs (VOCBFs).
Additionally, to mitigate VO deadlocks and enable navigation for acceleration-controlled robots, we design state-feedback-based control Lyapunov functions (CLFs). 
By combining VOCBFs and CLFs, the robot can move toward its target while ensuring reliable real-time collision avoidance.

\subsection{Related Works}
\subsubsection{Safety-Critical Controller}
Safety-critical controllers that unify CLFs for stability and CBFs for safety through quadratic programming (CLF-CBF-QP) are proposed in~\cite{ames2014control, ames2016control, zeng2021feasibility} to achieve adaptive cruise control (ACC).
These controllers prioritize safety over other objectives like tracking, relaxing CLF constraints to satisfy the CBF constraints when they conflict.
Furthermore, these safety-critical controllers have been applied to robotics, using CLFs for navigation and CBFs for safety guarantees\cite{wu2016safety, he2021rule}.
While many works focus on point-mass robot models with static obstacles, extensions to unicycle models are presented in~\cite{xiao2021bridge, huang2023obs}.
However, a nominal CBF cannot manage both linear and angular velocities of the unicycle model to avoid collisions, as the CBF is designed based on the position coordinates of the robot without considering the robot's orientation, resulting in lacking control over steering~\cite{huang2023obs}. 
To address this, Huang~\emph{et al.}~\cite{huang2023obs} propose designing the CBF based on the center of the robot's rear axle and its orientation, allowing the safety-critical controller to manage both linear and angular velocities to navigate the robot to its destination and avoid collisions with dynamic obstacles.
As a result, it overcomes the limitation of traditional CBFs, which cannot account for steering to avoid collisions.

The extensions mentioned above mainly focus on velocity-controlled unicycle robot models, where the control inputs are linear and angular velocities.
For acceleration-controlled unicycle robot models, safety-critical controllers are often designed using state-feedback-based CLFs and HOCBFs~\cite{xiao2019control, xiao2022control, xiao2022high, xiao2022ada}.
The difficulty in designing CLFs and CBFs for high-order systems lies in the fact that CLFs are usually invalid and CBFs with a relative degree greater than one.
The relative degree of a sufficiently differentiable function is defined as the number of times we need to differentiate it along the robot’s dynamics until the control inputs explicitly appear~\cite{xiao2019control}.
Xiao~\emph{et al.}~\cite{xiao2022ada} propose two CLFs to achieve navigation: one state-feedback-based CLF to adjust the robot's orientation, and another to adjust its velocity toward the desired velocity.
Moreover, since the relative degree of the CBF based on positions using Euclidean distance is two, \cite{xiao2022high} proposes using HOCBF to generate valid collision avoidance constraints for the acceleration-controlled unicycle model, ensuring that the nominal CBF with relative degree two is non-negative.
Since HOCBFs require all but the final derivative of the nominal CBF, where the control inputs explicitly appear, to be non-negative, the safety guarantee is only provided using a subset of the original safe set, leading to conservative control performance~\cite{thontepu2022control}. 
Moreover, computing proper constraint candidates with appropriate penalty weights and parameters for HOCBFs is costly and lacks geometrical intuition.
Additionally, Ames~\emph{et al.}~\cite{ames2021integral} propose the integral CBF (I-CBF), which guarantees safety in both states and inputs while minimally modifying a nominal dynamically defined controller.
When the control inputs do not appear in the derivative of the original CBF, I-CBF also constructs a valid CBF following the concept of HOCBF.
Consequently, it faces similar challenges as HOCBF.
To address these issues and avoid the need for HOCBFs, we propose constructing CBFs in the velocity space for the acceleration-controlled unicycle models, offering a efficient and effective solution for dynamic collision avoidance.
Moreover, unlike I-CBF~\cite{ames2021integral}, which relies on a dynamically defined tracking controller for navigation, our approach integrates stated-feedback-based CLFs for navigation and CBFs for collision avoidance within a unified framework.

\subsubsection{Velocity Obstacle}
Collision cone-based methods~\cite{fiorini1998motion,van2008reciprocal, snape2011hybrid, berg2011reciprocal} are commonly used for local collision avoidance between circular robots and obstacles by defining a geometric conic set in the robot's velocity space.
The collision cone considers the relative velocity between the robot and the obstacle, while VO is a specific type of collision cone that focuses on the robot’s velocity.
Additionally, VO-based methods has been extended to polytopic-shaped robots and obstacles~\cite{huang2023velocity}.
Under the assumption that both the robot and the obstacle maintain their current velocities, if the robot's current velocity is within the VO induced by the obstacle, then it will collide with the obstacle at some future moment; 
conversely, obstacle avoidance is guaranteed if the robot’s current velocity is outside the VO.
Therefore, the robot can avoid collisions with all obstacles in the environment by selecting a velocity outside any VO induced by all obstacles.
Since VO explicitly considers the velocity of the obstacle and defines the set of unsafe velocities for the robot, some works~\cite{Cheng2017decentralized,zhang2022velocity} use VO or its variants to formulate the collision avoidance constraints for acceleration-controlled robot models.
Zhang~\emph{et al.}~\cite{zhang2022velocity} formulate a constrained nonlinear model predictive control (NMPC) problem to achieve navigation and collision avoidance in distributed multi-robot systems.
This works employs the disjunction of two linear constraints to represent the safe velocity set of the robot, i.e., the complement set of the VO, with each linear constraint requiring that the robot’s velocity lies within a half-space. 
However, the collision avoidance constraint that the robot’s velocity should fall within the safe velocity set cannot be directly represented in terms of single constraint, so integer variables are introduced to ensure that at least one of the two linear constraints is satisfied, converting the problem into a mixed-integer nonlinear programming (MINLP) problem.
When the prediction horizon is long, solving the MINLP problem becomes time-consuming, compromising real-time performance guarantees.
To address this, we propose constructing VOCBFs based on the constraints in~\cite{zhang2022velocity}, transforming the optimization problem into a QP, thereby ensuring real-time performance.
In addition, \cite{thontepu2022control} proposes constructing a collision cone-based CBF (C3BF) for the acceleration-controlled unicycle robot model, and real-time performance is ensured as the optimization problem is formulated as a QP.
However, C3BF~\cite{thontepu2022control} is constructed based on the relative velocity between the robot and obstacles, and when applied to distributed multi-robot systems, it neglects the reactive nature between robots, treating them as obstacles.
In contrast, VOCBF is constructed based on the robot's velocity and allows for the incorporation of various VO variants which consider the reactive nature among robots, making it more suitable for distributed multi-robot systems.

\subsection{Contributions}
In this paper, we propose a safety-critical controller in the form of CLF-VOCBF-MIQP (mixed-integer QP) for the acceleration-controlled unicycle robot model. The key contributions of our work are as follows:
\begin{itemize}
    \item State-feedback-based CLFs are designed for navigation, with additional CLFs tailored for specific purposes.
    \item VOCBFs are designed for dynamic collision avoidance and are further extended with VO variants for improved applicability in distributed multi-robot systems.
    \item The constraints of CLFs, VOCBFs, and physical capabilities, along with the objective function, are formulated into a CLF-VOCBF-MIQP. Furthermore, we propose splitting the MIQP into multiple sub-optimization problems and employing a decision network to improve computational efficiency.
    \item Extensive numerical simulations are conducted to validate the effectiveness of our proposed approach, and better dynamic obstacle avoidance performance is demonstrated compared to HOCBF. 
    Furthermore, successful navigation for distributed multi-robot systems with guaranteed collision avoidance is achieved. 
\end{itemize}

The rest of this paper is organized as follows:
In Section~\ref{sec:pre}, we formally define the robot model and the problem studied in this paper, and review the concepts of CLF, CBF, and VO.
The detailed design of the safety-critical controller is outlined in Section~\ref{sec:con}, including the design of CLFs, VOCBFs, and the integrated controller.
Numerical simulation results are presented in Section~\ref{sec:num} to demonstrate the effectiveness of our approach.
Section~\ref{sec:conclu} concludes the paper.
\section{Preliminaries}
\label{sec:pre}
Consider an affine control system in the form of
\begin{equation}
    \dot{\bm{x}} = f(\bm{x}) + g(\bm{x})\bm{u},
    \label{eq:affine_system}
\end{equation}
where $\bm{x} \in \mathcal{D} \subset \mathbb{R}^n$, $\bm{u} \in \mathcal{U} \subset \mathbb{R}^m$, $\Delta \bm{u} \in \Delta \mathcal{U} \subset \mathbb{R}^m$
and $f:\mathcal{D} \to \mathbb{R}^n$ and $g:\mathcal{D} \to \mathbb{R}^{n \times m}$ are locally Lipschitz continuous on $\mathcal{D}$.
Moreover, $\mathcal{D}=\{\bm{x} \in \mathbb{R}^n \, \big| \, \bm{x}_{\text{min}} \leq \bm{x} \leq \bm{x}_{\text{max}}\}$, $\mathcal{U}=\{\bm{u} \in \mathbb{R}^m \, \big| \, \bm{u}_{\text{min}} \leq \bm{u} \leq \bm{u}_{\text{max}}\}$ and $\Delta \mathcal{U}=\{\Delta \bm{u} \in \mathbb{R}^m \, \big| \, \Delta \bm{u}_{\text{min}} \leq \Delta \bm{u} \leq \Delta \bm{u}_{\text{max}}\}$ denote the closed constraint sets of admissible states, control inputs and the changes in control inputs, respectively.
Here $\bm{x}_{\text{min}}$, $\bm{u}_{\text{min}}$, $\Delta \bm{u}_{\text{min}}$ and $\bm{x}_{\text{max}}$, $\bm{u}_{\text{max}}$, $\Delta \bm{u}_{\text{max}}$ represent the lower and upper bounds of $\bm{x}$, $\bm{u}$, $\Delta \bm{u}$, respectively.

\subsection{Problem Formulation}
\begin{figure} 
    \centering
    \includegraphics[width=0.7\linewidth]{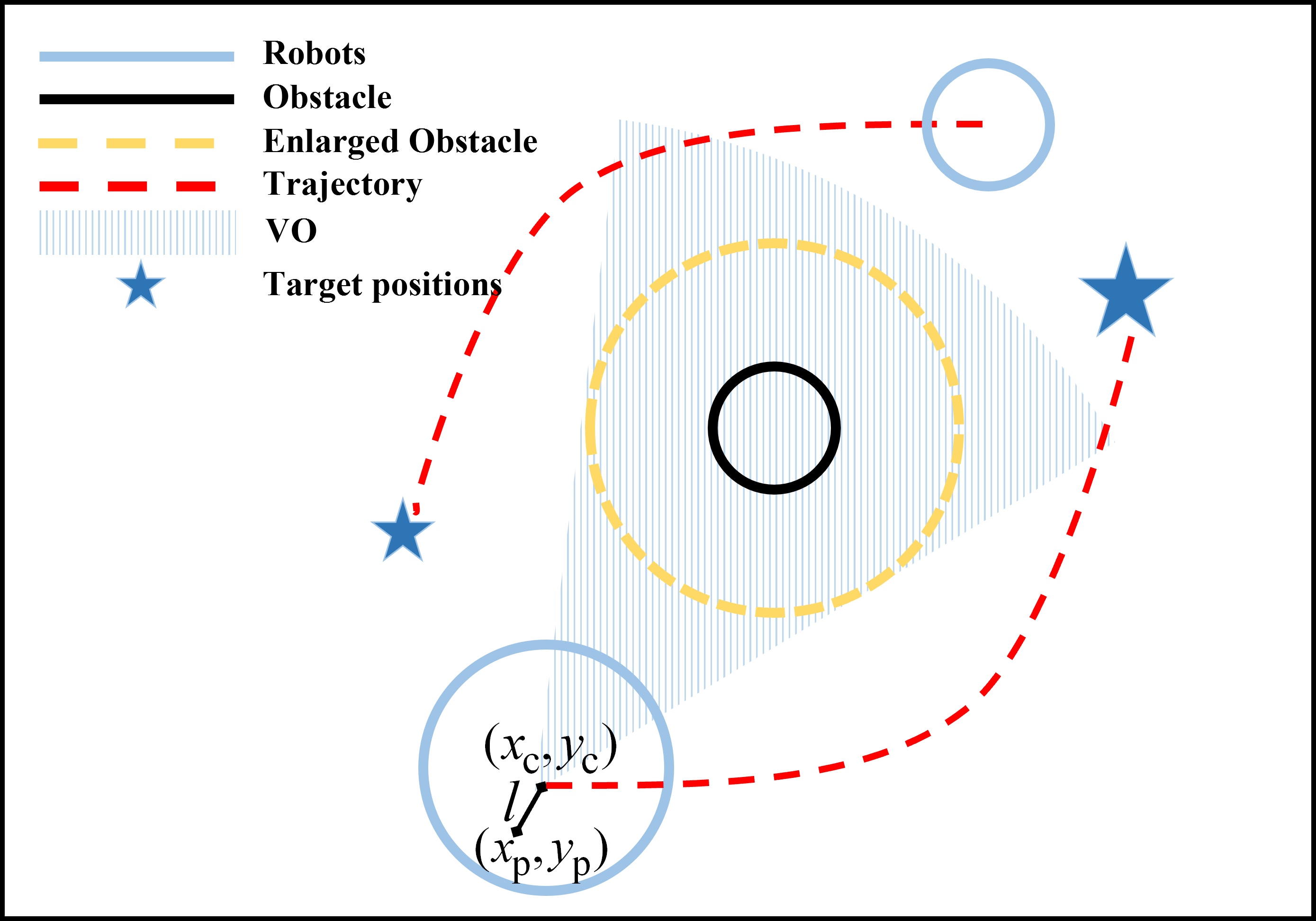}
    \caption{Collision avoidance between the robot and obstacles is achieved using a VO-based approach, where the VO is depicted as the shaded area.
    The obstacle is enlarged by inflating it with the robot's radius, which is then used to construct the VO. Here, $(x_\text{p}, y_\text{p})$ denote the coordinates of the rear axle axis, $(x_\text{c}, y_\text{c})$ represents the robot's center, and $l$ is the distance between them.}
    \label{fig:problem_formulation}
\end{figure}
Assume there is a set of $N$ robots sharing an environment with a set of $M$ static and dynamic obstacles, where both the robots and obstacles are circular-shaped.
For notations, subscripts $i$ and $j$ are used to distinguish different robots and obstacles, where each robot and obstacle are represented by $\text{R}_i, i \in \{0, 1, \dots, N - 1\}$ and $\text{O}_j, j\in \{0, 1, \dots, M-1\}$.
Moreover, we mainly focus on robots with an acceleration-controlled unicycle model in this work.
Since the circular-shaped robot has two axles, we model the acceleration-controlled unicycle model using the rear axle as:
\begin{equation} 
    \left[\begin{array}{c}
    \dot{x}_\text{p} \\
    \dot{y}_\text{p} \\
    \dot{\theta} \\
    \dot{v} \\
    \dot{w}
    \end{array}\right] = 
    \left[\begin{array}{c}
    v \cos\theta \\
    v \sin \theta \\
    w \\
    0 \\
    0
    \end{array}\right] + 
    \left[\begin{array}{cc}
    0 & 0 \\
    0 & 0 \\
    0 & 0 \\
    1 & 0 \\
    0 & 1 \\    
    \end{array}\right]
    \left[\begin{array}{c}
    a \\
    \alpha
    \end{array}\right],
    \label{eq:affine_robot_model}
\end{equation}
where $x_\text{p}, y_\text{p}$ denote the current coordinates of the rear axle axis, and the center $(x_\text{c}, y_\text{c})$ of the circular-shaped robots can be represented in terms of $(x_\text{p}, y_\text{p})$ as:
\begin{equation} 
    \left[\begin{array}{c}
        x_\text{c} \\
        y_\text{c} 
    \end{array}\right] = 
    \left[\begin{array}{c}
        x_\text{p} + l\cos\theta \\
        y_\text{p} + l\sin\theta 
    \end{array}\right],
    \label{eq:transform_relationship}
\end{equation}
where $l$ represents the distance between the rear axle axis and the center, as shown in Fig.~\ref{fig:problem_formulation}.
Moreover, $\theta$ denotes the orientation with respect to the $x$-axis; $v$ and $\omega$ denote the linear and angular velocities of the robot; and $a$ and $\alpha$ denote the linear and angular accelerations that control the motion of the robot.
Eq.~\eqref{eq:affine_robot_model} is in the form of~\eqref{eq:affine_system}, and we have $\bm{x}=[x_\text{p}, y_\text{p}, \theta, v, \omega]^{\top} \in \mathbb{R}^5$, $\bm{u} = [a, \alpha]^{\top} \in \mathbb{R}^2$.
$\bm{x}_{\text{min}}$, $\bm{u}_{\text{min}}$, $\Delta \bm{u}_{\text{min}}$, $\bm{x}_{\text{max}}$, $\bm{u}_{\text{max}}$ and $\Delta \bm{u}_{\text{max}}$ represent the physical limits of states and control inputs, such as maximum velocity, acceleration and the change rate of the acceleration.
Additionally, the double integrator dynamics model of the obstacle is defined as $\left[\begin{array}{c}
        \dot{x}_\text{o}, \;
        \dot{y}_\text{o}, \;
        \dot{v}_\text{ox},\;
        \dot{v}_\text{oy}
    \end{array}\right]^\top = 
    \left[\begin{array}{c}
        v_\text{ox}, \;
        v_\text{oy}, \;
        a_\text{ox}, \;
        a_\text{oy}
    \end{array}\right]^\top$,
where $x_\text{o}$ and $y_\text{o}$ denote the horizontal and vertical positions of the obstacle; $v_\text{ox}$, $v_\text{oy}$, $a_\text{ox}$ and $a_\text{oy}$ represent its horizontal and vertical velocities and accelerations.
The states and control inputs of the obstacle are denoted as $\bm{x}_{\text{O}} = [x_\text{o}, y_\text{o}, v_\text{ox}, v_\text{oy}]^{\top}$ and $\bm{u}_{\text{O}} = [a_\text{ox}, a_\text{oy}]^{\top}$, respectively.

In our work, each robot operates as an individual and can make decisions independently based on the environmental information to reach its target position while avoiding collisions with other robots and obstacles, as shown in Fig.~\ref{fig:problem_formulation}.
The task of each robot is formulated as a constrained optimal control problem (COCP):
\begin{equation}
\begin{aligned}
    J(\bm{u}(t)) &= \min_{\bm{u}}\int_{t_0}^{t_f} \frac{1}{2} \bm{u}(t)^{\top} H \bm{u}(t), \\
    \text{s.t.} \; & \dot{\bm{x}}(t) = f(\bm{x}(t)) + g(\bm{x}(t))\bm{u}(t), \\
    & \bm{x}(t) \in \mathcal{D}, \bm{x}(t) \in \mathcal{C}_\text{Safe}, \\
    & \bm{u}(t) \in \mathcal{U}, \Delta \bm{u}(t) \in \Delta \mathcal{U},
    \label{eq:ocp}
\end{aligned}
\end{equation}
where $H$ is a positive definite matrix, $\mathcal{C}_\text{Safe}$ represents the safe region in the configure space, with $t_0$ and $t_f$ denoting the initial and final times, respectively.
The objective function of the optimization problem is to minimize energy consumption.
Additionally, the COCP considers various constraints, including kinematic constraints such as the robot's dynamics, physical limits of states and control inputs and collision avoidance constraints. 

\subsection{Control Lyapunov Function}
\begin{mydef}[Class $\mathcal{K}$ and $\mathcal{K}_{\infty}$ functions\textnormal{~\cite{ames2019control}}]
    A Lipschitz continuous function $\mu: [0, a) \to [0, \infty), a > 0$ is said to belong to class $\mathcal{K}$ if it is strictly increasing and satisfies $\mu(0) = 0$.
    Moreover, this function is said to belong to class $\mathcal{K}_{\infty}$ if it belongs to class $\mathcal{K}$ and further satisfies $a = \infty$ and $\mu(b) \to \infty$ as $b \to \infty$.
\end{mydef}
\begin{mydef}[Control Lyapunov function\textnormal{~\cite{ames2019control}}]
    A continuously differentiable function $V: \mathcal{D} \to \mathbb{R}$ is a CLF for system \eqref{eq:affine_system} if it is positive definite and satisfies
    \begin{equation}
        \inf_{\bm{u} \in \mathcal{U}} [L_f V(\bm{x}) + L_g V(\bm{x}) \bm{u}] \leq -\gamma(V(\bm{x})),
        \label{eq:define_clf}
    \end{equation}
    where $L_f V(\bm{x}) := \frac{\partial V}{\partial \bm{x}} f(\bm{x})$ and $L_g V(\bm{x}) := \frac{\partial V}{\partial \bm{x}}g(\bm{x})$ are Lie-derivatives of $V(\mathbf{x})$, $\gamma(\cdot)$ belongs to class $\mathcal{K}$.
    \label{def:clf}
\end{mydef}
\begin{theorem}
Given a CLF $V(\bm{x})$ defined in Definition~\ref{def:clf}, any Lipschitz continuous controller $\bm{u} \in K_{\text{clf}}(\bm{x})$, with 
\begin{equation}
    K_{\text{clf}} (\bm{x}):=\{\bm{u} \in \mathcal{U}, L_f V(\bm{x}) + L_g V(\bm{x}) \bm{u} \leq -\gamma(V(\bm{x})) \},
    \label{eq:kclf}
\end{equation}
can stabilize the system \eqref{eq:affine_system} to its desired states~\cite{ames2019control}.
\end{theorem}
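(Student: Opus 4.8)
The plan is to carry out a standard Lyapunov stability argument, treating $V$ as a Lyapunov candidate and showing that membership in $K_{\text{clf}}(\bm{x})$ forces its time derivative along the closed-loop trajectories to be negative away from the desired equilibrium. First I would record the structural facts supplied by Definition~\ref{def:clf}: since $V$ is continuously differentiable and positive definite, it attains its unique minimum value $0$ at the desired state and is strictly positive elsewhere, so the desired state corresponds to $V(\bm{x}) = 0$. Moreover, the set $K_{\text{clf}}(\bm{x})$ is nonempty at every $\bm{x}$, because the infimum condition~\eqref{eq:define_clf} guarantees the existence of at least one admissible $\bm{u} \in \mathcal{U}$ satisfying the defining inequality of~\eqref{eq:kclf}.

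Next I would differentiate $V$ along the trajectories of~\eqref{eq:affine_system}. By the chain rule, $\dot{V}(\bm{x}) = \frac{\partial V}{\partial \bm{x}}\dot{\bm{x}} = L_f V(\bm{x}) + L_g V(\bm{x})\bm{u}$. For any $\bm{u} \in K_{\text{clf}}(\bm{x})$, the inequality built into~\eqref{eq:kclf} immediately yields $\dot{V}(\bm{x}) \leq -\gamma(V(\bm{x}))$. Because $\gamma$ belongs to class $\mathcal{K}$, it is strictly increasing with $\gamma(0)=0$, so $\gamma(V(\bm{x})) > 0$ whenever $V(\bm{x}) > 0$; hence $\dot{V}(\bm{x}) < 0$ at every state other than the equilibrium. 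This is exactly the hypothesis required by Lyapunov's direct method for asymptotic stability.

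To upgrade negativity of $\dot{V}$ into convergence, I would invoke the comparison lemma: the bound $\dot{V} \leq -\gamma(V)$ implies $V(\bm{x}(t)) \leq y(t)$, where $y$ solves the scalar initial value problem $\dot{y} = -\gamma(y)$ with $y(0) = V(\bm{x}(0))$. Since $\gamma$ is class $\mathcal{K}$, the comparison trajectory $y(t)$ decreases monotonically to $0$, which forces $V(\bm{x}(t)) \to 0$ and therefore drives $\bm{x}(t)$ to the desired state.

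The step I expect to be the genuine obstacle is not the Lyapunov computation but the well-posedness of the closed loop: the argument presupposes that the feedback $\bm{u}(\bm{x}) \in K_{\text{clf}}(\bm{x})$ produces a closed-loop vector field along which solutions exist and along which $\dot{V}$ is well defined. Here the Lipschitz continuity assumed in the statement does the essential work, since combined with the local Lipschitz continuity of $f$ and $g$ it guarantees existence and uniqueness of the trajectories to which the comparison lemma is applied. I would therefore emphasize that point rather than the routine chain-rule and monotonicity computations.
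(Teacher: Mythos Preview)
Your argument is the standard Lyapunov/comparison-lemma derivation and is correct. Note, however, that the paper does not supply its own proof of this theorem: it is stated as a preliminary result and attributed directly to~\cite{ames2019control}, so there is no in-paper proof to compare against. What you have written is essentially the textbook argument underlying that citation.
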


\subsection{Control Barrier Function}
\label{sec:cbf}
\begin{mydef}[Forward invariance set]
    A set $\mathcal{C} \subset \mathcal{D}$ is forward invariant with respect to \eqref{eq:affine_system} if for every initial state $\bm{x}_0 \in \mathcal{C}$, $\bm{x}(t) \in \mathcal{C}$, for $\bm{x}(t_0) = \bm{x}_0, \forall t \geq t_0$.
    The system~\eqref{eq:affine_system} is safe with respect to the set $\mathcal{C}$ if $\mathcal{C}$ is forward invariant.
    \label{def:safe_set}
\end{mydef}

Compared to CLFs, which are proposed to stabilize the system~\eqref{eq:affine_system} to an equilibrium point, CBFs are proposed to ensure the forward invariance of a set, as defined in Definition~\ref{def:safe_set}.
Consider a set $\mathcal{C} \subset \mathcal{D}$ defined as a zero-superlevel set of a continuously differentiable function $h$: $\mathcal{D} \to \mathbb{R}$, yielding
\begin{equation}
\begin{aligned}
    \mathcal{C} & = \{\bm{x} \in \mathcal{D} \subset \mathbb{R}^n : h(\bm{x}) \geq 0 \}, \\
    \partial \mathcal{C} & = \{\bm{x} \in \mathcal{D} \subset \mathbb{R}^n : h(\bm{x}) = 0 \}, \\
    \rm Int(\mathcal{C}) & = \{\bm{x} \in \mathcal{D} \subset \mathbb{R}^n : h(\bm{x}) > 0 \}.
    \label{eq:safe_set}
\end{aligned} 
\end{equation}
Throughout this paper, we refer to $\mathcal{C}$ as the safe set.
\begin{mydef}[Control barrier function\textnormal{~\cite{ames2019control}}]
    Suppose the set $\mathcal{C}$ defined in~\eqref{eq:safe_set} is the superlevel set of a continuously differentiable function $h$: $\mathcal{D} \to \mathbb{R}$, then $h$ is a CBF if there exists an extended class $\mathcal{K}_{\infty}$ function $\mu(\cdot)$ such that for the control system~\eqref{eq:affine_system}
    \begin{equation}
        \sup_{\bm{u} \in \mathcal{U}} [L_f h(\bm{x}) + L_g h(\bm{x})\bm{u}] \geq -\mu(h(\bm{x})),
        \label{eq:define_cbf}
    \end{equation}
    where $L_f h(\bm{x}) = \frac{\partial h(\bm{x})}{\partial \bm{x}}f(\bm{x})$ and $L_g h(\bm{x}) = \frac{\partial h(\bm{x})}{\partial \bm{x}}g(\bm{x})$ are Lie-derivatives of $h(\bm{x})$.
    \label{def:cbf}
\end{mydef}
\begin{theorem}
    If $h(\bm{x})$ is a CBF on $\mathcal{C}$ and $\frac{\partial h(\bm{x})}{\partial \bm{x}} \not= 0, \forall \bm{x} \in \partial \mathcal{C}$, then any Lipschitz continuous controller $\bm{u} \in K_{\text{cbf}}(\bm{x})$, with 
    \begin{equation}
        K_{\text{cbf}}(\bm{x}):=\{\bm{u} \in \mathcal{U}, L_f h(\bm{x}) + L_g h(\bm{x})\bm{u} \geq -\mu(h(\bm{x}))\}
        \label{eq:kcbf}
    \end{equation}
    can ensure the forward invariance of $\mathcal{C}$ and thus the safety of system~\eqref{eq:affine_system}~\cite{ames2019control}.
\end{theorem}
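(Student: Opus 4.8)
The plan is to establish forward invariance of $\mathcal{C}$ directly, by showing that the scalar quantity $h(\bm{x}(t))$ cannot become negative along any closed-loop trajectory. The essential mechanism is to reduce the CBF condition~\eqref{eq:kcbf} to a scalar differential inequality on $h$ and then apply a comparison principle; this is the standard route for CBF-based invariance results.

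First I would confirm well-posedness of the closed-loop system. Since $f$ and $g$ are locally Lipschitz on $\mathcal{D}$ and the controller $\bm{u}(\bm{x}) \in K_{\text{cbf}}(\bm{x})$ is assumed Lipschitz continuous, the closed-loop vector field $f(\bm{x}) + g(\bm{x})\bm{u}(\bm{x})$ is locally Lipschitz, so the initial value problem with $\bm{x}(t_0) = \bm{x}_0 \in \mathcal{C}$ admits a unique solution $\bm{x}(t)$ on a maximal interval of existence. Differentiating $h$ along this solution and using the definition of the Lie derivatives gives
\begin{equation}
    \dot{h}(\bm{x}(t)) = \frac{\partial h}{\partial \bm{x}} \dot{\bm{x}} = L_f h(\bm{x}(t)) + L_g h(\bm{x}(t)) \bm{u}(\bm{x}(t)).
\end{equation}
By membership $\bm{u}(\bm{x}) \in K_{\text{cbf}}(\bm{x})$, the right-hand side is bounded below, yielding the differential inequality $\dot{h}(\bm{x}(t)) \geq -\mu(h(\bm{x}(t)))$ for all $t$ in the interval of existence.

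Next I would invoke a comparison argument. Consider the scalar initial value problem $\dot{y} = -\mu(y)$ with $y(t_0) = h(\bm{x}_0) \geq 0$. Because $\mu$ is an extended class $\mathcal{K}_{\infty}$ function it satisfies $\mu(0) = 0$, so $y \equiv 0$ is an equilibrium; since class $\mathcal{K}$ functions are Lipschitz, solutions are unique, and any trajectory starting at a nonnegative value remains nonnegative, i.e.\ $y(t) \geq 0$ for all $t \geq t_0$. Applying the Comparison Lemma to $\dot{h} \geq -\mu(h)$ then gives $h(\bm{x}(t)) \geq y(t) \geq 0$ for all $t \geq t_0$, which is exactly $\bm{x}(t) \in \mathcal{C}$. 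Since this holds for every $\bm{x}_0 \in \mathcal{C}$, the set $\mathcal{C}$ is forward invariant, and by Definition~\ref{def:safe_set} the system is safe.

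I expect the subtle point to be the treatment of the boundary $\partial \mathcal{C}$, where $h = 0$ and the defining inequality~\eqref{eq:define_cbf} reduces to $\sup_{\bm{u}} \dot{h} \geq 0$. The regularity assumption $\frac{\partial h}{\partial \bm{x}} \neq 0$ on $\partial \mathcal{C}$ guarantees that $0$ is a regular value of $h$, so that $\mathcal{C}$ has a well-behaved boundary; this is precisely the condition under which the subtangentiality (Nagumo-type) condition encoded by the CBF inequality is both necessary and sufficient for invariance. The care needed is to ensure the comparison bound stays valid as trajectories approach $h = 0$ and that solutions do not exit $\mathcal{C}$ in finite time even though $\mu$ may have vanishing slope at the origin; this is handled by the nonnegativity of the comparison solution $y(t)$ rather than by any strict decay estimate.
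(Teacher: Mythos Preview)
Your proposal is correct and follows the standard comparison-lemma route for CBF invariance. Note, however, that the paper does not actually supply its own proof of this theorem: it is stated as a background result and attributed to~\cite{ames2019control}, so there is no in-paper argument to compare against. Your argument is essentially the one found in that reference (differential inequality $\dot h \geq -\mu(h)$, comparison with $\dot y = -\mu(y)$, Nagumo-type boundary regularity via $\partial h/\partial \bm{x} \neq 0$), so it aligns with what the paper is citing rather than deviating from it. One minor remark: your claim that class~$\mathcal{K}$ functions are Lipschitz is not true in general, but it is true under the paper's Definition~1, which explicitly requires Lipschitz continuity, so the uniqueness step in your comparison argument is justified here.
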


Since constraints~\eqref{eq:kclf} and~\eqref{eq:kcbf} are affine in the control inputs $\bm{u}$, many studies~\cite{ames2014control, ames2016control} propose a safety-critical controller which unifies CLFs for stability and CBFs for safety through a quadratic programming (CLF-CBF-QP), formulated as:
\noindent\rule{\columnwidth}{0.8pt}
\textbf{CLF-CBF-QP:}
\begin{subequations}
\begin{align}
    \min_{(\bm{u}, \delta) \in \mathbb{R}^{m + 1}} & \frac{1}{2}\bm{u}^{\top} H \bm{u} + p \delta^2 \label{eq:optimal_problem1} \\
    \text{s.t.} ~& L_f V(\bm{x}) + L_g V(\bm{x}) \bm{u} + \gamma(V(\bm{x})) \leq \delta, \label{eq:cons_clf1} \\ 
    & L_f h(\bm{x}) + L_g h(\bm{x})\bm{u} \geq -\mu(h(\bm{x})), \label{eq:cons_cbf1} \\
    & \bm{u} \in \mathcal{U}, \Delta \bm{u} \in \Delta \mathcal{U}, \label{eq:cons_u1}
\end{align}
\label{eq:clf_cbf_qp_optimal_problem}
\end{subequations}
\noindent\rule{\columnwidth}{0.8pt}
\noindent
where $H$ is a predefined positive definite matrix, $p > 0$ is a weighting factor to minimize $\delta$, where $\delta$ is a relaxation variable to relax the CLF constraint~\eqref{eq:cons_clf1} to ensure safety when the CLF constraint conflicts with the CBF constraint~\eqref{eq:cons_cbf1}. 
The objective function~\eqref{eq:optimal_problem1} aims to minimize energy consumption and reduce the additional quadratic cost associated with the relaxation variable.

The original COCP~\eqref{eq:ocp} can be reformulated as a sequence of QPs in the form of~\eqref{eq:clf_cbf_qp_optimal_problem}, i.e., the time interval $[t_0, t_f]$ is partitioned into a set of equal time intervals $\{ [t_0, t_0 + \Delta t), [t_0 + \Delta t, t_0 + 2\Delta t), \dots\}$, where $\Delta t > 0$.
Within each interval $[t_0 + q\Delta t, t_0 + (q + 1)\Delta t)$, for $q = 0, 1, 2, \dots$, the state maintains its value at the beginning of the interval, and the control inputs are obtained by solving~\eqref{eq:clf_cbf_qp_optimal_problem}.
The state is then updated based on the applied control input, and the procedure is repeated for subsequent intervals. 
Therefore, this QP-based approach~\eqref{eq:clf_cbf_qp_optimal_problem} is sub-optimal compared to the original COCP~\eqref{eq:ocp}, since the optimizations are performed pointwise in time.

\subsection{Velocity Obstacle}
\label{sec:vo}
\begin{figure} 
    \centering
    \includegraphics[width=0.7\linewidth]{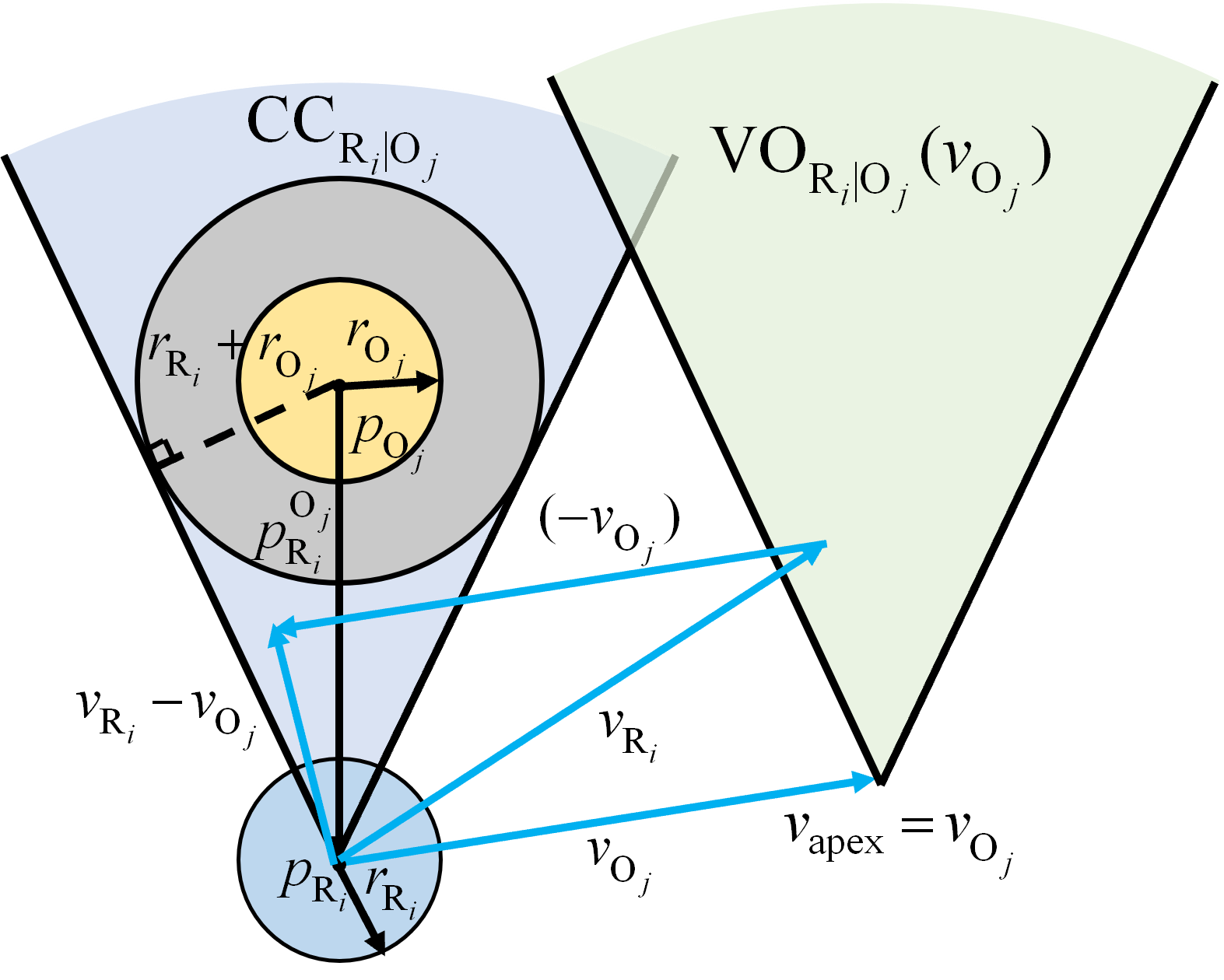}
    \caption{Velocity obstacle $\text{VO}_{\text{R}_i|\text{O}_j}(\bm{v}_{\text{O}_j})$ of robot $\text{R}_i$ induced by the obstacle $\text{O}_j$, and $\text{CC}_{\text{R}_i|\text{O}_j}$ represents the collision cone between them. If the relative velocity $\bm{v}_{\text{R}_i} - \bm{v}_{\text{O}_j} \in \text{CC}_{\text{R}_i|\text{O}_j}$ or the robot's velocity $\bm{v}_{\text{R}_i} \in \text{VO}_{\text{R}_i|\text{O}_j}(\bm{v}_{\text{O}_j})$, then a collision will occur between $\text{R}_i$ and $\text{O}_j$ at some future time with the assumption that both maintain their current velocities.}
    \label{fig:vo_explain}
\end{figure}
Assume there is a robot $\text{R}_i$ and an obstacle $\text{O}_j$ with radii $r_{\text{R}_i}$ and $r_{\text{O}_j}$.
Their positions and velocities are denoted as $\bm{p}_{\text{R}_i}$, $\bm{p}_{\text{O}_j}$, $\bm{v}_{\text{R}_i}$, $\bm{v}_{\text{O}_j}$, as shown in Fig.~\ref{fig:vo_explain}.
The VO of $\text{R}_i$ induced by $\text{O}_j$ is denoted as $\text{VO}_{\text{R}_i|\text{O}_j}(\bm{v}_{\text{O}_j})$, which contains all velocities for $\text{R}_i$ that would result in collisions with the obstacle in the future, assuming both maintain their current constant velocities.
Let $A \oplus B = \{ a + b \big| a \in A, b \in B\}$ be the Minkowski sum of sets $A$ and $B$, and let $\lambda(\bm{p}, \bm{v}) = \{\bm{p} + t\bm{v} \big| t>0\}$ denote a ray starting at position $\bm{p}$ and in the direction of vector $\bm{v}$.
If a ray starting at $\bm{p}_{\text{R}_i}$ and heading in the direction of the relative velocity $\bm{v}_{\text{R}_i} - \bm{v}_{\text{O}_j}$ intersects the Minkowski sum of $\text{O}_j \oplus -\text{R}_i$ centered at $\bm{p}_{\text{O}_j}$, 
then $\bm{v}_{\text{R}_i}$ is in the VO of $\text{R}_i$ induced by $\text{O}_j$.
Hence, $\text{VO}_{\text{R}_i|\text{O}_j}(\bm{v}_{\text{O}_j})$ is defined as~\cite{fiorini1998motion}:
\begin{equation*}
\resizebox{1.0\linewidth}{!}{$
    \text{VO}_{\text{R}_i|\text{O}_j}(\bm{v}_{\text{O}_j}) = \big\{
    \bm{v}_{\text{R}_i} \big| \lambda(\bm{p}_{\text{R}_i}, \bm{v}_{\text{R}_i}-\bm{v}_{\text{O}_j}) \cap \text{O}_j \oplus -\text{R}_i \neq \emptyset 
    \big\} $}.
\label{eq:vo_define}
\end{equation*}
Therefore, under the assumption that both $\text{R}_i$ and $\text{O}_j$ maintain their current constant velocities, if $\bm{v}_{\text{R}_i} \in \text{VO}_{\text{R}_i|\text{O}_j}(\bm{v}_{\text{O}_j})$, a collision will occur between $\text{R}_i$ and $\text{O}_j$ at some future moment.
Conversely, collision avoidance is achieved if $\bm{v}_{\text{R}_i} \notin \text{VO}_{\text{R}_i|\text{O}_j}(\bm{v}_{\text{O}_j})$.
In fact, $\text{VO}_{\text{R}_i|\text{O}_j}(\bm{v}_{\text{O}_j})$ is a velocity cone with its apex at $\bm{v}_{\text{O}_j}$, as shown in Fig.~\ref{fig:vo_explain}.
It can be transformed from the collision cone (CC) defined as:
\begin{equation*}
    \text{CC}_{\text{R}_i|\text{O}_j} = \big\{\bm{v}_{\text{R}_i}-\bm{v}_{\text{O}_j} \big| \lambda(\bm{p}_{\text{R}_i}, \bm{v}_{\text{R}_i}-\bm{v}_{\text{O}_j}) \cap \text{O}_j \oplus -\text{R}_i \neq \emptyset 
    \big\}
\end{equation*}
with the transformation $\text{VO}_{\text{R}_i|\text{O}_j}(\bm{v}_{\text{O}_j}) = \text{CC}_{\text{R}_i|\text{O}_j} \oplus \bm{v}_{\text{O}_j}$.
The difference between these two cones lies in the position of their apexes: $\text{CC}_{\text{R}_i|\text{O}_j}$ has its apex at the origin since it considers the relative velocity, while $\text{VO}_{\text{R}_i|\text{O}_j}(\bm{v}_{\text{O}_j})$ has its apex at $\bm{v}_{\text{O}_j}$.

VO is widely used to enable a robot to avoid collisions with multiple obstacles by having the robot select a velocity which lies outside any VO induced by each obstacle~\cite{fiorini1998motion}.
Several approaches have also been proposed to overcome the limitations of VO in avoiding collisions between robots, such as reciprocal velocity obstacle (RVO)~\cite{van2008reciprocal}, hybrid reciprocal velocity obstacle (HRVO)~\cite{snape2011hybrid} and optimal reciprocal collision avoidance (ORCA)~\cite{berg2011reciprocal}.
These methods reduce unnecessary oscillations by explicitly considering the reactive nature among robots.
The challenges of VO lie in its reliance on the assumption that the velocity of the obstacle remains constant and its inability to explicitly consider the robot's physical limitation constraints on velocity and acceleration.
In this work, we fully leverage the dynamic obstacle avoidance capabilities of VO to construct VO-based CBFs in the velocity space, instead of constructing CBF based on positions and introducing HOCBF for the acceleration-controlled unicycle model.
For more details, readers can refer to Section~\ref{sec:design_cbf}.
\section{Controller Design}
\label{sec:con}
In this section, we first present the design of state-feedback-based CLFs for the acceleration-controlled unicycle model~\eqref{eq:affine_robot_model} to achieve navigation.
Next, we demonstrate the construction of VOCBFs in the velocity space to ensure dynamic collision avoidance.
Additionally, we impose CBF constraints to account for the physical limitations of states.
Finally, we demonstrate how to formulate the constraints of CLFs, VOCBFs, along with other physical limitation constraints on states and control inputs in the form of CLF-VOCBF-MIQP and describe how it can be efficiently solved using a decision network.
For brevity, we omit the subscript $i$ of robots and reserve $i$ for other purposes.

\subsection{Design of Control Lyapunov Functions}
CLFs are commonly used to stabilize states to their desired equilibrium points.
When applying CLFs to the field of robotics, it is common to leverage their stabilizing properties for navigation purposes.
In this section, we mainly demonstrate the design of state-feedback-based CLFs~\cite{wang2023learning} to achieve navigation for the acceleration-controlled unicycle model~\eqref{eq:affine_robot_model}.
Specifically, we design four CLFs, each serving a different purpose: reducing the distance between the robot and its target position, adjusting the robot’s orientation to ensure it moves toward the target, and enabling efficient and smooth motion.

Assume the robot's goal state is denoted as $(x_\text{gr}, y_\text{gr}, \theta_\text{g})$, where $(x_\text{gr}, y_\text{gr})$ represents the coordinates of the rear axle axis.
Additionally, the center position $(x_\text{g}, y_\text{g})$ of the goal state is defined as $x_\text{g} = x_\text{gr} + l\cos\theta_\text{g}, y_\text{g} = y_\text{gr} + l\sin\theta_\text{g}$.
Our primary objective is to stabilize the robot’s position to its target position $(x_\text{g}, y_\text{g})$ without explicitly considering other states such as $v$ and $\omega$ in the dynamics~\eqref{eq:affine_robot_model}.
If we design a CLF such as $V(\bm{x}) = (x_\text{c} - x_\text{g})^2 + (y_\text{c} - y_\text{g})^2$, then $\forall \bm{x}, L_g V(\bm{x}) = \bm{0}$. 
This results in a high relative degree issue~\cite{xiao2019control}, preventing us from obtaining a valid controller. 
To address this problem, we adopt state-feedback-based CLFs~\cite{wang2023learning}, which mitigate the relative degree issue and facilitate controller design.
\begin{remark}
    State-feedback-based CLFs stabilize a state to its desired state by making its higher-order derivative inversely dependent on all lower-order derivatives, ensuring all lower-order derivatives eventually converge to their desired states. This method is especially effective for high-order systems.
\end{remark}

Suppose $\bm{x} = [x_1, x_2, \dots, x_n]^\top \in \mathbb{R}^n$, and we aim to stabilize only the partial states $[x_1, x_2, \dots, x_s]^\top \in \mathbb{R}^s, s < n$ to their desired states $[x_1^*, x_2^*, \dots, x_s^*]^\top$, with $\bm{u} \in \mathbb{R}^m$.
Assume $\forall x_i, i \in \{1, 2, \dots, s\}$, control inputs $\bm{u}$ can influence the dynamics.
Furthermore, the relative degree of $x_i, i \in \{1, 2, \dots, s\}$ is defined as $r_i \in \mathbb{N}$, and $n_0 := \sum_{i=1}^{s} r_i$ represents the sum of the relative degree.
We recursively define new states by taking derivatives of $x_i$: the first derivative is $x_{i,1} \in \mathbb{R}$, the second derivative is $x_{i,2} \in \mathbb{R}$, and so on, until reaching $x_{i, r_i - 1} \in \mathbb{R}$, where the first derivative of $x_{i, r_i - 1}$ becomes an affine function of $\bm{u}$, i.e., the relative degree of $x_{i, r_i - 1}$ is 1.
In this work, we extend the concept of state-feedback-based CLFs, which were originally designed to stabilize $x_i$ to the origin, to stabilize $x_i$ to its desired state $x_i^*$.
The desired state of $x_{i, r_i - 1}$ is defined as $x_{i, r_i - 1}^{*}:= -l_i(x_i - x_i^*) - l_{i, 1}x_{i, 1} - \dots - l_{i, r_i - 2}x_{i, r_i - 2}$, where $l_i > 0$, $l_{i, 1} > 0$, $\dots$, $l_{i, r_i - 2} > 0$.
Since the desired state of $x_{i, r_i - 1}$ depends inversely on all lower relative degree states $x_{i, k}, k \in \{0, 1, \dots, r_i - 2\}$, stabilizing $x_{i, r_i - 1}$ to its desired state $x_{i, r_i - 1}^{*}$ will indirectly stabilize all lower relative degree states to their desired states, including $x_i$.
Thus, we can design a CLF to stabilize $x_{i, r_i - 1}$ to its desired state $x_{i, r_i - 1}^{*}$, which in turn stabilizes $x_i$ to its desired states.
This is referred to as a state-feedback-based CLF~\cite{wang2023learning}.
To integrate this design, we define an integrated state $\bm{y} := [x_1, \dots, x_s, x_{1, 1}, \dots, x_{1, r_1 - 1}, \dots, x_{s,1}, \dots, x_{s, r_s - 1}]^\top \in \mathbb{R}^{n_0}$ with its desired state $\bm{y}^*:= [x_1^*, \dots, x_s^*, 0, \dots, 0]^\top$. 
The valid state-feedback-based CLF with relative degree 1 is defined as:
\begin{equation}
\begin{aligned}
    & V(\bm{y}, \bm{y}^*) = \sum_{i=1}^{s} c_i (\frac{1}{l_i}x_{i, r_i - 1} - \frac{1}{l_i}x_{i, r_i - 1}^{*})^2 \\
    & = \sum_{i=1}^{s}c_i(x_i - x_i^* + k_{i, 1}x_{i, 1} + \dots + k_{i, r_i - 1}x_{i, r_i - 1})^2,
    \label{eq:hoclf}
\end{aligned}
\end{equation}
where $c_i \geq 0, i \in \{1, \dots, s\}$, and $k_{i, 1} = \frac{l_{i, 1}}{l_i}, \dots, k_{i, r_i - 2} = \frac{l_{i, r_i - 2}}{l_i}, k_{i, r_i - 1} = \frac{1}{l_i} > 0$.
With this state-feedback-based CLF, we can stabilize the states $[x_1, \dots, x_s]^\top$ to their desired states $[x_1^*, \dots, x_s^*]^\top$.
Additionally, \eqref{eq:hoclf} can be rewritten in matrix form as:
\begin{equation}
    V(\bm{y}, \bm{y}^*) = (Q(\bm{y} - \bm{y}^*))^\top \Lambda (Q(\bm{y} - \bm{y}^*)),
    \label{eq:hoclf_matrix_form}
\end{equation}
where 
\begin{equation*}
\begin{aligned}
    Q &= 
    \left[\begin{array}{cc}
        \mathbf{I}_{s \times s} & \mathcal{K} \\
        \mathbf{0}_{(n_0 - s) \times s} & \mathbf{0}_{(n_0 - s) \times (n_0 - s)}
    \end{array}\right] \in \mathbb{R}^{n_0 \times n_0}, \\
    \mathcal{K} &= 
    \left[\begin{array}{cccc}
        \bm{k}_1 & \bm{0} & \cdots & \bm{0} \\
        \bm{0} & \bm{k}_2 & \cdots & \bm{0} \\
        \vdots & \vdots & \ddots & \vdots \\
        \bm{0} & \bm{0} & \cdots & \bm{k}_s
    \end{array}\right] \in \mathbb{R}^{s \times (n_0 - s)}, \\
    \Lambda &= 
    \left[\begin{array}{cc}
        \mathbf{c}_{s \times s} & \mathbf{0}_{s \times (n_0 - s)} \\
        \mathbf{0}_{(n_0 - s) \times s} & \mathbf{0}_{(n_0 - s) \times (n_0 - s)}
    \end{array}\right] \in \mathbb{R}^{n_0 \times n_0}, 
\end{aligned}
\end{equation*}
where $\mathcal{K}$ is a block diagonal matrix composed of vectors $\bm{k}_i = [k_{i, 1}, \dots, k_{i, r_i-1}], i \in \{1, \dots, s\}$, $\mathbf{c}_{s \times s}$ is a diagonal matrix composed of $c_i, i \in \{1, \dots, s\}$, $\mathbf{0}_{s \times (n_0 - s)}$ is a zero matrix of size $s \times (n_0 - s)$, and $\mathbf{I}_{s \times s}$ is an identity matrix of size $s \times s$.
In addition, the CLF constraint~\eqref{eq:cons_clf1} can be derived for the state-feedback-based CLF~\cite{wang2023learning}.

To achieve navigation for the acceleration-controlled unicycle model~\eqref{eq:affine_robot_model}, we design four distinct CLFs, each serving a specific purpose:
First, we design a valid state-feedback-based CLF to reduce the distance between the robot's current position and its target position as:
\begin{equation}
\begin{aligned}
    V_\text{d}(\bm{x}) = & c_1[x_\text{c} - x_\text{g} + k_1 (v\cos\theta - l\sin\theta\omega)]^2 +  \\
    & c_2[y_\text{c} - y_\text{g} + k_2 (v\sin\theta + l\cos\theta\omega)]^2,
    % + & c_3(\theta - \theta_\text{g} + k_3\omega)^2
    \label{eq:v_distance}
\end{aligned}
\end{equation}
where $x_c$ and $y_c$ denote the center position of the robot, as defined in~\eqref{eq:transform_relationship}.
This CLF allows the controller to effectively manage both linear and angular accelerations, guiding the robot to its destination.
\begin{remark}
If $(x_\text{p}, y_\text{p})$ represents the robot's center position instead of the coordinates of the rear axle axis, the state-feedback-based CLF becomes $V_\text{d}(\bm{x}) = c_1(x_\text{p} - x_\text{g} + k_1 v\cos\theta)^2 + c_2(y_\text{p} - y_\text{g} + k_2 v\sin\theta)^2$.
Additionally, $L_g V_\text{d}(\bm{x}) = [*, 0]$ in this case, meaning the controller's performance is limited since it cannot control the angular acceleration $\alpha$.
To address this limitation, we propose controlling $(x_\text{p}, y_\text{p})$ directly instead of $(x_\text{c}, y_\text{c})$.
\end{remark}

Additionally, to ensure the robot always moves toward its destination, we design another state-feedback-based CLF for the robot's orientation. 
Since the relative degree of $\theta - \arctan(\frac{y_\text{g} - y_\text{c}}{x_\text{g} - x_\text{c}})$ is 2, the CLF is expressed as:
\begin{equation}
    V_\theta(\bm{x}) = (\theta - \arctan(\frac{y_\text{g} - y_\text{c}}{x_\text{g} - x_\text{c}}) + k_\theta q_\theta)^2,
    \label{eq:v_theta}
\end{equation}
where $q_\theta$ is the $1^\text{st}$ derivative of $\theta - \arctan(\frac{y_\text{g} - y_\text{c}}{x_\text{g} - x_\text{c}})$.
With this design, the relative degree of $V_\theta(\bm{x})$ is 1, ensuring control inputs explicitly appear in the constraints of $V_\theta(\bm{x})$.

Besides, we design a CLF that enables the robot to adjust its velocity to approach its desired velocity.
When far from the target, the robot moves faster to reduce travel time; as it nears the target, it slows down. 
The CLF is defined as:
\begin{equation}
     V_v(\bm{x}) = (v - v_\text{d})^2,
     \label{eq:v_v}
\end{equation}
where the desired velocity $v_\text{d}$ is proportional to the distance between the robot's current position and its target.
Moreover, to ensure smooth motion, we design a CLF as:
\begin{equation}
    V_\omega(\bm{x}) = \omega^2.
    \label{eq:v_omega}
\end{equation}
Since $L_g V_v(\bm{x})=[2(v - v_\text{d}), 0]$ and $L_g  V_\omega(\bm{x})=[0, 2\omega]$, both CLFs are valid without requiring state-feedback-based designs.

In conclusion, we design four CLFs $V_\text{d}(\bm{x})$, $V_\theta(\bm{x})$, $V_v(\bm{x})$, and $V_\omega(\bm{x})$ to achieve navigation.
Among these, $V_\text{d}(\bm{x})$ and $V_\theta(\bm{x})$ serve as the primary CLFs to guide the robot to its destination, while $V_v(\bm{x})$ and $V_\omega(\bm{x})$ play auxiliary roles.
This prioritization is managed by adjusting the weight factors $p$ of the relaxation variables in~\eqref{eq:optimal_problem1} for each CLF constraint.
Specifically, a small weight factor increases the level of relaxation, making the CLF auxiliary.
A large weight factor reduces relaxation, giving the CLF a primary role.

\subsection{Design of Control Barrier Functions}
\label{sec:design_cbf}
In this section, we mainly demonstrate how to construct VOCBFs for the acceleration-controlled unicycle model~\eqref{eq:affine_robot_model} to achieve dynamic collision avoidance.
Consider a CBF constructed based on the Euclidean distance between robot $\text{R}_i$ and obstacle $\text{O}_j$, defined as:
\begin{equation}
    h(\bm{x}) = \| \bm{p}_{\text{R}_i} - \bm{p}_{\text{O}_j} \|^2 - (r_{\text{R}_i} + r_{\text{O}_j})^2.
    \label{eq:hocbf}
\end{equation}
Since the relative degree of~\eqref{eq:hocbf} is 2 w.r.t the robot's dynamics~\eqref{eq:affine_robot_model}, HOCBF is necessary to be introduced to generate valid collision avoidance constraints with the safety guarantee that~\eqref{eq:hocbf} is always non-negative~\cite{xiao2019control}.
However, HOCBF only provides a subset of the original safe set for safety guarantees, since it requires all but the final derivative of the nominal CBF, where the control inputs explicitly appear, to be non-negative, which can lead to the optimization problem being conservative or even infeasible in certain cases~\cite{thontepu2022control}.
Additionally, the calculation of proper constraint candidates of HOCBF with appropriate penalty weights and parameters is computationally expensive~\cite{xiao2022high}.
To overcome these limitations and design a valid CBF with a relative degree of 1, we propose constructing VOCBFs directly in the velocity space, eliminating the need for HOCBFs.

\begin{figure} 
    \centering
    \includegraphics[width=0.7\linewidth]{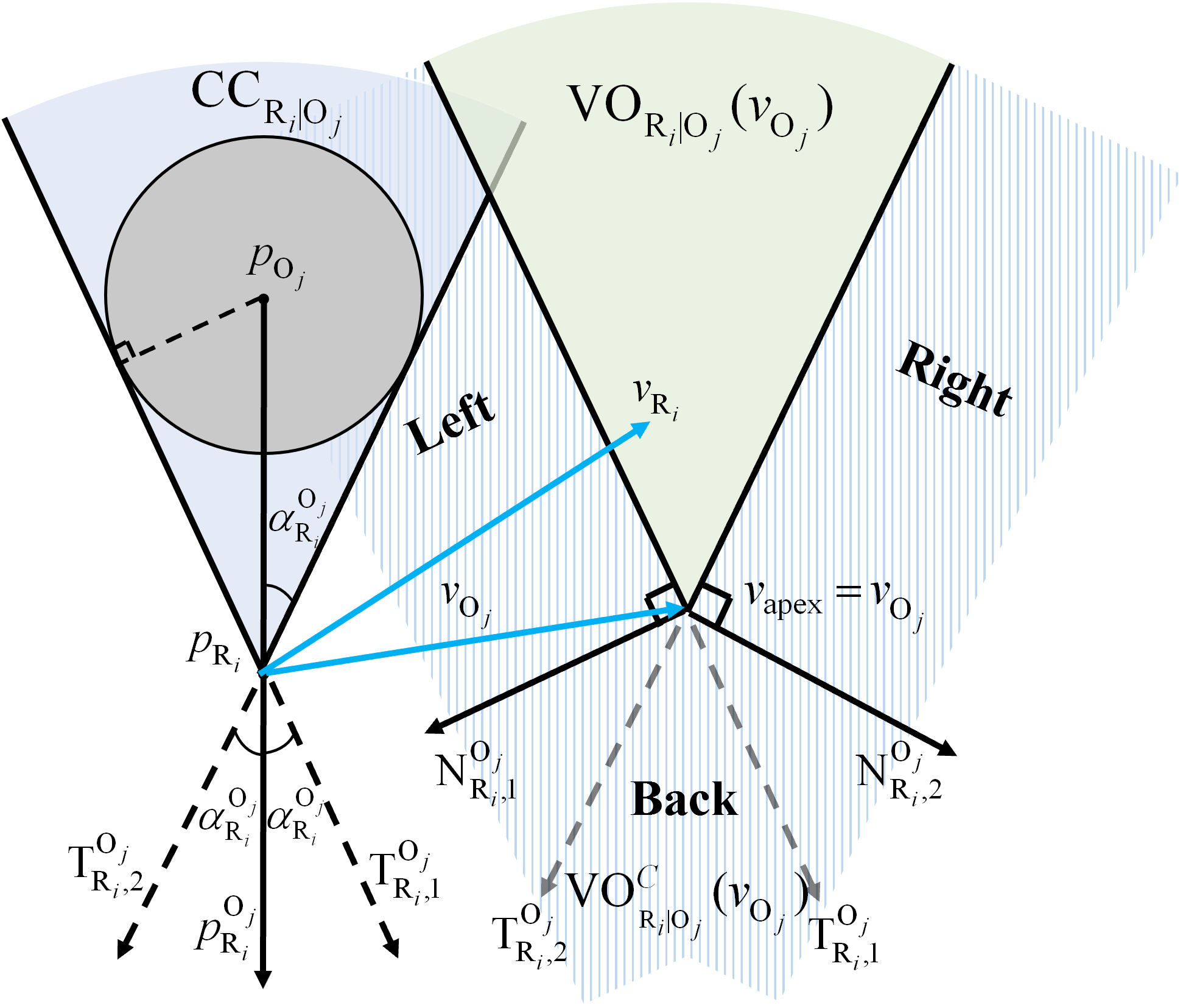}
    \caption{The safe range of the robot's velocity is denoted by $\text{VO}_{\text{R}_i|\text{O}_j}^{\text{C}}(\bm{v}_{\text{O}_j})$, and it can be represented by the disjunction of two linear constraints, where each linear constraint requires the robot's velocity is within a half-space. Each half-space corresponds to a direction in which the robot can navigate around the obstacle, and satisfying both leads to backward avoidance.}
    \label{fig:vocbf_explain}
\end{figure}
As mentioned in Section~\ref{sec:vo}, the VO of $\text{R}_i$ induced by $\text{O}_j$ is denoted as $\text{VO}_{\text{R}_i|\text{O}_j}(\bm{v}_{\text{O}_j})$.
Under the assumption that both $\text{R}_i$ and $\text{O}_j$ maintain their current velocities, $\text{R}_i$ will collide with $\text{O}_j$ at some future moment if $\bm{v}_{\text{R}_i} \in \text{VO}_{\text{R}_i|\text{O}_j}(\bm{v}_{\text{O}_j})$.
Conversely, collision avoidance is guaranteed if $\bm{v}_{\text{R}_i} \notin \text{VO}_{\text{R}_i|\text{O}_j}(\bm{v}_{\text{O}_j})$, i.e., $\bm{v}_{\text{R}_i} \in \text{VO}_{\text{R}_i|\text{O}_j}^{\text{C}}(\bm{v}_{\text{O}_j})$.
Here, $\text{VO}_{\text{R}_i|\text{O}_j}^{\text{C}}(\bm{v}_{\text{O}_j})$ is the complement of $\text{VO}_{\text{R}_i|\text{O}_j}(\bm{v}_{\text{O}_j})$ and represents the feasible velocity region for $\bm{v}_{\text{R}_i}$, as shown in the blue shaded area of Fig.~\ref{fig:vocbf_explain}.
Let $\bm{p}_{\text{R}_i}^{\text{O}_j}:= \bm{p}_{\text{R}_i} - \bm{p}_{\text{O}_j}$ and $\bm{v}_{\text{R}_i}^{\text{O}_j}:= \bm{v}_{\text{R}_i} - \bm{v}_{\text{O}_j}$ represent the relative position and velocity between $\text{R}_i$ and $\text{O}_j$, respectively.
The angle between $\bm{p}_{\text{R}_i}^{\text{O}_j}$ and the boundary line of $\text{CC}_{\text{R}_i|\text{O}_j}$ is defined as $\alpha_{\text{R}_i}^{\text{O}_j}$, where $\alpha_{\text{R}_i}^{\text{O}_j} \in (0, \frac{\pi}{2})$, and $\sin\alpha_{\text{R}_i}^{\text{O}_j} = 
\frac{r_{\text{R}_i} + r_{\text{O}_j}}{\Vert \bm{p}_{\text{R}_i}^{\text{O}_j} \Vert}$, as shown Fig.~\ref{fig:vocbf_explain}.
The two tangent directional vectors of $\text{CC}_{\text{R}_i|\text{O}_j}$, denoted as $\bm{T}_{\text{R}_i, 1}^{\text{O}_j}$ and $\bm{T}_{\text{R}_i, 2}^{\text{O}_j}$, can be obtained by rotating the vector $\bm{p}_{\text{R}_i}^{\text{O}_j}$ counterclockwise and clockwise by the angle $\alpha_{\text{R}_i}^{\text{O}_j}$, respectively
\begin{equation}
    \bm{T}_{\text{R}_i, 1}^{\text{O}_j} = \bm{R}( \alpha_{\text{R}_i}^{\text{O}_j}) \bm{p}_{\text{R}_i}^{\text{O}_j}, \;
    \bm{T}_{\text{R}_i, 2}^{\text{O}_j} = \bm{R}(-\alpha_{\text{R}_i}^{\text{O}_j}) \bm{p}_{\text{R}_i}^{\text{O}_j},
    \label{eq:tangent_vecs}
\end{equation}
where 
\begin{equation*}
    \bm{R}(\theta) = \left[\begin{array}{cc}
    \cos\theta & -\sin\theta \\
    \sin\theta & \cos\theta
    \end{array}\right]
\end{equation*}
represents the rotation matrix that rotates a vector counterclockwise by the angle $\theta$.
Additionally, two outer normal vectors $\bm{N}_{\text{R}_i, 1}^{\text{O}_j}$ and $\bm{N}_{\text{R}_i, 2}^{\text{O}_j}$ can be obtained by rotating $\bm{T}_{\text{R}_i, 1}^{\text{O}_j}$ clockwise with the angle $\frac{\pi}{2}$ and rotating $\bm{T}_{\text{R}_i, 2}^{\text{O}_j}$ counterclockwise with the angle $\frac{\pi}{2}$ as:
\begin{equation}
    \bm{N}_{\text{R}_i, 1}^{\text{O}_j} = \bm{R}(-\frac{\pi}{2}) 
    \bm{T}_{\text{R}_i, 1}^{\text{O}_j}, \;
    \bm{N}_{\text{R}_i, 2}^{\text{O}_j} = \bm{R}( \frac{\pi}{2}) \bm{T}_{\text{R}_i, 2}^{\text{O}_j}.
    \label{eq:normal_vecs}
\end{equation}
The outer normal vectors and the tangent directional vectors of $\text{VO}_{\text{R}_i|\text{O}_j}(\bm{v}_{\text{O}_j})$ are identical to those of $\text{CC}_{\text{R}_i|\text{O}_j}$, since $\text{VO}_{\text{R}_i|\text{O}_j}(\bm{v}_{\text{O}_j})$ is translated from $\text{CC}_{\text{R}_i|\text{O}_j}$.
Given that collision avoidance between $\text{R}_i$ and $\text{O}_j$ is ensured if $\bm{v}_{\text{R}_i} \notin \text{VO}_{\text{R}_i|\text{O}_j}(\bm{v}_{\text{O}_j})$, i.e., $\bm{v}_{\text{R}_i} \in \text{VO}_{\text{R}_i|\text{O}_j}^{\text{C}}(\bm{v}_{\text{O}_j})$.
This condition can be represented by the disjunction of two linear constraints as:
\begin{equation}
    \bm{v}_{\text{R}_i} \in \text{VO}_{\text{R}_i|\text{O}_j}^{\text{C}}(\bm{v}_{\text{O}_j}) \iff \bigcup_{k \in \{1, 2\}} (\bm{N}_{\text{R}_i,k}^{\text{O}_j})^\top \bm{v}_{\text{R}_i} \geq c_{\text{R}_i, k}^{\text{O}_j},
    \label{eq:vo_cons1}
\end{equation}
where $c_{\text{R}_i, k}^{\text{O}_j}, k \in \{1, 2\}$ is the scalar corresponding to the $k^{\text{th}}$ linear constraint of $\text{VO}_{\text{R}_i|\text{O}_j}^{\text{C}}(\bm{v}_{\text{O}_j})$, given by
\begin{equation*}
    c_{\text{R}_i,1}^{\text{O}_j} = (\bm{N}_{\text{R}_i, 1}^{\text{O}_j})^\top \bm{v}_{\text{O}_j}, \;
    c_{\text{R}_i,2}^{\text{O}_j} = (\bm{N}_{\text{R}_i, 2}^{\text{O}_j})^\top \bm{v}_{\text{O}_j}.
\end{equation*}
As indicated by~\eqref{eq:vo_cons1}, $\text{VO}_{\text{R}_i|\text{O}_j}^{\text{C}}(\bm{v}_{\text{O}_j})$ is represented by the combination of two half-spaces.
In fact, each half-space corresponds to a direction in which the robot can navigate around the obstacle. 
If $\bm{v}_{\text{R}_i}$ only satisfies $(\bm{N}_{\text{R}_i, 1}^{\text{O}_j})^\top \bm{v}_{\text{R}_i} \geq c_{\text{R}_i, 1}^{\text{O}_j}$, then the robot will navigate around the obstacle from its left side.
Here, `left' specifically refers to the left side relative to the robot's local coordinate frame, as shown in Fig.~\ref{fig:vocbf_explain}.
Conversely, if $\bm{v}_{\text{R}_i}$ only satisfies $(\bm{N}_{\text{R}_i, 2}^{\text{O}_j})^\top \bm{v}_{\text{R}_i} \geq c_{\text{R}_i, 2}^{\text{O}_j}$, then the robot will pass the obstacle from the right side. 
If $\bm{v}_{\text{R}_i}$ satisfies both $(\bm{N}_{\text{R}_i, 1}^{\text{O}_j})^\top \bm{v}_{\text{R}_i} \geq c_{\text{R}_i, 1}^{\text{O}_j}$ and $(\bm{N}_{\text{R}_i, 2}^{\text{O}_j})^\top \bm{v}_{\text{R}_i} \geq c_{\text{R}_i, 2}^{\text{O}_j}$, then the robot will move backward to avoid the obstacle.
Furthermore, \eqref{eq:vo_cons1} can be reformulated using the relative velocity $\bm{v}_{\text{R}_i}^{\text{O}_j}$ as
\begin{equation}
    \bigcup_{k \in \{1, 2\}} (\bm{N}_{\text{R}_i,k}^{\text{O}_j})^\top \bm{v}_{\text{R}_i}^{\text{O}_j} \geq 0.
    \label{eq:vo_cons2}
\end{equation}
If the robot's velocity satisfies~\eqref{eq:vo_cons2}, then collision avoidance is guaranteed between $\text{R}_i$ and $\text{O}_j$.
We construct two VOCBFs based on these two linear constraints~\eqref{eq:vo_cons2} as
\begin{gather}
    h_{\text{R}_i, 1}^{\text{O}_j}(\bm{x}) = (\bm{N}_{\text{R}_i, 1}^{\text{O}_j})^\top \bm{v}_{\text{R}_i}^{\text{O}_j} = (\bm{v}_{\text{R}_i}^{\text{O}_j})^\top \bm{N}_{\text{R}_i, 1}^{\text{O}_j},  
    \label{eq:vocbf1} \\ 
    h_{\text{R}_i, 2}^{\text{O}_j}(\bm{x}) = (\bm{N}_{\text{R}_i, 2}^{\text{O}_j})^\top \bm{v}_{\text{R}_i}^{\text{O}_j} = (\bm{v}_{\text{R}_i}^{\text{O}_j})^\top \bm{N}_{\text{R}_i, 2}^{\text{O}_j}.
    \label{eq:vocbf2}
\end{gather}
The relative position and velocity between $\text{R}_i$ and $\text{O}_j$ are computed as
\begin{equation*}
\begin{aligned}
    \bm{p}_{\text{R}_i}^{\text{O}_j} &=  
    \left[\begin{array}{c}
        x_{\text{p},i} + l_i\cos\theta_i - x_{\text{o}, j} \\
        y_{\text{p},i} + l_i\sin\theta_i - y_{\text{o}, j}
    \end{array}\right] = 
    \left[\begin{array}{c}
        \Delta x_i^j \\
        \Delta y_i^j
    \end{array}\right], \\
    \bm{v}_{\text{R}_i}^{\text{O}_j} &= 
    \frac{d \bm{p}_{\text{R}_i}^{\text{O}_j}}{dt} = 
    \left[\begin{array}{c}
        v_i\cos\theta_i - l\sin\theta_i \omega_i - v_{\text{ox}, j} \\
        v_i\sin\theta_i + l\cos\theta_i \omega_i - v_{\text{oy}, j}
    \end{array}\right],
\end{aligned}
\end{equation*}
where center positions of $\text{R}_i$ and $\text{O}_j$ are given by $\bm{p}_{\text{R}_i} = [x_{\text{p}, i} + l_i\cos\theta_i, y_{\text{p}, i} + l_i\sin\theta_i]^\top$ and $\bm{p}_{\text{O}_j} = [x_{\text{o}, j}, y_{\text{o}, j}]^\top$, respectively.
Using~\eqref{eq:tangent_vecs} and~\eqref{eq:normal_vecs}, the VOCBFs~\eqref{eq:vocbf1} and~\eqref{eq:vocbf2} can be expressed as
\begin{align}
    h_{\text{R}_i, 1}^{\text{O}_j}(\bm{x}) &= (\bm{v}_{\text{R}_i}^{\text{O}_j})^\top 
    \bm{R}(\alpha_{\text{R}_i}^{\text{O}_j} - \frac{\pi}{2}) \bm{p}_{\text{R}_i}^{\text{O}_j},
    \label{eq:cbf_form1} \\
    % (\bm{v}_{\text{R}_i}^{\text{O}_j})^\top \bm{R}(-\frac{\pi}{2})\bm{R}(\alpha_{\text{R}_i}^{\text{O}_j}) \bm{p}_{\text{R}_i}^{\text{O}_j} 
    % \left[\begin{array}{cc} 
    % \sin \alpha_{\text{R}_i}^{\text{O}_j} & \cos \alpha_{\text{R}_i}^{\text{O}_j} \\
    % -\cos \alpha_{\text{R}_i}^{\text{O}_j} & \sin \alpha_{\text{R}_i}^{\text{O}_j}
    % \end{array}\right]
    h_{\text{R}_i, 2}^{\text{O}_j}(\bm{x}) &= (\bm{v}_{\text{R}_i}^{\text{O}_j})^\top
    \bm{R}(\frac{\pi}{2} - \alpha_{\text{R}_i}^{\text{O}_j}) \bm{p}_{\text{R}_i}^{\text{O}_j}.
    \label{eq:cbf_form2}
    % (\bm{v}_{\text{R}_i}^{\text{O}_j})^\top \bm{R}(\frac{\pi}{2})\bm{R}(-\alpha_{\text{R}_i}^{\text{O}_j}) \bm{p}_{\text{R}_i}^{\text{O}_j} \\ 
    % \left[\begin{array}{cc}
    % \sin \alpha_{\text{R}_i}^{\text{O}_j} & -\cos \alpha_{\text{R}_i}^{\text{O}_j} \\
    % \cos \alpha_{\text{R}_i}^{\text{O}_j} & \sin \alpha_{\text{R}_i}^{\text{O}_j}
    % \end{array}\right]
\end{align}
It is important to note that for collision avoidance, at least one of these two VOCBFs~\eqref{eq:vocbf1} and~\eqref{eq:vocbf2} must be non-negative.
Therefore, at least one of the two corresponding CBF constraints~\eqref{eq:cons_cbf1} must be satisfied.
In fact, satisfying only one of these two constraints is sufficient in most case, meaning the robot can simply choose a direction to navigate around the obstacle.

\begin{assumption}
\label{ass1}
$\alpha_{\text{R}_i}^{\text{O}_j} \neq \frac{\pi}{2}$, i.e., $\cos \alpha_{\text{R}_i}^{\text{O}_j} \neq 0$.
\end{assumption}
\begin{assumption}
\label{ass2}
$\cos\theta_i \neq 0$.
\end{assumption}
\begin{remark}
Assumption~\ref{ass1} may become invalid when the distance $\Vert \bm{p}_{\text{R}_i}^{\text{O}_j} \Vert$ between the robot and the obstacle equals $r_{\text{R}_i} + r_{\text{O}_j}$.
Similarly, Assumption~\ref{ass2} could be invalid when the robot's orientation equals $\frac{\pi}{2}$ or $-\frac{\pi}{2}$.
Without loss of generality, we assume both Assumptions 1 and 2 hold true.
\end{remark}
\begin{theorem}
Given the acceleration-controlled model~\eqref{eq:affine_robot_model}, the proposed VOCBF candidates are valid CBFs under Assumption~\ref{ass1} and~\ref{ass2}, i.e., $\forall \bm{x} \in \mathcal{D}, L_g h_{\text{R}_i, 1}^{\text{O}_j}(\bm{x}) \neq \mathbf{0}, L_g h_{\text{R}_i, 2}^{\text{O}_j}(\bm{x}) \neq \mathbf{0}$.
\end{theorem}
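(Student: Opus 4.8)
The plan is to show that the control matrix $g(\bm x)$ in \eqref{eq:affine_robot_model} picks out exactly the partial derivatives of each VOCBF with respect to the velocities $v$ and $\omega$, and then to prove that these two partials cannot vanish simultaneously. Since $g$ has nonzero entries only in the rows corresponding to $v$ and $\omega$, for either candidate $h \in \{h_{\text{R}_i,1}^{\text{O}_j},\, h_{\text{R}_i,2}^{\text{O}_j}\}$ we have $L_g h(\bm x) = \big[\,\tfrac{\partial h}{\partial v},\ \tfrac{\partial h}{\partial \omega}\,\big]$. The first fact I would record is that, in the bilinear form $h = (\bm v_{\text{R}_i}^{\text{O}_j})^\top \bm R(\cdot)\,\bm p_{\text{R}_i}^{\text{O}_j}$, neither the relative position $\bm p_{\text{R}_i}^{\text{O}_j}$ nor the half-angle $\alpha_{\text{R}_i}^{\text{O}_j}$ (hence nor the rotation matrix $\bm R(\pm(\tfrac{\pi}{2}-\alpha_{\text{R}_i}^{\text{O}_j}))$) depends on $v$ or $\omega$; only $\bm v_{\text{R}_i}^{\text{O}_j}$ does. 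Assumption~\ref{ass1} guarantees that $\alpha_{\text{R}_i}^{\text{O}_j}\in(0,\tfrac{\pi}{2})$ is well defined and, equivalently, that $\Vert \bm p_{\text{R}_i}^{\text{O}_j}\Vert > r_{\text{R}_i}+r_{\text{O}_j} > 0$, so the rotation matrix is genuine and $\bm p_{\text{R}_i}^{\text{O}_j}\neq \bm 0$.

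Next I would differentiate the relative velocity. From the displayed expression for $\bm v_{\text{R}_i}^{\text{O}_j}$,
\[
\frac{\partial \bm v_{\text{R}_i}^{\text{O}_j}}{\partial v} = \begin{bmatrix}\cos\theta\\ \sin\theta\end{bmatrix},\qquad \frac{\partial \bm v_{\text{R}_i}^{\text{O}_j}}{\partial \omega} = l\begin{bmatrix}-\sin\theta\\ \cos\theta\end{bmatrix},
\]
so, writing $\bm e_1 = [\cos\theta,\sin\theta]^\top$, $\bm e_2=[-\sin\theta,\cos\theta]^\top$ and $\bm q = \bm R(\cdot)\,\bm p_{\text{R}_i}^{\text{O}_j}$, the two entries of $L_g h$ become $\tfrac{\partial h}{\partial v}=\bm e_1^\top \bm q$ and $\tfrac{\partial h}{\partial \omega}=l\,\bm e_2^\top \bm q$. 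The decisive structural observation is that $\{\bm e_1,\bm e_2\}$ is an orthonormal basis of $\mathbb R^2$.

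The core of the argument is then immediate: $L_g h = \bm 0$ would force $\bm e_1^\top \bm q = 0$ and $\bm e_2^\top \bm q = 0$ (using $l>0$), i.e.\ $\bm q$ is orthogonal to a full basis, so $\bm q = \bm 0$; but $\bm q = \bm R(\cdot)\,\bm p_{\text{R}_i}^{\text{O}_j}$ with $\bm R(\cdot)$ invertible then forces $\bm p_{\text{R}_i}^{\text{O}_j}=\bm 0$, contradicting $\Vert\bm p_{\text{R}_i}^{\text{O}_j}\Vert>0$ from Assumption~\ref{ass1}. Equivalently, carrying out the rotation algebra collapses the two entries to $L_g h_{\text{R}_i,1}^{\text{O}_j} = \big[\,d\sin\psi,\ -l\,d\cos\psi\,\big]$ with $d=\Vert\bm p_{\text{R}_i}^{\text{O}_j}\Vert$ and a single angle $\psi$, whose squared norm $d^2(\sin^2\psi + l^2\cos^2\psi)$ is strictly positive; the second candidate $h_{\text{R}_i,2}^{\text{O}_j}$ is handled identically because $\bm R(\tfrac{\pi}{2}-\alpha_{\text{R}_i}^{\text{O}_j})$ is the inverse rotation.

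I expect the main obstacle to be purely computational: correctly reducing the bilinear/rotation expressions for $\tfrac{\partial h}{\partial v}$ and $\tfrac{\partial h}{\partial \omega}$ to the orthonormal-basis form, and being careful that the VO half-angle $\alpha_{\text{R}_i}^{\text{O}_j}$ is not confused with the angular acceleration $\alpha$. I also anticipate that a brute-force, component-wise route to ``$L_g h\neq \bm 0$'' is where Assumption~\ref{ass2} ($\cos\theta\neq 0$) would be invoked to keep a particular factor nonzero; the basis argument above suggests that, beyond the well-definedness supplied by Assumption~\ref{ass1}, Assumption~\ref{ass2} should enter only as a convenience of the explicit calculation rather than as an essential hypothesis.
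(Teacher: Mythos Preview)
Your proposal is correct and structurally parallels the paper's proof: both isolate $L_g h$ as the pair of inner products $\langle \bm R(\cdot)\bm p_{\text{R}_i}^{\text{O}_j},\,[\cos\theta,\sin\theta]^\top\rangle$ and $\langle \bm R(\cdot)\bm p_{\text{R}_i}^{\text{O}_j},\,l[-\sin\theta,\cos\theta]^\top\rangle$ and then argue by contradiction that these cannot vanish together. The difference is in how the contradiction is closed. The paper runs a two-scenario case split: first supposing $\bm R(\cdot)\bm p_{\text{R}_i}^{\text{O}_j}=\bm 0$ and using Assumption~\ref{ass1} componentwise to force $\Delta x_i^j=\Delta y_i^j=0$, then supposing simultaneous orthogonality to $[\cos\theta,\sin\theta]^\top$ and $[-l\sin\theta,l\cos\theta]^\top$ and using Assumption~\ref{ass2} componentwise to force $s_1=s_2=0$, which reduces to the first scenario. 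Your orthonormal-basis observation collapses both scenarios into a single step: $\{\bm e_1,\bm e_2\}$ spans $\mathbb R^2$, so simultaneous orthogonality already forces $\bm q=\bm 0$, and invertibility of any planar rotation then gives $\bm p_{\text{R}_i}^{\text{O}_j}=\bm 0$. This is more economical and, as you correctly anticipate, shows that Assumption~\ref{ass2} (and even the componentwise invocation of Assumption~\ref{ass1} in Scenario~1) is not actually needed for the nonvanishing of $L_g h$; only the well-definedness of the VO cone, equivalently $\bm p_{\text{R}_i}^{\text{O}_j}\neq\bm 0$, is essential.
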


\begin{proof}
We demonstrate the validity of the VOCBF constraints using~\eqref{eq:cbf_form1} as an example, as the proof of~\eqref{eq:cbf_form2} follows a similar approach.
The time derivative of $h_{\text{R}_i, 1}^{\text{O}_j}(\bm{x})$ is given by
\begin{align}
    \frac{d h_{\text{R}_i, 1}^{\text{O}_j}(\bm{x})}{dt} &= (\frac{d \bm{v}_{\text{R}_i}^{\text{O}_j}}{dt})^\top 
    \bm{R}(\alpha_{\text{R}_i}^{\text{O}_j} - \frac{\pi}{2}) \bm{p}_{\text{R}_i}^{\text{O}_j} \label{eq:hdot1} \\
    &+ (\bm{v}_{\text{R}_i}^{\text{O}_j})^\top 
    \frac{d \bm{R}(\alpha_{\text{R}_i}^{\text{O}_j} - \frac{\pi}{2})}{dt} \bm{p}_{\text{R}_i}^{\text{O}_j} \label{eq:hdot2} \\
    &+ (\bm{v}_{\text{R}_i}^{\text{O}_j})^\top 
    \bm{R}(\alpha_{\text{R}_i}^{\text{O}_j} - \frac{\pi}{2}) \frac{d \bm{p}_{\text{R}_i}^{\text{O}_j}}{dt}, \label{eq:hdot3}
\end{align}
where the derivative can also be expressed as:
\begin{equation}
    \begin{aligned}
    \frac{d h_{\text{R}_i, 1}^{\text{O}_j}(\bm{x})}{dt} &= L_f h_{\text{R}_i, 1}^{\text{O}_j}(\bm{x}) + L_g h_{\text{R}_i, 1}^{\text{O}_j}(\bm{x}) \bm{u} \\
    &+ \frac{d h_{\text{R}_i, 1}^{\text{O}_j}(\bm{x})}{d \bm{x}_{\text{O}_j}} \frac{d \bm{x}_{\text{O}_j}}{dt},
\end{aligned}
\end{equation}
where we consider time-varying CBFs~\cite{huang2023obs} to handle dynamic obstacles.
Only the term~\eqref{eq:hdot1} is related to the control inputs $a$ and $\alpha$, while other terms~\eqref{eq:hdot2} and~\eqref{eq:hdot3} do not contain any terms related to $a$ and $\alpha$.
Extracting the term related to $\bm{u}=[a, \alpha]^{\top}$ from~\eqref{eq:hdot1}, we derive
\begin{equation*}
    L_g h_{\text{R}_i, 1}^{\text{O}_j}(\bm{x}) = \left[\begin{array}{c}
    \langle
    \bm{R}(\alpha_{\text{R}_i}^{\text{O}_j} -\frac{\pi}{2}) 
    \bm{p}_{\text{R}_i}^{\text{O}_j},
    \left[\begin{array}{c}
        \cos\theta_i \\
        \sin\theta_i
    \end{array}\right]
    \rangle \\
    \langle
    \bm{R}(\alpha_{\text{R}_i}^{\text{O}_j} - \frac{\pi}{2})
    \bm{p}_{\text{R}_i}^{\text{O}_j},
    \left[\begin{array}{c}
        -l\sin\theta_i \\
        l\cos\theta_i
    \end{array}\right]
    \rangle
    \end{array}\right]^\top,
\end{equation*}
where $\langle \bm{a}, \bm{b} \rangle$ represents the inner product between $\bm{a}$ and $\bm{b}$.

To prove that $L_g h_{\text{R}_i, 1}^{\text{O}_j}(\bm{x}) \neq \mathbf{0}, \forall \bm{x} \in \mathcal{D}$, we proceed by contradiction.
Suppose $L_g h_{\text{R}_i, 1}^{\text{O}_j}(\bm{x}) = \mathbf{0}$, this leads to the following possible scenarios:
\begin{itemize}
    \item \textbf{Scenario 1:} $\bm{R}(\alpha_{\text{R}_i}^{\text{O}_j} -\frac{\pi}{2}) \bm{p}_{\text{R}_i}^{\text{O}_j} = \bm{0}$, which implies $L_g h_{\text{R}_i, 1}^{\text{O}_j}(\bm{x}) \equiv \bm{0}$.
    This further implies $\sin\alpha_{\text{R}_i}^{\text{O}_j} \Delta x_i^j + \cos\alpha_{\text{R}_i}^{\text{O}_j} \Delta y_i^j = 0$ and $-\cos\alpha_{\text{R}_i}^{\text{O}_j} \Delta x_i^j + \sin\alpha_{\text{R}_i}^{\text{O}_j} \Delta y_i^j = 0$.
    Under Assumption~\ref{ass1}, this results in $\Delta x_i^j = \Delta y_i^j = 0$, implying that the centers of $\text{R}_i$ and $\text{O}_j$ coincide.
    This is impossible, as it wound mean the robot is already inside the obstacle.
    \item \textbf{Scenario 2:} $\bm{R}(\alpha_{\text{R}_i}^{\text{O}_j} -\frac{\pi}{2})
        \bm{p}_{\text{R}_i}^{\text{O}_j} \bot
        \left[\begin{array}{c}
            \cos\theta_i \\
            \sin\theta_i
        \end{array}\right]$ and $\bm{R}(\alpha_{\text{R}_i}^{\text{O}_j}-\frac{\pi}{2})
        \bm{p}_{\text{R}_i}^{\text{O}_j} \bot
        \left[\begin{array}{c}
            -l\sin\theta_i \\
            l\cos\theta_i
        \end{array}\right]$.
        Let $\bm{R}(\alpha_{\text{R}_i}^{\text{O}_j} -\frac{\pi}{2}) \bm{p}_{\text{R}_i}^{\text{O}_j} = [s_1, s_2]^\top$.
        Then $s_1 \cos\theta_i + s_2 \sin\theta_i = 0$ and $s_1 \sin\theta_i = -s_2 \cos\theta_i$.
        Under Assumption~\ref{ass2}, we derive $s_1 = s_2 = 0$, returning to Scenario 1, which is also impossible.
\end{itemize}
Thus, $L_g h_{\text{R}_i, 1}^{\text{O}_j}(\bm{x}) \neq \mathbf{0}$, indicating that the constraint of $h_{\text{R}_i, 1}^{\text{O}_j}(\bm{x})$ is always valid.
Similarly, for~\eqref{eq:cbf_form2}, we derive
\begin{equation*}
    L_g h_{\text{R}_i, 2}^{\text{O}_j}(\bm{x}) = \left[\begin{array}{c}
    \langle
    \bm{R}(\frac{\pi}{2} - \alpha_{\text{R}_i}^{\text{O}_j})
    \bm{p}_{\text{R}_i}^{\text{O}_j},
    \left[\begin{array}{c}
        \cos\theta_i \\
        \sin\theta_i
    \end{array}\right]
    \rangle \\
    \langle
    \bm{R}(\frac{\pi}{2} - \alpha_{\text{R}_i}^{\text{O}_j})
    \bm{p}_{\text{R}_i}^{\text{O}_j},
    \left[\begin{array}{c}
        -l\sin\theta_i \\
        l\cos\theta_i
    \end{array}\right]
    \rangle
    \end{array}\right]^\top.
\end{equation*}
The proof that $L_g h_{\text{R}_i, 2}^{\text{O}_j}(\bm{x}) \neq \mathbf{0}$ is analogous and is omitted for brevity.
\end{proof}

We have introduced how to construct VOCBFs based on $\text{VO}_{\text{R}_i|\text{O}_j}(\bm{v}_{\text{O}_j})$ to achieve collision avoidance between $\text{R}_i$ and $\text{O}_j$.
Similarly, to avoid collisions between robots $\text{R}_i$ and $\text{R}_j$, VOCBFs can also be constructed based on $\text{VO}_{\text{R}_i|\text{R}_j}(\bm{v}_{\text{R}_j})$.
However, since VO ignores the reactive nature between robots, which enables each robot to independently adjust its velocity to avoid collision with other robots and obstacles, using VO for navigation and collision avoidance in distributed multi-robot systems may bring unnecessary oscillations.
Some variants of VO like RVO~\cite{van2008reciprocal} and HRVO~\cite{snape2011hybrid} explicitly consider the reactive nature of robots, leading to improved performance without unnecessary oscillations.
We also consider integrating these variations into our methodology and find that the constraints of RVO-based~\cite{van2008reciprocal} CBFs are consistent with those of VOCBFs.
This consistency arises because velocity is treated as a state rather than a control input in the acceleration-controlled unicycle model~\eqref{eq:affine_robot_model}.
The proof is as follows:
The outer normal vectors $\bm{N}_{\text{R}_i, k}^{\text{R}_j}, k \in \{1, 2\}$ and the tangent directional vectors $\bm{T}_{\text{R}_i, k}^{\text{R}_j}, k \in \{1, 2\}$ of RVO are identical to those of VO, with $\text{O}_j$ replaced by $\text{R}_j$ because RVO is utilized among robots.
The primary difference between RVO and VO lies in the position of the cone apex.
For $\text{RVO}_{\text{R}_i|\text{R}_j}(\bm{v}_{\text{R}_i}, \bm{v}_{\text{R}_j})$, the apex is at $\frac{\bm{v}_{\text{R}_i} + \bm{v}_{\text{R}_j}}{2}$, whereas for $\text{VO}_{\text{R}_i|\text{R}_j}(\bm{v}_{\text{R}_j})$, the apex is at $\bm{v}_{\text{R}_j}$.
The RVO-based collision avoidance constraint is similar to~\eqref{eq:vo_cons1} and is expressed as:
\begin{equation}
    \bigcup_{k \in \{1, 2\}} (\bm{N}_{\text{R}_i,k}^{\text{R}_j})^\top \bm{v}_{\text{R}_i} \geq c_{\text{R}_i,k}^{\text{R}_j},
    \label{eq:rvo_cons_1}
\end{equation}
where the scalar corresponding to the $k^{\text{th}}$ linear constraint of $\text{RVO}_{\text{R}_i|\text{R}_j}(\bm{v}_{\text{R}_i}, \bm{v}_{\text{R}_j})$ is given by
\begin{equation*}
    c_{\text{R}_i,k}^{\text{R}_j} = (\bm{N}_{\text{R}_i, k}^{\text{R}_j})^\top \frac{\bm{v}_{\text{R}_i} + \bm{v}_{\text{R}_j}}{2}, \; k \in \{1, 2\}.
\end{equation*}
We can construct RVO-based CBFs as:
\begin{gather}
    h_{\text{R}_i, 1}^{\text{R}_j}(\bm{x}) = (\bm{N}_{\text{R}_i,1}^{\text{R}_j})^\top \bm{v}_{\text{R}_i} - c_{\text{R}_i,1}^{\text{R}_j} = \frac{1}{2} (\bm{N}_{\text{R}_i,1}^{\text{R}_j})^\top \bm{v}_{\text{R}_i}^{\text{R}_j},
    \label{eq:rvo_cbf1} \\ 
    h_{\text{R}_i, 2}^{\text{R}_j}(\bm{x}) = (\bm{N}_{\text{R}_i,2}^{\text{R}_j})^\top \bm{v}_{\text{R}_i} - c_{\text{R}_i,2}^{\text{R}_j} = \frac{1}{2} (\bm{N}_{\text{R}_i,2}^{\text{R}_j})^\top \bm{v}_{\text{R}_i}^{\text{R}_j}.
    \label{eq:rvo_cbf2}
\end{gather}
By substituting $\text{R}_j$ for $\text{O}_j$, the constraint of~\eqref{eq:rvo_cbf1} becomes equivalent to the constraint of~\eqref{eq:vocbf1}, and similarly for~\eqref{eq:rvo_cbf2}.
Thus, constructing CBFs based on either VO or RVO is consistent. 
The primary reason is that VO-based approaches select a new velocity outside VO to achieve collision avoidance, and in the acceleration-controlled unicycle model~\eqref{eq:affine_robot_model}, velocity is treated as a state rather than a control input.
The same consistency applies to HRVO, as its cone apex is determined by a combination of VO and RVO.

In conclusion, we have demonstrated how to construct VOCBFs in the velocity space to achieve collision avoidance between robots and obstacles. 
Additionally, we prove that the constraints of these VOCBFs are always valid.
By ensuring that at least one of the two VOCBF constraints is satisfied, safety is guaranteed. 
Finally, we prove that constructing CBFs based on either VO or RVO is consistent, as velocity in our robot dynamics~\eqref{eq:affine_robot_model} is treated as a state rather than a control input.
\subsection{Physical Constraints on States}
\label{sec:physical_cons}
The states and controls of the robot are constrained within specified ranges based on the robot's physical capabilities.
Constraints~\eqref{eq:cons_u1}, which address the physical limitations of control inputs, can be directly incorporated into the optimization problem~\eqref{eq:clf_cbf_qp_optimal_problem} because control inputs are the optimization variables.
However, in addition to these input constraints, the robot's velocities $v$ and $\omega$ must also be restricted to specified ranges, i.e., $v_\text{min} \leq v \leq v_\text{max}$ and $\omega_\text{min} \leq \omega \leq \omega_\text{max}$.
Since $v$ and $\omega$ are states, rather than control inputs of~\eqref{eq:affine_robot_model}, these state constraints cannot be directly included in the optimization problem.
To address this, we propose enforcing state constraints on $v$ and $\omega$ through CBFs.
We define four CBFs to enforce the bounds as follows:
\begin{equation*}
\begin{aligned}
    & h_{v_\text{min}}(\bm{x}) = v - v_\text{min}, \; h_{\omega_\text{min}}(\bm{x}) = \omega - \omega_\text{min}, \\
    & h_{v_\text{max}}(\bm{x}) = v_\text{max} - v, \; h_{\omega_\text{max}}(\bm{x}) = \omega_\text{max} - \omega.
\end{aligned}
\end{equation*}
The relative degree of these CBFs is 1. 
The corresponding constraints derived from these CBFs are:
\begin{align}
    L_f h_{v_\text{min}}(\bm{x}) + L_g h_{v_\text{min}}(\bm{x})\bm{u} 
    + \mu_{v_\text{min}}(h_{v_\text{min}}(\bm{x})) &\geq 0, \label{eq:cons_vmin} \\
    L_f h_{v_\text{max}}(\bm{x}) + L_g h_{v_\text{max}}(\bm{x})\bm{u} 
    + \mu_{v_\text{max}}(h_{v_\text{max}}(\bm{x})) &\geq 0, \label{eq:cons_vmax} \\
    L_f h_{\omega_\text{min}}(\bm{x}) + L_g h_{\omega_\text{min}}(\bm{x})\bm{u} 
    + \mu_{\omega_\text{min}}(h_{\omega_\text{min}}(\bm{x})) &\geq 0, \label{eq:cons_wmin} \\
    L_f h_{\omega_\text{max}}(\bm{x}) + L_g h_{\omega_\text{max}}(\bm{x})\bm{u}
    + \mu_{\omega_\text{max}}(h_{\omega_\text{max}}(\bm{x})) &\geq 0, \label{eq:cons_wmax} 
\end{align}
where $\mu_{v_\text{min}}$, $\mu_{v_\text{max}}$, $\mu_{\omega_\text{min}}$ and $\mu_{\omega_\text{max}}$ represent different class $\mathcal{K}$ functions.
By incorporating these constraints into the optimization problem, the states $v$ and $\omega$ are effectively restricted within their specified ranges, ensuring compliance with the robot's physical limitations.

\subsection{Controller Synthesis}
In this section, we mainly demonstrate how to combine the constraints of CLFs, VOCBFs, and physical limitations on states and control inputs with the objective function to formulate an optimization problem like~\eqref{eq:clf_cbf_qp_optimal_problem}.
Given that there are two VOCBF constraints associated with each obstacle, and at least one of these two constraints must be satisfied, we propose introducing integer variables to handle this requirement.
Consequently, the optimization problem~\eqref{eq:clf_cbf_qp_optimal_problem} is converted into a mixed-integer quadratic programming (MIQP) problem as:
\noindent\rule{\columnwidth}{0.8pt}
\textbf{CLF-VOCBF-MIQP:}
\begin{subequations}
\begin{align}
    \min_{(\bm{u}, \bm{\delta}) \in \mathbb{R}^{m + 4}} &\frac{1}{2}\bm{u}^\top H \bm{u} + \frac{1}{2}(\bm{u} - \bm{u}_\text{pre})^\top R (\bm{u} - \bm{u}_\text{pre}) + \bm{\delta}^\top P \bm{\delta} \label{eq:miqp_objective} \\
    \text{s.t.} ~& L_f V_\text{d}(\bm{x}) + L_g V_\text{d}(\bm{x})\bm{u} + \gamma_\text{d}(V_\text{d}(\bm{x})) \leq \delta_\text{d}, \label{eq:voqp_clf1} \\
    & L_f V_\theta(\bm{x}) + L_g V_\theta(\bm{x})\bm{u} + \gamma_\theta(V_\theta(\bm{x})) \leq \delta_\theta, \label{eq:voqp_clf2} \\
    & L_f V_v(\bm{x}) + L_g V_v(\bm{x})\bm{u} + \gamma_v(V_v(\bm{x})) \leq \delta_v, \label{eq:voqp_clf3} \\
    & L_f V_\omega(\bm{x}) + L_g V_\omega(\bm{x})\bm{u} + \gamma_\omega(V_\omega(\bm{x})) \leq \delta_\omega \label{eq:voqp_clf4} \\ \notag
    & \eqref{eq:cons_vmin}, \eqref{eq:cons_vmax}, \eqref{eq:cons_wmin}, \eqref{eq:cons_wmax}, \\
    & -L_f h_{\text{R}_i, k}^{\text{O}_j}(\bm{x}) - L_g h_{\text{R}_i, k}^{\text{O}_j}(\bm{x})\bm{u} - \mu(h_{\text{R}_i, k}^{\text{O}_j}(\bm{x})) - \notag \\
    & \frac{d h_{\text{R}_i, k}^{\text{O}_j}(\bm{x})}{d \bm{x}_{\text{O}_j}} \frac{d \bm{x}_{\text{O}_j}}{dt} \leq G(1-z_{j, k}), \notag \\
    & j \in \{0, \dots, M-1 \}, k \in \{1, 2\}, \label{eq:cons_cbf_interger} \\
    & \sum_{k=1}^{2} z_{j, k} \geq 1, z_{j, k} \in \{0, 1\}, \label{eq:integer_cons} \\
    & \bm{u} \in \mathcal{U}, \Delta \bm{u} \in \Delta \mathcal{U}, \label{eq:cons_u} \\
    & \bm{\delta} \in \mathbb{R}^4, \label{eq:cons_delta}
\end{align}
\label{eq:vomiqp_optimal_problem}
\end{subequations}
\noindent\rule{\columnwidth}{0.8pt}
\noindent
where $H$, $R$ and $P$ are predefined positive definite matrices, $\bm{\delta}=[\delta_\text{d}, \delta_\theta, \delta_v, \delta_\omega]^\top$ represents the relaxation variables of CLFs,
$\bm{u}_\text{pre}$ represents the control input from the previous time step and is considered here to ensure smoother control input transitions.
The integer variable $z_{j, k}$ corresponds to the $k^\text{th}$ constraint of the VOCBF for the $j^\text{th}$ obstacle, determining whether the constraint is active ($z_{j, k} = 1$) or inactive ($z_{j, k} = 0$).
G is a large positive constant. 
The term $\frac{d h_{\text{R}_i, k}^{\text{O}_j}(\bm{x})}{d \bm{x}_{\text{O}_j}} \frac{d \bm{x}_{\text{O}_j}}{dt}$ accounts for the effect of dynamic obstacles and equals zero for static obstacles.
Constraint~\eqref{eq:integer_cons} ensures that at least one of the two VOCBF constraints is active for each obstacle, guaranteeing safety.
If $z_{j, 1} = 1$ or $z_{j, 2} = 1$, then the robot navigates around the obstacle from its left or right side, respectively. 
If both $z_{j, 1} = 1$ and $z_{j, 2} = 1$, then the robot moves backward to avoid a collision.
Constraints~\eqref{eq:voqp_clf1}, \eqref{eq:voqp_clf2}, \eqref{eq:voqp_clf3} and~\eqref{eq:voqp_clf4} corresponding to the CLFs are used to guide the robot towards its destination and relaxed by relaxation variables $\bm{\delta}$ to guarantee safety when they conflict with VOCBF constraints.
Moreover, as VO-based methods are prone to deadlocks due to their local collision avoidance nature, combining VOCBFs with CLFs helps address this issue effectively.
Weight factors in $P$ prioritize $\delta_\text{d}$ and $\delta_\theta$ over $\delta_v$ and $\delta_\omega$, as constraints~\eqref{eq:voqp_clf1} and~\eqref{eq:voqp_clf2} are primarily responsible for navigation.
Constraints~\eqref{eq:cons_vmin} - \eqref{eq:cons_wmax} restrict the robot’s states $v$ and $\omega$ to their specified ranges.
Constraint~\eqref{eq:cons_u} ensures that control inputs $\bm{u}$ remain within permissible bounds and the change rate of control inputs is constrained within a specified range.

Introducing integer variables makes the optimization problem~\eqref{eq:vomiqp_optimal_problem} relatively concise.
However, solving the MIQP problem~\eqref{eq:vomiqp_optimal_problem} can be time-consuming.
To improve computational efficiency, we propose splitting the original MIQP problem into multiple sub-optimization problems in the form of CLF-VOCBF-QPs.
Specifically, for $M$ obstacles, there will be $3^M$ sub-optimization problems.
Consider the case with one obstacle. 
Since at least one of the two VOCBF constraints must be satisfied, the original MIQP problem is split into three sub-optimization problems:
\begin{enumerate}
    \item Only the constraint of $h_{\text{R}_i, 1}^{\text{O}_j}(\bm{x})$ is considered.
    \item Only the constraint of $h_{\text{R}_i, 2}^{\text{O}_j}(\bm{x})$ is considered.
    \item Both constraints of $h_{\text{R}_i, 1}^{\text{O}_j}(\bm{x})$, $h_{\text{R}_i, 2}^{\text{O}_j}(\bm{x})$ are considered.
\end{enumerate}
All other constraints remain consistent.
Since all constraints are affine in the control inputs $\bm{u}$, each sub-optimization problem takes the form of a CLF-VOCBF-QP, which can be efficiently solved in real-time.
The detailed formulation for the third scenario, where both VOCBF constraints are considered, is as follows:

\noindent\rule{\columnwidth}{0.8pt}
\textbf{CLF-VOCBF-QPs:}
\begin{subequations}
\begin{align}
    \min_{(\bm{u}, \bm{\delta}) \in \mathbb{R}^{m + 4}} & \frac{1}{2}\bm{u}^\top H \bm{u} + \frac{1}{2}(\bm{u} - \bm{u}_\text{pre})^\top R (\bm{u} - \bm{u}_\text{pre}) + \bm{\delta}^\top P \bm{\delta} \label{eq:voqp_prob} \\ 
    \text{s.t.} ~
    & \eqref{eq:voqp_clf1}, \eqref{eq:voqp_clf2}, \eqref{eq:voqp_clf3}, \eqref{eq:voqp_clf4}, \notag \\
    & \eqref{eq:cons_vmin}, \eqref{eq:cons_vmax}, \eqref{eq:cons_wmin}, \eqref{eq:cons_wmax}, \eqref{eq:cons_u}, \eqref{eq:cons_delta}, \notag \\
    & L_f h_{\text{R}_i, k}^{\text{O}_j}(\bm{x}) + L_g h_{\text{R}_i, k}^{\text{O}_j}(\bm{x}) + \frac{d h_{\text{R}_i, k}^{\text{O}_j}(\bm{x})}{d \bm{x}_{\text{O}_j}} \frac{d \bm{x}_{\text{O}_j}}{dt} \notag \\
    & + \mu(h_{\text{R}_i, k}^{\text{O}_j}(\bm{x})) \geq 0, j \in \{0, \dots, M-1 \}, k \in \{1, 2\}, \label{eq:cons_cbf} 
\end{align}
\label{eq:voqp_optimal_problem}
\end{subequations}
\noindent\rule{\columnwidth}{0.8pt}
\noindent
where constraints~\eqref{eq:cons_cbf} indicate that both constraints of $h_{\text{R}_i, 1}^{\text{O}_j}(\bm{x})$ and $h_{\text{R}_i, 2}^{\text{O}_j}(\bm{x})$ are considered in this sub-optimization problem.
The formulations for the other two scenarios, where only one of the VOCBF constraints is considered, are omitted for brevity.
Since all feasible integer solutions in the original MIQP problem are covered across these sub-optimization problems, the optimal solution for the original MIQP problem is obtained by solving each sub-problem and selecting the one with the minimum objective function.
For the case with one obstacle, after solving the three CLF-VOCBF-QPs, the solution with the smallest objective value is chosen as the final optimal solution of the original MIQP problem~\eqref{eq:vomiqp_optimal_problem}.

When applying CLF-VOCBF-QPs to multiple obstacles, the number of sub-optimization problems increases exponentially with the number of obstacles.
To further improve computational efficiency, we first determine whether the feasible region of each sub-optimization problem is empty or not before solving it. 
Specifically, we check whether
\begin{equation}
    \mathcal{A}:=\{\bm{u} \in \mathbb{R}^{m} \, \big| \, \text{subject to} \; \eqref{eq:cons_vmin} - \eqref{eq:cons_wmax}, \eqref{eq:cons_u}, \eqref{eq:cons_cbf} \}
    \label{eq:fea_detect}
\end{equation}
is empty or not.
Since all constraints in $\mathcal{A}$ are affine in $\bm{u}$, this can be determined using linear programming (LP).
Constraints of CLFs are excluded from $\mathcal{A}$ as they are relaxed through the relaxation variables and are always satisfied.
Thus, only hard constraints, such as those for collision avoidance and physical limitations, are considered.
If $\mathcal{A}$ is not empty, we proceed to solve the corresponding sub-optimization problem~\eqref{eq:voqp_optimal_problem}.

\begin{figure*} 
    \centering
    \includegraphics[width=1.0\linewidth]{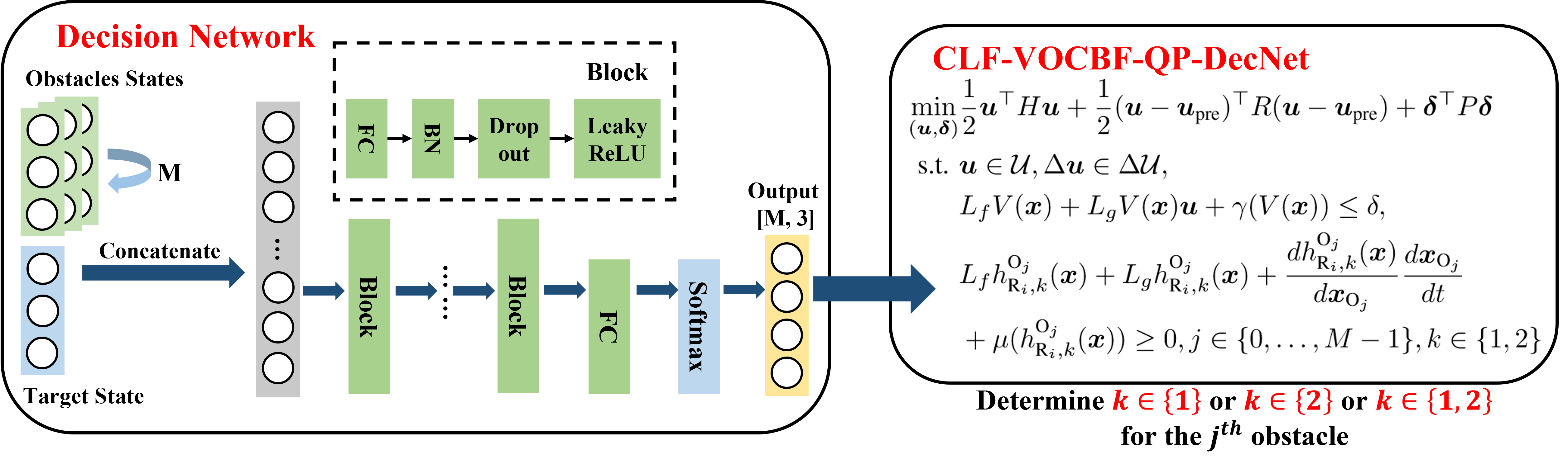}
    \caption{Structure of the decision network. The decision network takes as input the states of $M$ obstacles and the robot's target state, both represented in the robot's local frame. It outputs the probabilities for the robot to navigate around each obstacle in three possible directions (left, right, or backward). These probabilities are used to determine the active constraints in the formulation of the CLF-VOCBF-QP-DecNet optimization problem.}
    \label{fig:decision_net}
\end{figure*}
As the number of sub-optimization problems grows exponentially with the number of obstacles, it becomes challenging to apply the above methods in environments with a large number of obstacles.
To address this, we propose reducing computational costs further using a neural network.
Specifically, we design a decision network to assist the robot in determining which side of each obstacle to navigate around, and the structure of the decision network is shown in Fig.~\ref{fig:decision_net}.

The decision network takes as input the states of $M$ obstacles and the robot's target state, all represented in the robot's local frame.
The network first performs global feature aggregation by extracting information from all obstacles using average pooling.
The global obstacle information is then concatenated with the information of each individual obstacle and the robot's target state to ensure that decisions consider both local and global context.
The decision network outputs the probabilities for the robot to navigate around each obstacle in three directions, i.e., $\sum p_z = 1, z \in \{0, 1, 2\}$, where $p_0$ represents the probability of navigating around the obstacle from the left side, $p_1$ is the probability of navigating around the right side, and $p_2$ is the probability of moving backward (rear side).
For each obstacle, the robot chooses the direction corresponding to the highest probability as its decision. 
For example, if $p_0$ is the largest, then $k \in \{1\}$ in constraint~\eqref{eq:cons_cbf}, meaning the robot will navigate around from the left side; 
if $p_1$ is the largest, then $k \in \{2\}$ in constraint~\eqref{eq:cons_cbf}, meaning the robot will navigate around from the right side; 
if $p_2$ is the largest, then $k \in \{1, 2\}$ in constraint~\eqref{eq:cons_cbf}, meaning  both constraints are considered, and the robot moves backward.

With the help of the decision network, the direction for navigating around each obstacle is determined in advance. 
As a result, only a single sub-optimization problem needs to be solved instead of $3^M$ sub-optimization problems.
This significantly reduces the computational burden, enabling the method to handle environments with a large number of obstacles efficiently.
\subsection{Analysis of Runtime Complexity Bounds and Feasibility}
In this section, we provide a detailed analysis of the runtime complexity bounds for our proposed methods and examine the feasibility of the optimization problem.

\begin{table}
\caption{Runtime complexity bounds w.r.t. different methods.}
\label{tab:runtime}
\centering
\begin{threeparttable}
\renewcommand{\arraystretch}{1.0}
\begin{tabular}{>{\raggedright\arraybackslash}p{4cm}cc}
\hline
\multirow{2}{*}{Methods} & \multicolumn{2}{c}{Runtime complexity bounds\tnote{a}} \\ \cline{2-3} 
               & Lower   & Upper \\ \hline
CLF-VOCBF-MIQP & $O_1$  & $4^M O_1$   \\
CLF-VOCBF-QPs  & $3^M O_2 + O_1$ & $3^M (O_1 + O_2)$   \\
CLF-VOCBF-QP-DecNet & $O_1$ & $O_1$ \\ \hline
\end{tabular}
\begin{tablenotes}
    \item [] \textsuperscript{a} The runtime complexity $O_1$ of a QP is typically $O(n^{3.5} + n^{2.5}m)$, while the runtime complexity $O_2$ of an LP is $O(n^2m)$, where $n$ represents the number of optimization variables, $m$ is the number of constraints, and $M$ denotes the number of obstacles.
\end{tablenotes}
\end{threeparttable}
\end{table}

We propose three distinct methods to solve the original MIQP problem:
\textbf{(i) CLF-VOCBF-MIQP} directly solves the MIQP problem;
\textbf{(ii) CLF-VOCBF-QPs} splits the original MIQP into multiple sub-optimization problems and solves each to determine the optimal solution;
\textbf{(iii) CLF-VOCBF-QP-DecNet} utilizes a decision network to guide the process and only needs to solve one sub-optimization problem.
To compare the computational efficiency of these methods, we provide detailed runtime complexity bounds, as summarized in Table~\ref{tab:runtime}, and we assume the original MIQP problem is feasible.
For simplicity, we denote the runtime complexity of a QP as $O_1$, and that of an LP as $O_2$.
For an optimization problem with $n$ optimization variables and $m$ constraints, $O_1$ is typically $O(n^{3.5} + n^{2.5}m)$, while $O_2$ is typically $O(n^2m)$.
MIQP is generally solved using the branch-and-bound method. 
The lower bound of its runtime occurs when the solution of the initial relaxation problem satisfies all integer variable constraints exactly, requiring only a single QP to be solved. 
The upper bound, however, is determined by the total number of possible combinations of the $2M$ integer variables in ~\eqref{eq:vomiqp_optimal_problem}, where each variable $z \in \{0, 1\}$.\
As a result, the upper bound runtime is $2^{2M} * O_1$.
For CLF-VOCBF-QPs, the process involves checking feasibility as described in~\eqref{eq:fea_detect} before solving each sub-QP.
The lower bound of the runtime occurs when only one sub-optimization problem is feasible, and it also corresponds to the optimal solution, while the upper bound occurs when all sub-problems are feasible and need to be solved.
In contrast, for CLF-VOCBF-QP-DecNet, which is guided by the decision network, only one QP is solved. 
Thus, both the lower and upper bounds of its runtime are the same, significantly reducing computational cost.

We also provide a feasibility analysis for our approach. 
Since CLF constraints are relaxed by relaxation variables, the optimization problem becomes infeasible only when the collision avoidance constraints~\eqref{eq:cons_cbf} conflict with the physical limitation constraints~\eqref{eq:cons_vmin}, \eqref{eq:cons_vmax}, \eqref{eq:cons_wmin}, \eqref{eq:cons_wmax} and \eqref{eq:cons_u}.
The feasibility of the optimization problem depends on the choice of class $\mathcal{K}$ functions and the robot's locomotion capabilities.
As proposed in~\cite{zeng2021feasibility}, dynamically adjusting the decay rate of CBFs can enhance feasibility, and this method can also be incorporated into our approach.

Moreover, VO-based constraints typically require $\bm{v}_{\text{R}_i} \notin \text{VO}_{\text{R}_i|\text{O}_j}(\bm{v}_{\text{O}_j})$ to ensure safety.
However, this condition is often overly conservative, as safety is generally ensured based on the current distance between the robot and the obstacle, even though it may be deemed unsafe under VO-based constraints.
To address this, \cite{roncero2024multi} suggests relaxing VO-based constraints and imposing simpler hard constraints to ensure safety, like designing a following CBF:
\begin{equation*}
    h(\bm{x}) = \|\bm{p}_{\text{R}_i}^{\text{O}_j} \| - d_s - \frac{\bm{v}_{\text{R}_i}^{\text{O}_j}}{2a_\text{max}},
\end{equation*}
where $d_s$ is the safe margin.
This strategy can also be applied to our approach. 
However, since the primary focus of this work is to introduce the construction of VOCBFs and the efficient solving of optimization problems, we refer readers to the original works for more details on these interesting methods.

In conclusion, we have demonstrated how to integrate all constraints, including CLFs, VOCBFs, and physical limitations, to formulate and efficiently solve the optimization problem.
With the proposed decision network, the computational cost is significantly reduced.
Furthermore, when applying our approach to distributed multi-robot systems, the constraints of RVO-based CBFs are equivalent to those of VOCBFs, allowing us to incorporate the corresponding VOCBF constraints into the optimization problem.

\section{Numerical Simulations}
\label{sec:num}
\subsection{Implementation Details}
\label{5-1}
\begin{table}
    \caption{Setup of Simulation Parameters.}
    \label{tab:simulation_params}
    \centering
    \begin{tabular}{l|l|l}
    \hline
    Notation & Meaning & Value     \\ \hline
    $\Delta t$ & Time step of simulation & $0.05 \, \si[per-mode=symbol]{\second}$ \\
    $r$ & Radius of robot & $0.3 \, \si[per-mode=symbol]{\metre}$ \\
    $l$ & Distance between center and rear axle axis & $0.15 \, \si[per-mode=symbol]{\metre}$ \\
    $d_\text{s}$ & Safe margin for collision avoidance & $0.15 \, \si[per-mode=symbol]{\metre}$ \\
    $v_\text{min}$ & Robot's minimum linear velocity & $0.0 \, \si[per-mode=symbol]{\metre\per\second}$ \\
    $v_\text{max}$ & Robot's maximum linear velocity & $4.0 \, \si[per-mode=symbol]{\metre\per\second}$ \\
    $\omega_\text{max}$ & Robot's maximum angular velocity & $0.5 \, \si[per-mode=symbol]{\radian\per\second}$ \\
    $a_\text{max}$ & Robot's maximum linear acceleration & $1.0 \, \si[per-mode=symbol]{\metre\per\second^2}$ \\
    $\alpha_\text{max}$ & Robot's maximum angular acceleration & $0.6 \, \si[per-mode=symbol]{\radian\per\second^2}$ \\
    $\Delta a_\text{max}$ & Maximum change rate of linear acceleration & $6.0 \, \si[per-mode=symbol]{\metre\per\second^3}$ \\
    $\Delta \alpha_\text{max}$ & Maximum change rate of angular acceleration & $3.0 \, \si[per-mode=symbol]{\radian\per\second^3}$ \\
    $\gamma(\cdot)$ & Class $\mathcal{K}$ functions for all CLFs & 1.0 \\
    $\mu(\cdot)$ & Class $\mathcal{K}$ functions for all CBFs & 1.0 \\
    $\mu_\text{hocbf}(\cdot)$ & Class $\mathcal{K}$ functions for HOCBFs & 0.75, 0.65 \\
    \hline     
    \end{tabular}%
\end{table}
We conducted various numerical simulations to validate the effectiveness and performance of our proposed approach.
All numerical simulations are conducted using Python on an Ubuntu Laptop with an Intel Core i9-13900HX CPU.
The decision network, consisting of five blocks (as shown in Fig.~\ref{fig:decision_net}), is trained using PyTorch on a GeForce RTX 4060 GPU.
To train the decision network, we generated a dataset with 2,000 random scenarios. 
These scenarios included randomly generated start and target positions for the robot, initial positions and velocities for two circular obstacles, and random radii for both the robot and obstacles.
Moreover, we totally propose three methods to solve the original MIQP problem:
\textbf{(i) CLF-VOCBF-MIQP} (original form), \textbf{(ii) CLF-VOCBF-QPs} (splitting the MIQP into multiple sub-optimization problems) and \textbf{(iii) CLF-VOCBF-QP-DecNet} (guided by the decision network, requiring only the solution of a single sub-optimization problem).
The parameters for the above optimization problems are identical, with class $\mathcal{K}$ functions chosen as linear scalars for simplicity.
Physical constraints on the robot's kinematics are incorporated, including $v_\text{min} \leq v \leq v_\text{max}$, $-\omega_\text{max} \leq \omega \leq \omega_\text{max}$, $-a_\text{max} \leq a \leq a_\text{max}$, $-\alpha_\text{max} \leq \alpha \leq \alpha_\text{max}$, $-\Delta a_\text{max} \leq \Delta a \leq \Delta a_\text{max}$ and $-\Delta \alpha_\text{max} \leq \Delta \alpha \leq \Delta \alpha_\text{max}$.
A safety margin $d_s$ is added to the robot's radius to meet safety requirements.
The simulation time step $\Delta t$ is set to $0.05 \, \si[per-mode=symbol]{\second}$.
Parameters for the robot’s kinematics and the optimization controller are listed in Table~\ref{tab:simulation_params}.
We used BONMIN~\cite{sutradhar2016minlp} to solve CLF-VOCBF-MIQP and qpOASES~\cite{ferreau2014qpoases} for CLF-VOCBF-QPs and CLF-VOCBF-QP-DecNet.
In Section \ref{sec:5-2}, \ref{sec:5-3}, and \ref{sec:5-4}, we present the simulation results and comparisons with state-of-the-art (SOTA) methods.

\subsection{Collision Avoidance with Obstacles}
\label{sec:5-2}
\label{subsec:co}
\begin{figure}
    \centering
    \begin{subfigure}{0.45\linewidth}
        \centering
        \includegraphics[width=0.98\linewidth]{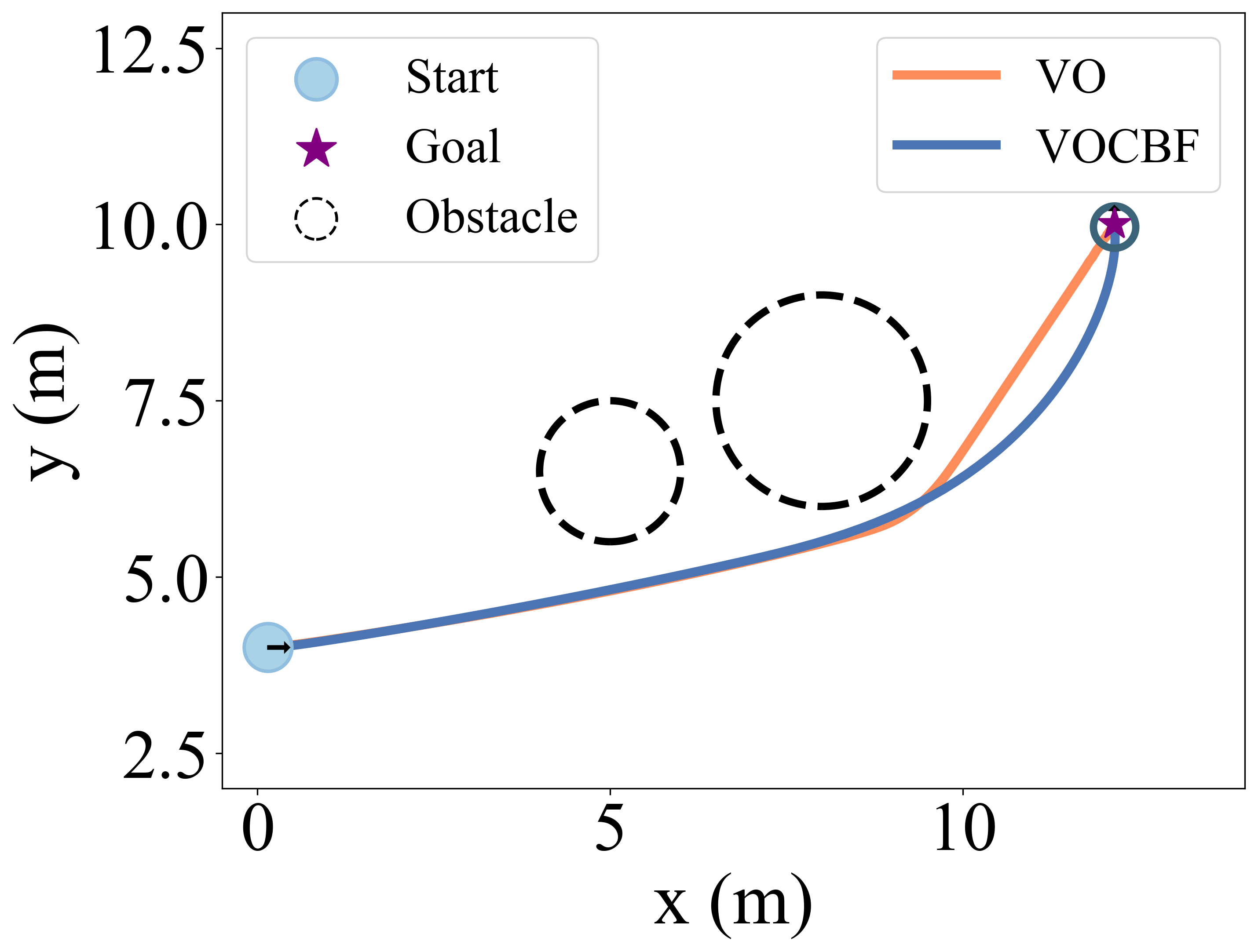}
        \caption{With static obstacles}
        \label{subfig:static_obs}
    \end{subfigure}
    \centering
    \begin{subfigure}{0.45\linewidth}
        \centering
        \includegraphics[width=0.98\linewidth]{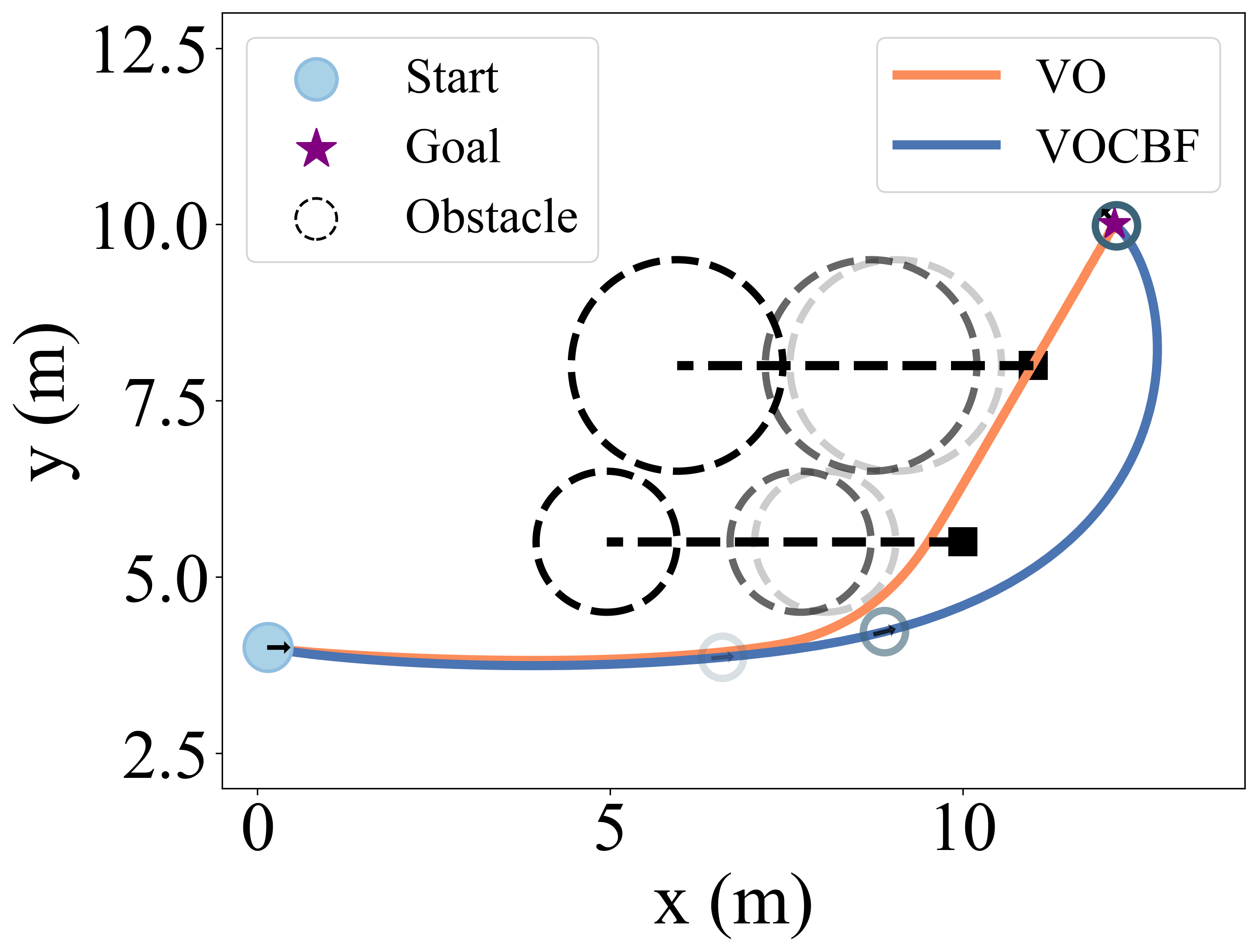}
        \caption{With dynamic obstacles}
        \label{subfig:dynamic_obs}
    \end{subfigure}
    \caption{Simulation results of guiding the robot to its destination while avoiding collisions with static and dynamic obstacles. The robot's start and target positions are represented by the light blue circle and purple star, respectively, while the black squares indicate the start positions of the dynamic obstacles. All obstacles are represented by black dashed circles. In (b), the robot’ positions over time, along with those of the dynamic obstacles, are illustrated using color gradients.}
    \label{fig:navigation}
\end{figure}
\begin{figure}
    \centering
    \begin{subfigure}{0.45\linewidth}
        \centering
        \includegraphics[width=0.98\linewidth]{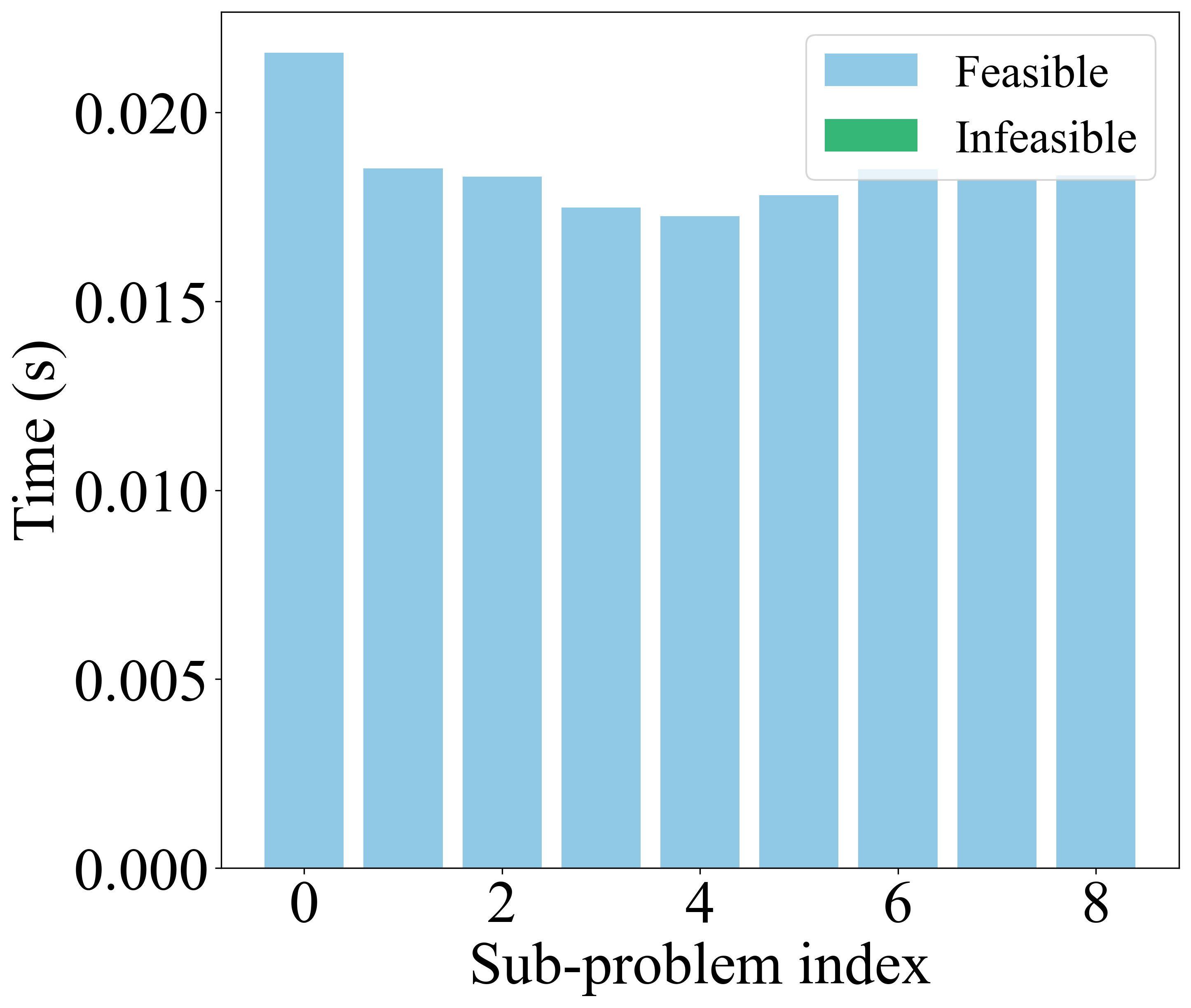}
        \caption{Case 1}
        \label{subfig:case1}
    \end{subfigure}
    \centering
    \begin{subfigure}{0.45\linewidth}
        \centering
        \includegraphics[width=0.98\linewidth]{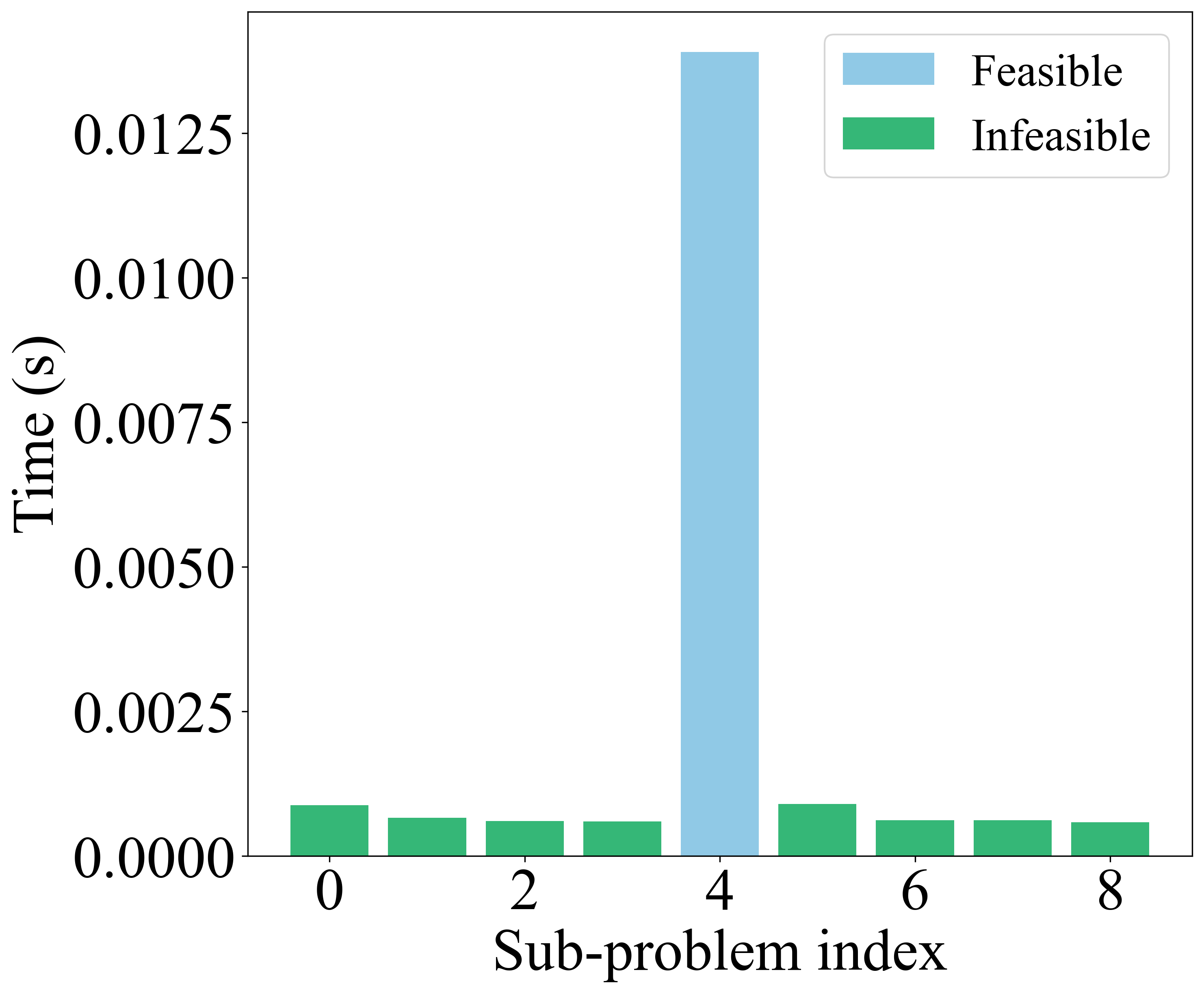}
        \caption{Case 2}
        \label{subfig:case2}
    \end{subfigure}
    \caption{Explanation of CLF-VOCBF-QPs which splits the original MIQP into multiple sub-optimization problems. For two obstacles, nine sub-problems are generated. To enhance efficiency, an LP is used to check if the feasible region $\mathcal{A}$ is empty before solving each problem.} 
    \label{fig:explain_qp}
\end{figure}
In this section, we first demonstrate the effectiveness of our approach in guiding a robot to its destination while avoiding collisions with static and dynamic obstacles. 
We design two scenarios: one with two static obstacles and the other with two dynamic obstacles moving at constant velocities.
The robot's start and target positions are the same in both scenarios, located at $(0 \, \si[per-mode=symbol]{\metre}, 4 \, \si[per-mode=symbol]{\metre})$ and $(12 \, \si[per-mode=symbol]{\metre}, 10 \, \si[per-mode=symbol]{\metre})$, respectively.
The velocities of both dynamic obstacles are $(-0.5 \, \si[per-mode=symbol]{\metre\per\second}, 0 \, \si[per-mode=symbol]{\metre\per\second})$.
Notably, the final solutions from CLF-VOCBF-MIQP and CLF-VOCBF-QPs are identical, differing only in computational efficiency. 
CLF-VOCBF-QP-DecNet, guided by the decision network, may occasionally yield slightly different results due to its reliance on the network.
Simulation results confirm that our approach successfully navigates the robot to its destination while avoiding collisions, as shown in Fig.~\ref{fig:navigation}, with all three methods yielding the same results in this scenario.
The robot’s positions over time, along with those of the dynamic obstacles, are illustrated using color gradients. 

\begin{table}
\caption{Computation time of our methods w.r.t. different numbers of obstacles.}
\label{tab:computation_time}
\centering
\resizebox{\columnwidth}{!}{%
\begin{tabular}{cccccc}
\hline
\multirow{2}{*}{Method} & \multirow{2}{*}{Num of obs} & \multicolumn{4}{c}{Time (ms)} \\ \cline{3-6} 
 &   & Min    & Max    & Median & Avg    \\ \hline
 \multirow{3}{*}{\begin{tabular}[c]{@{}c@{}}CLF-VOCBF-MIQP\end{tabular}} 
 & 1 & 25.3 & 165.9 & 62.9 & 63.1 \\
 & 2 & 32.1 & 534.1 & 155.6 & 173.3 \\
 & 3 & 36.2 & 2092.1 & 324.4 & 477.8 \\ \hline
\multirow{3}{*}{\begin{tabular}[c]{@{}c@{}}CLF-VOCBF-QPs\end{tabular}}  
 & 1 & 9.6 & 44.9 & 29.3 & 25.7 \\
 & 2 & 13.7 & 178.9 & 46.2 & 72.1 \\
 & 3 & 27.7 & 1223.3 & 193.9 & 309.7 \\ \hline
\multirow{3}{*}{\begin{tabular}[c]{@{}c@{}}CLF-VOCBF-QP-DecNet\end{tabular}}  
 & 1 & 8.4 & 12.8 & 9.5  & \textbf{9.6} \\
 & 2 & 8.8 & 13.6 & 9.8  & \textbf{10.4} \\
 & 3 & 9.2 & 16.8 & 10.6 & \textbf{10.8} \\ \hline
\end{tabular}
}
\end{table}
\begin{figure}
    \centering
    \begin{subfigure}{0.45\linewidth}
        \centering
        \includegraphics[width=0.98\linewidth]{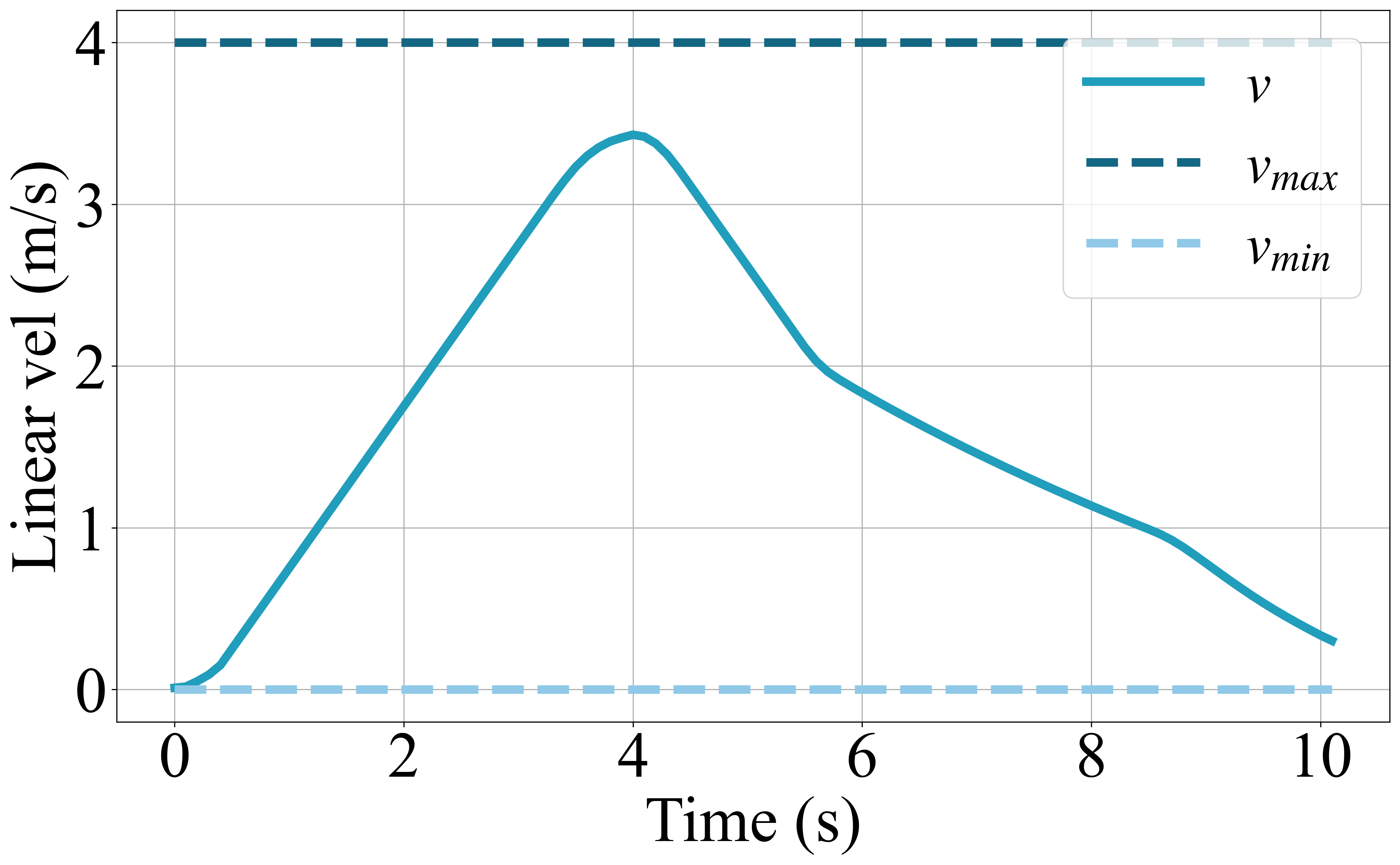}
        \caption{VOCBF}
        \label{subfig:state_v}
    \end{subfigure}
    \centering
    \begin{subfigure}{0.45\linewidth}
        \centering
        \includegraphics[width=0.98\linewidth]{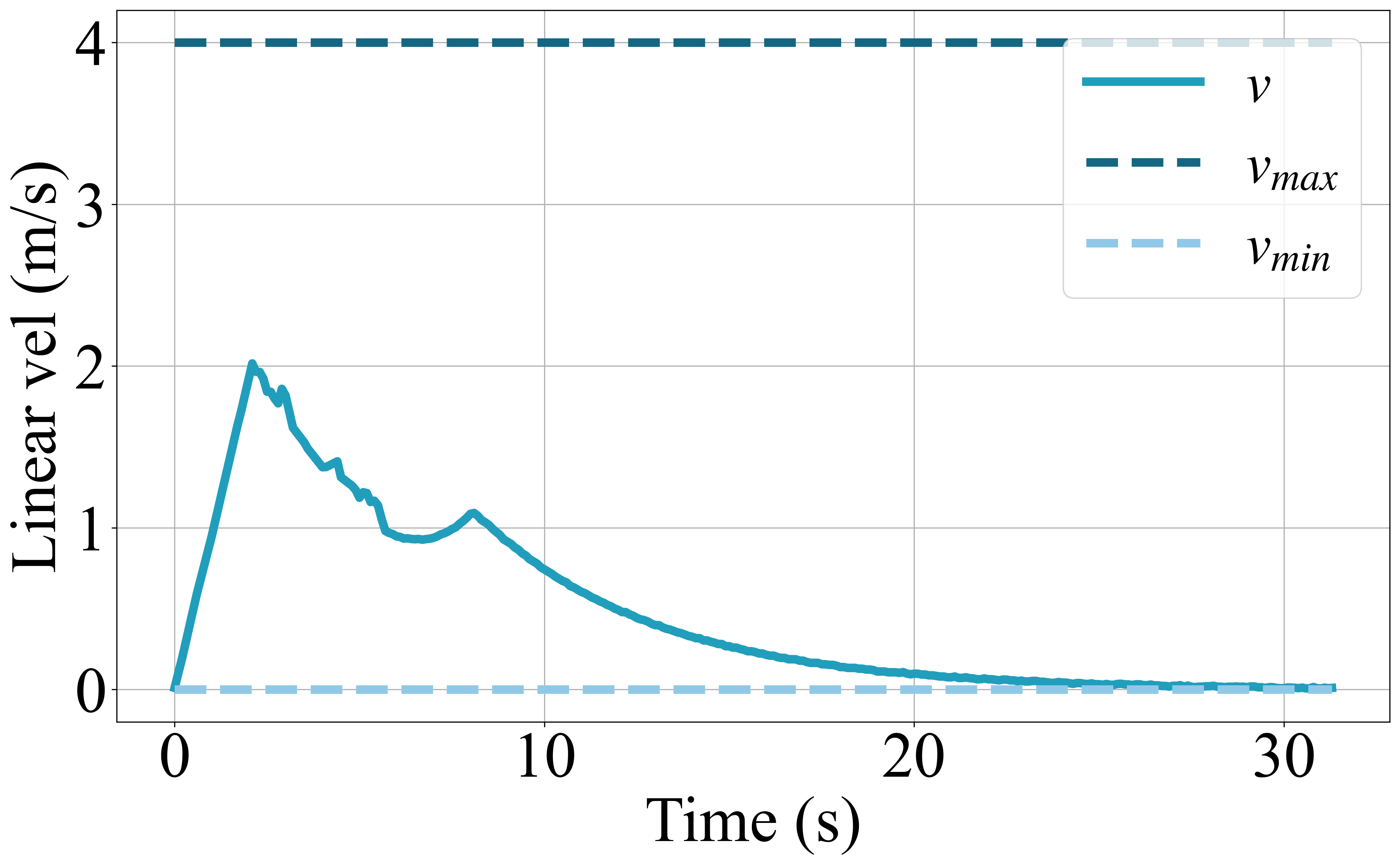}
        \caption{VO}
        \label{subfig:state_v_vo}
    \end{subfigure}

    \begin{subfigure}{0.45\linewidth}
        \centering
        \includegraphics[width=0.98\linewidth]{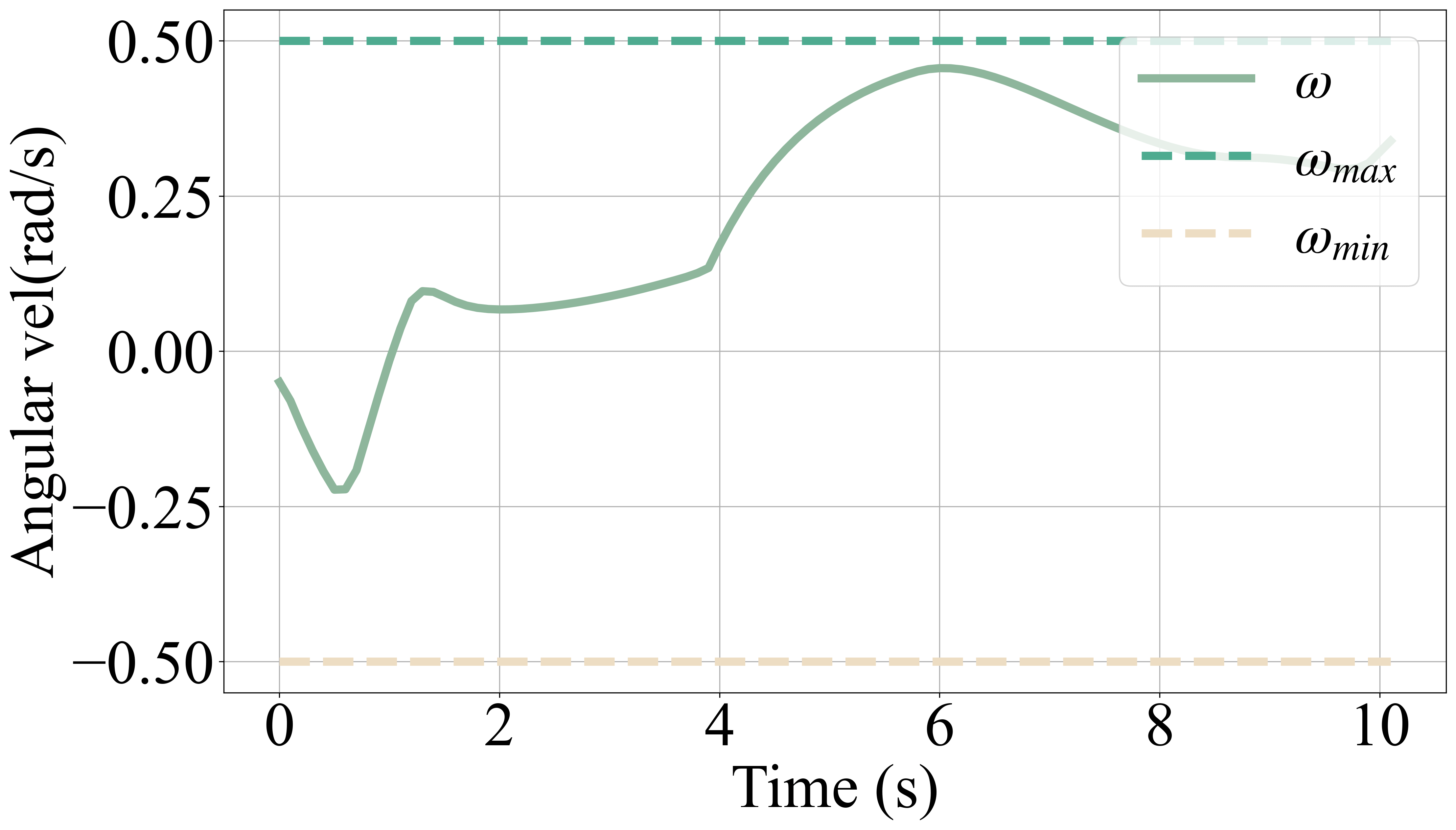}
        \caption{VOCBF}
        \label{subfig:state_w}
    \end{subfigure}
    \centering
    \begin{subfigure}{0.45\linewidth}
        \centering
        \includegraphics[width=0.98\linewidth]{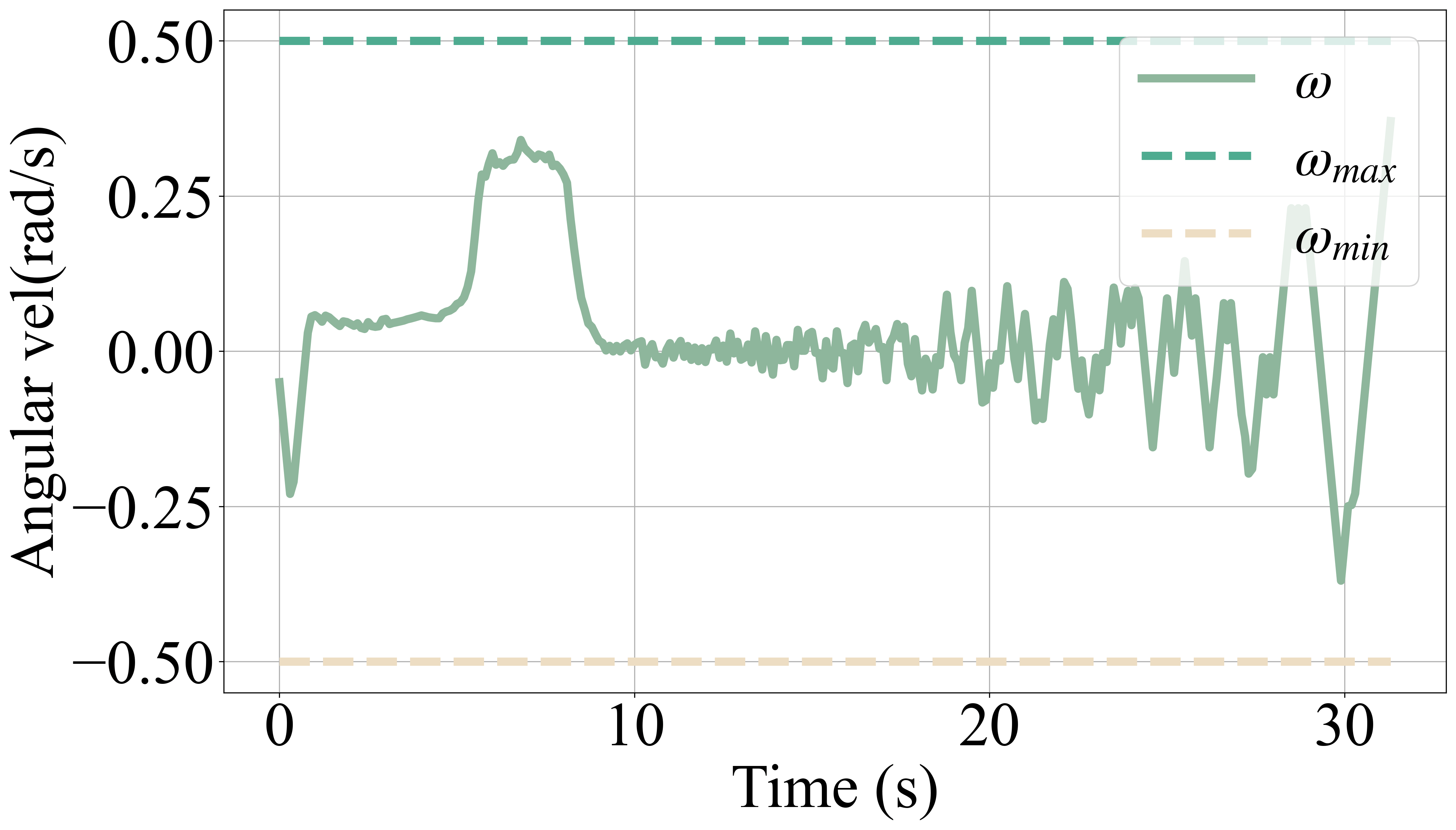}
        \caption{VO}
        \label{subfig:state_w_vo}
    \end{subfigure}

    \begin{subfigure}{0.45\linewidth}
        \centering
        \includegraphics[width=0.98\linewidth]{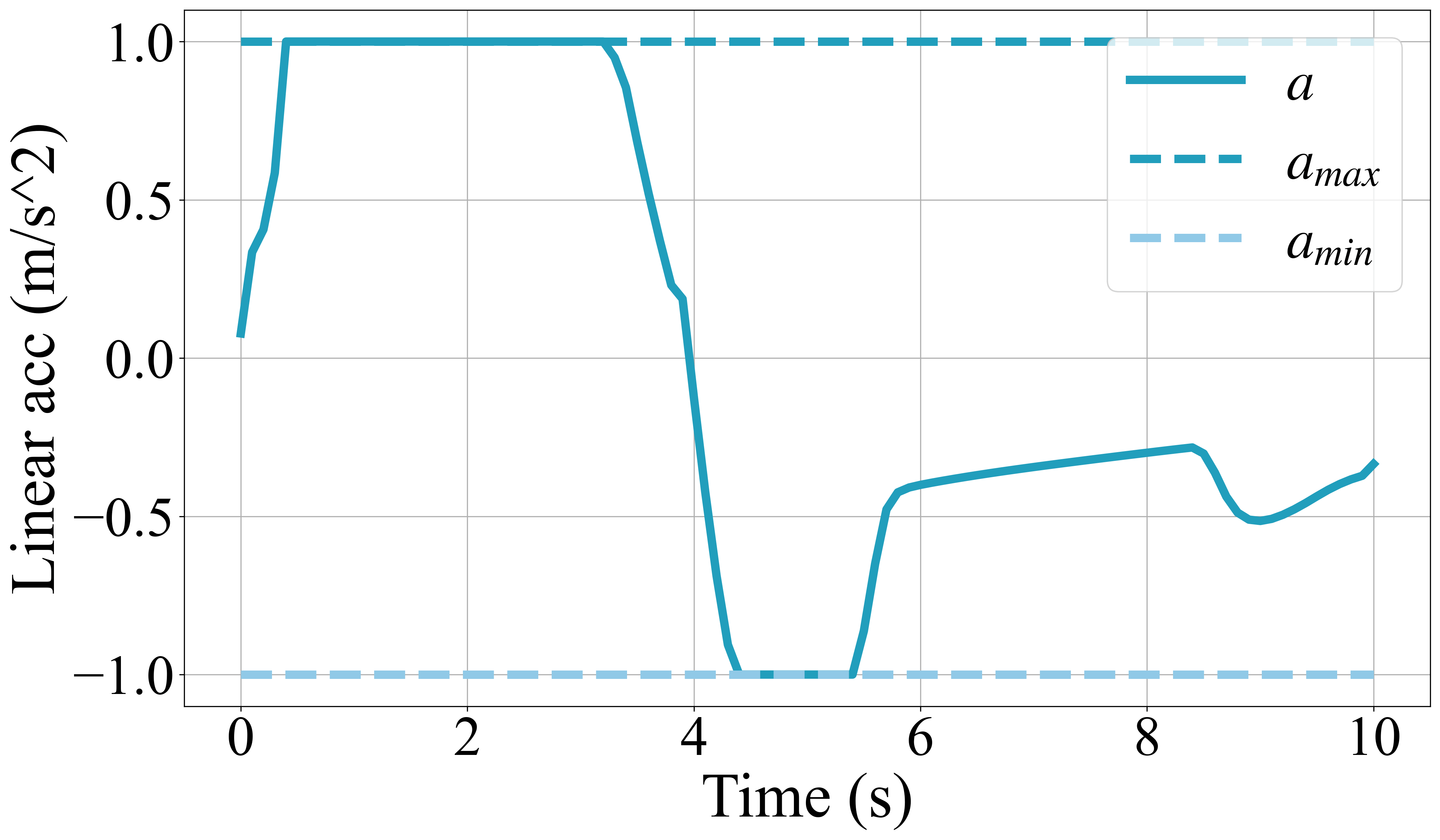}
        \caption{VOCBF}
        \label{subfig:control_v}
    \end{subfigure}
    \centering
    \begin{subfigure}{0.45\linewidth}
        \centering
        \includegraphics[width=0.98\linewidth]{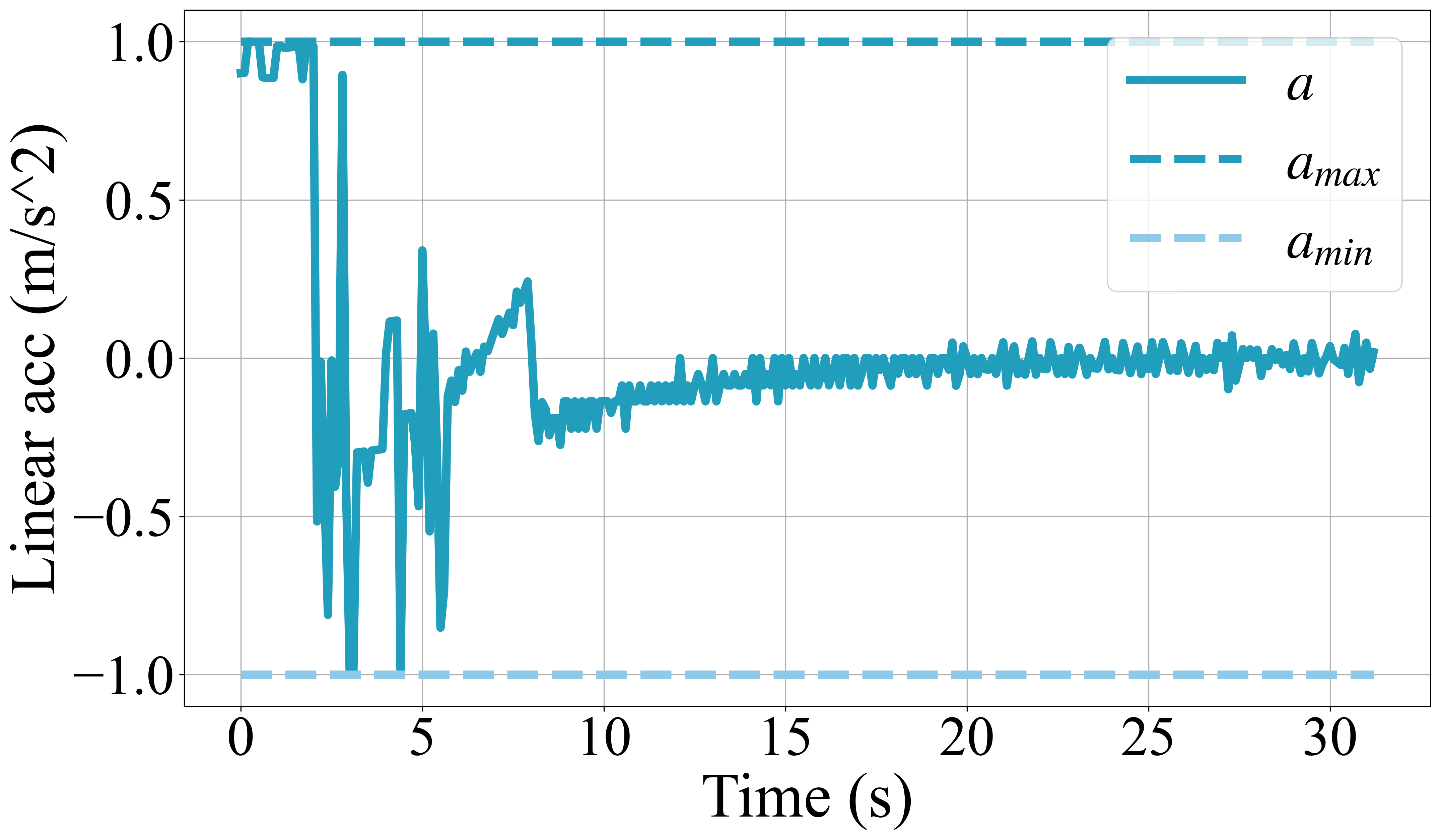}
        \caption{VO}
        \label{subfig:control_v_vo}
    \end{subfigure}

    \begin{subfigure}{0.45\linewidth}
        \centering
        \includegraphics[width=0.98\linewidth]{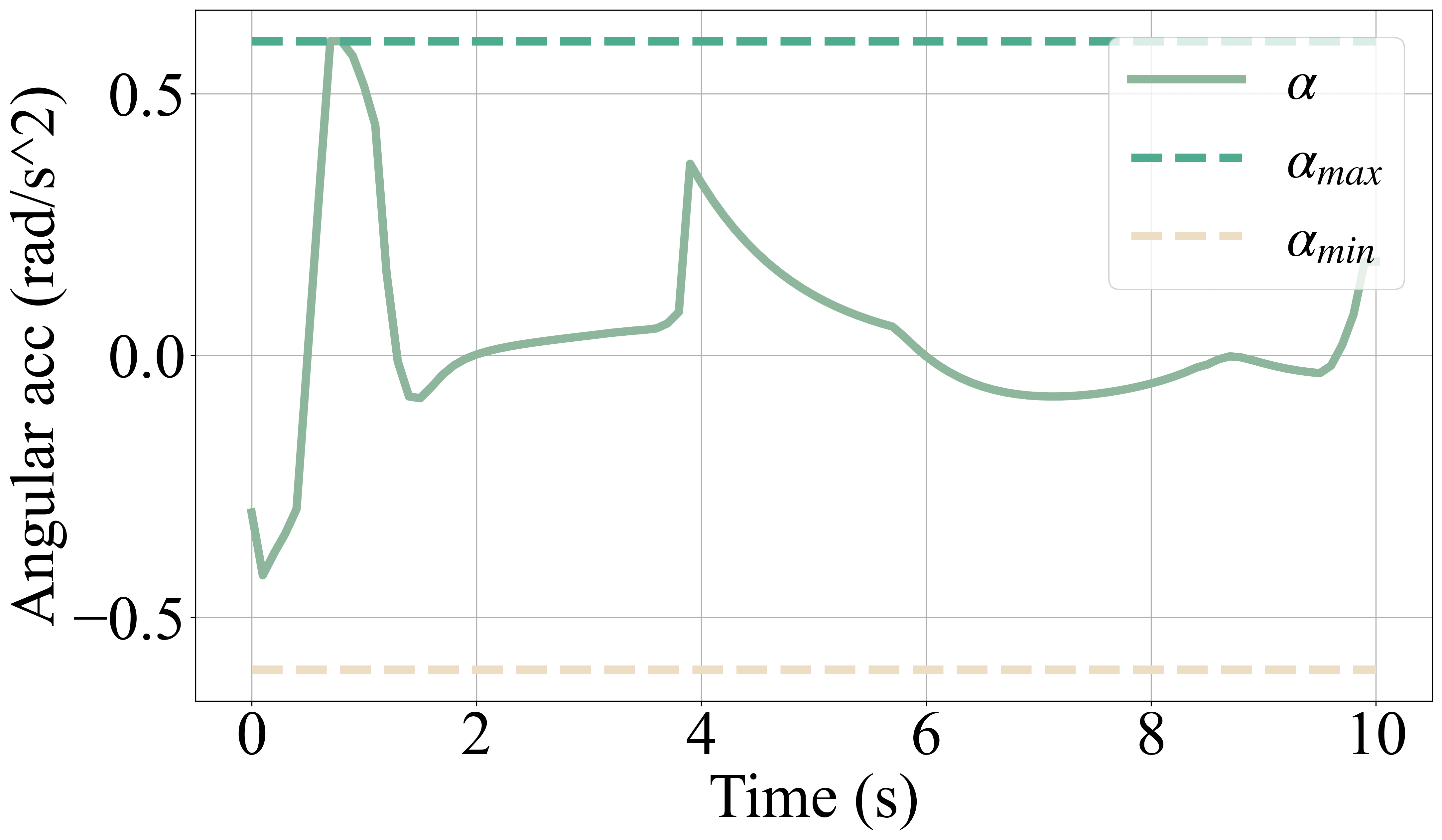}
        \caption{VOCBF}
        \label{subfig:control_w}
    \end{subfigure}
    \centering
    \begin{subfigure}{0.45\linewidth}
        \centering
        \includegraphics[width=0.98\linewidth]{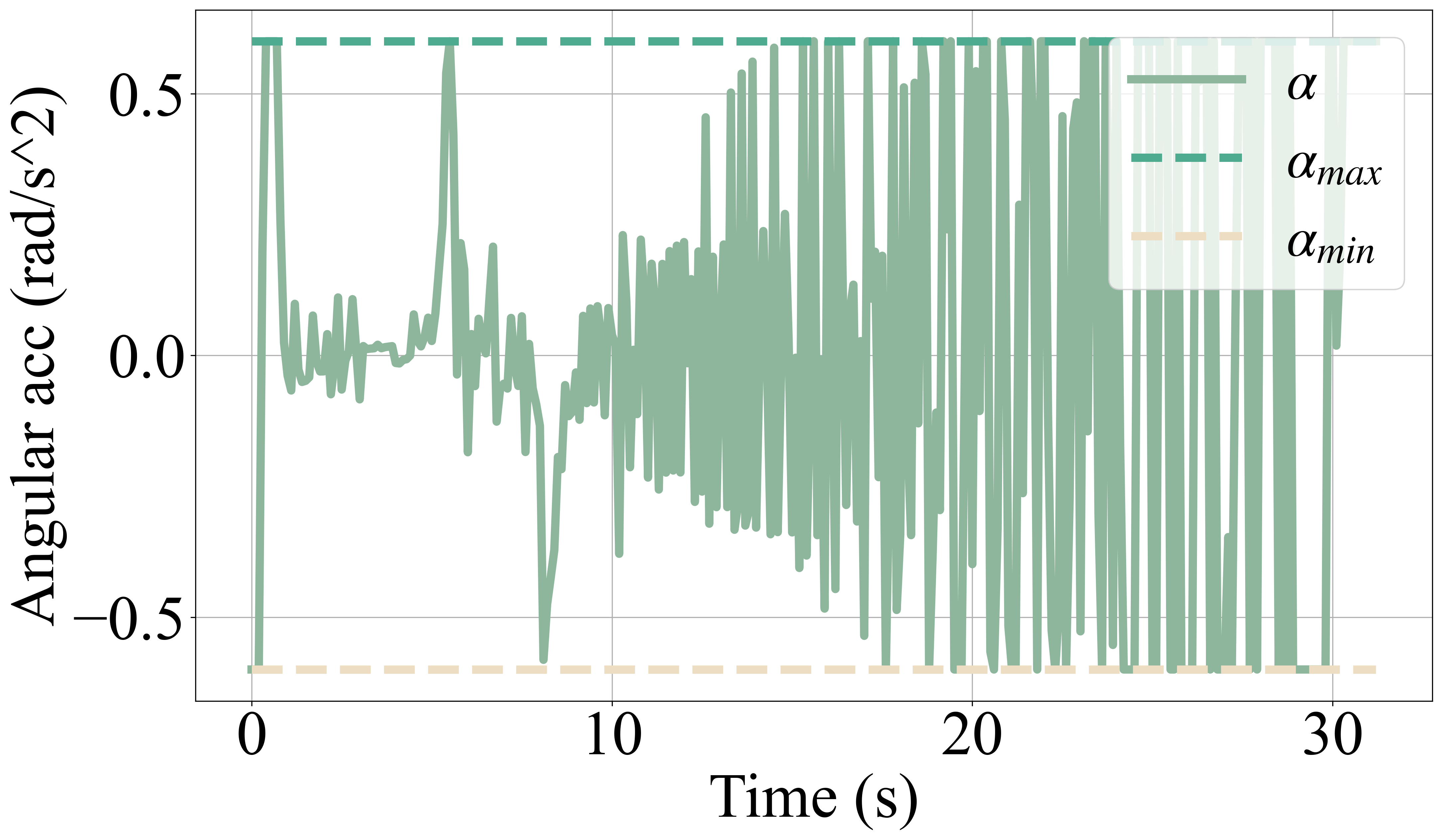}
        \caption{VO}
        \label{subfig:control_w_vo}
    \end{subfigure}
    \caption{Velocity and acceleration changes of VOCBF and VO.} 
    \label{fig:states_controls_changes}
\end{figure}
Furthermore, to evaluate the computational efficiency of CLF-VOCBF-MIQP, CLF-VOCBF-QPs, and CLF-VOCBF-QP-DecNet, we compared their performance across varying numbers of obstacles.
The runtime complexity bounds for these methods are provided in Table~\ref{tab:runtime}, while Table~\ref{tab:computation_time} shows the computation times for each method with varying numbers of obstacles.
CLF-VOCBF-QP-DecNet achieves the lowest computational cost since it is guided by the decision network and only solves a single QP problem. 
CLF-VOCBF-MIQP has the highest computational cost due to its reliance on solving a full MIQP problem with integer variables. 
CLF-VOCBF-QPs significantly reduce computational cost compared to CLF-VOCBF-MIQP by checking whether the feasible region $\mathcal{A}$ of each sub-optimization problem is empty before solving it.
For example, with two obstacles, CLF-VOCBF-QPs typically solves nine sub-optimization problems and selects the one with the minimum objective function, as shown in Fig.~\ref{fig:explain_qp}(\subref{subfig:case1}).
In cases like Fig.~\ref{fig:explain_qp}(\subref{subfig:case2}), after verifying the feasible regions, it may need to solve only one optimization problem, greatly reducing computational cost.
However, CLF-VOCBF-QPs still incur higher computational cost than CLF-VOCBF-QP-DecNet due to the need for feasibility checks and solving multiple sub-optimization problems.
Moreover, the robot’s velocity and acceleration changes in the presence of dynamic obstacles are shown in Fig.~\ref{fig:states_controls_changes}.

\subsection{Compared with Benchmarks} 
\label{sec:5-3}
In this section, we first compare our proposed CLF-VOCBF-QPs approach with the classical VO method~\cite{fiorini1998motion} to demonstrate that VOCBF provides stronger safety assurances.
CLF-VOCBF-QPs is used as it accurately represents the solution to the original MIQP problem, whereas CLF-VOCBF-QP-DecNet may occasionally deviate due to potential inaccuracies in the decision network.
The test scenario involves two dynamic obstacles with constant velocities of $(-0.4 \, \si[per-mode=symbol]{\metre\per\second}, -0.15 \, \si[per-mode=symbol]{\metre\per\second})$ and $(0.0 \, \si[per-mode=symbol]{\metre\per\second}, 0.2 \, \si[per-mode=symbol]{\metre\per\second})$.
The robot also aims to move from $(0 \, \si[per-mode=symbol]{\metre}, 4 \, \si[per-mode=symbol]{\metre})$ to $(12 \, \si[per-mode=symbol]{\metre}, 10 \, \si[per-mode=symbol]{\metre})$.
The navigation processes for both methods are shown in Fig.~\ref{fig:cmp_vo}, where the initial positions of the obstacles are the same for both methods, as indicated by the black squares.
Since the robot takes different amounts of time to reach the target position in each method, the obstacles travel different distances, leading to variations in their trajectories, as depicted in Fig.~\ref{fig:cmp_vo}(\subref{subfig:vocbf_cmp}) and Fig.~\ref{fig:cmp_vo}(\subref{subfig:vo_cmp}).
As a result, the final positions of the obstacles are determined by the respective completion times of the robot's navigation.
CLF-VOCBF-QPs, guided by CLFs, moves the robot efficiently toward its target while maintaining a larger distance from obstacles than the prescribed safe margin.
In contrast, VO results in a minimum robot-obstacle distance smaller than the safe margin, indicating a potential collision, as illustrated in Fig.~\ref{fig:cmp_vo}(\subref{subfig:vo_cmp}).
Additionally, VO is designed for robots controlled by horizontal and vertical velocities, which leads to oscillatory changes in states and control inputs when applied to the acceleration-controlled model~\eqref{eq:affine_robot_model}, as shown in Fig.~\ref{fig:states_controls_changes}.
In contrast, CLF-VOCBF-QPs ensures smoother variations in both states and control inputs, further demonstrating its effectiveness.
\begin{figure}
    \centering
    \begin{subfigure}{0.45\linewidth}
        \centering
        \includegraphics[width=0.98\linewidth]{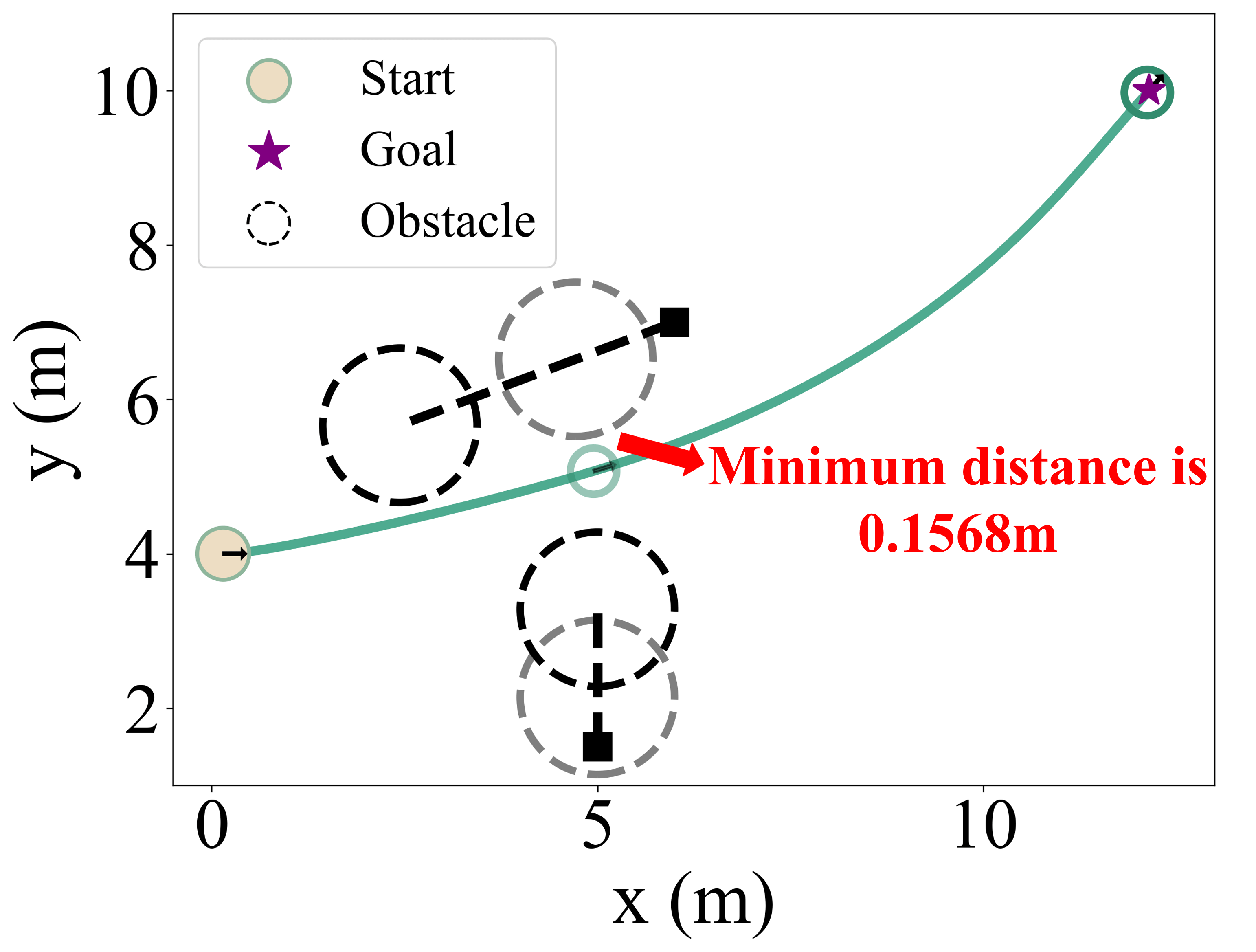}
        \caption{VOCBF}
        \label{subfig:vocbf_cmp}
    \end{subfigure}
    \centering
    \begin{subfigure}{0.45\linewidth}
        \centering
        \includegraphics[width=0.98\linewidth]{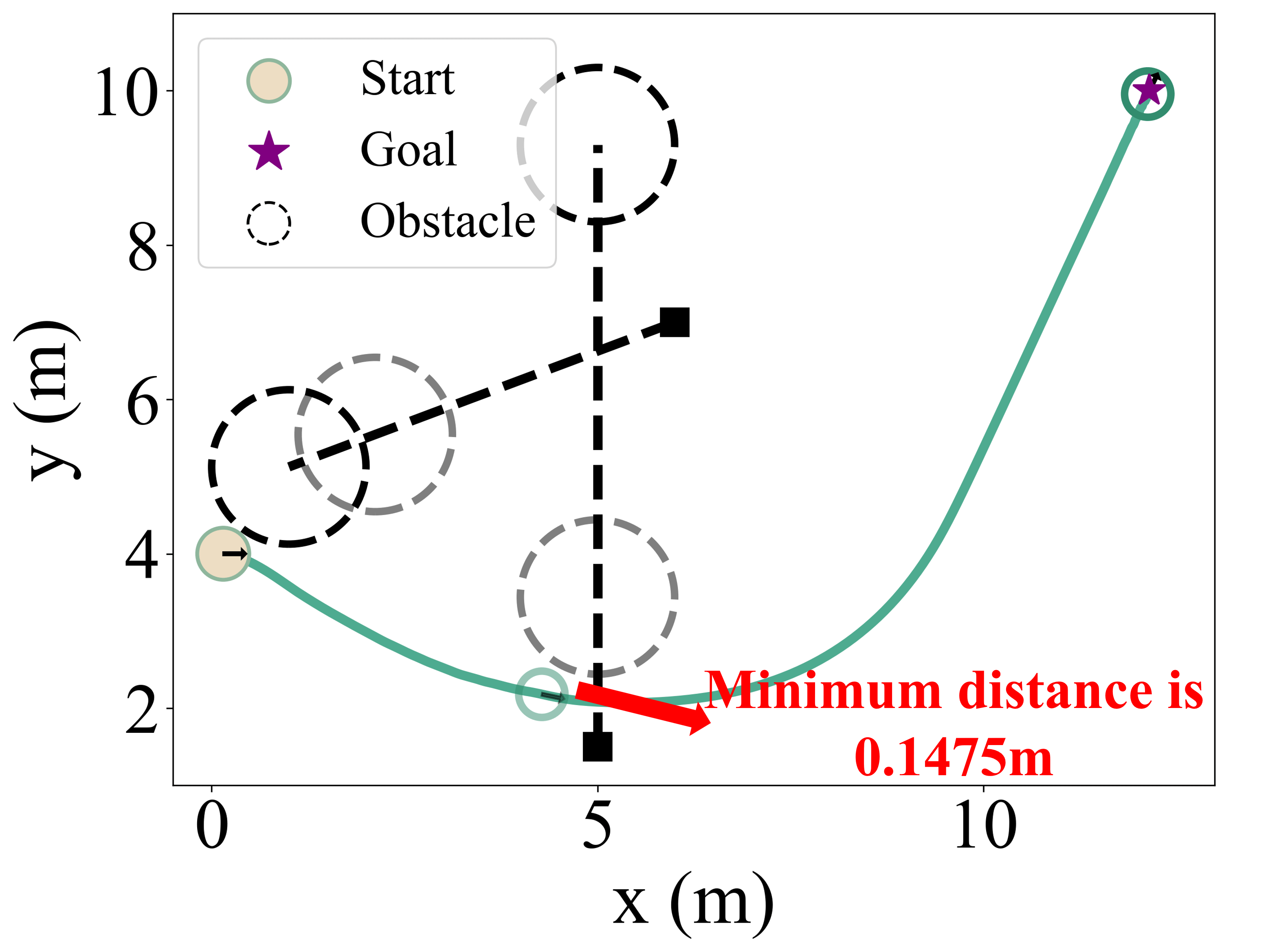}
        \caption{VO}
        \label{subfig:vo_cmp}
    \end{subfigure}
    \caption{Comparison of VOCBF and VO, and VOCBF provides more robust safety guarantees than VO.} 
    \label{fig:cmp_vo}
\end{figure}

We also compare our proposed approach with HOCBF~\cite{xiao2022high}, which is constructed based on the Euclidean distance as~\eqref{eq:hocbf}, to demonstrate its better performance in dynamic obstacle avoidance.
To specifically evaluate the differences in obstacle avoidance performance between VOCBF and HOCBF, all other constraints in the optimization problem remain consistent, with only the substitution of the VOCBF constraint~\eqref{eq:cons_cbf} by the corresponding HOCBF constraint. 
The parameters for HOCBF are chosen according to~\cite{xiao2022high} and are listed in Table~\ref{tab:simulation_params}.

A total of 600 random tests are conducted to evaluate the performance of CLF-VOCBF-QPs, CLF-VOCBF-QP-DecNet, and the SOTA HOCBF.
The evaluation is based on three key performance metrics:
\begin{enumerate}
    \item \textbf{Deadlock Rate:} The percentage of cases where the robot gets stuck during navigation without collisions, indicating the method's navigation efficiency.
    \item \textbf{Completion Rate:} The percentage of cases where the robot successfully reaches its target without collisions or deadlock, reflecting the method's overall effectiveness in collision avoidance and navigation.
    \item \textbf{Infeasible Rate:} The percentage of cases where the optimization problem is infeasible, showing how often the method struggles with feasibility.
\end{enumerate}

\begin{table}
\caption{Performance evaluation w.r.t. different methods.}
\label{tab:eva}
\centering
\resizebox{\columnwidth}{!}{%
\begin{tabular}{cccc}
\hline
\multirow{2}{*}{Methods} & \multicolumn{3}{c}{Evaluation Metrics ($\%$)}             \\ \cline{2-4} 
                         & Deadlock Rate & Completion Rate & Infeasible Rate \\ \hline
CLF-VOCBF-QPs            & 3             & \textbf{91}     & \textbf{6}   \\
CLF-VOCBF-QP-DecNet      & 2             & 85              & 13           \\
HOCBF                    & 3             & 86              & 11           \\ \hline
\end{tabular}
}
\end{table}
In the random scenario setup, the robot and two dynamic obstacles are placed in an environment bounded by $(0.0 \, \si[per-mode=symbol]{\metre}, 0.0 \, \si[per-mode=symbol]{\metre})$ and $(15.0 \, \si[per-mode=symbol]{\metre}, 15.0 \, \si[per-mode=symbol]{\metre})$.
The robot's radius ranges from $0.2 \, \si[per-mode=symbol]{\metre}$ to $0.7 \, \si[per-mode=symbol]{\metre}$, while obstacles have velocities randomly selected between $(-1.0 \, \si[per-mode=symbol]{\metre\per\second}, 1.0 \, \si[per-mode=symbol]{\metre\per\second})$ and radii ranging from $0.1 \, \si[per-mode=symbol]{\metre}$ to $1.5 \, \si[per-mode=symbol]{\metre}$. 
The performance evaluation results are presented in Table ~\ref{tab:eva}.
CLF-VOCBF-QPs achieves the highest completion rate and the lowest infeasible rate among all methods. 
By splitting the MIQP problem into multiple sub-optimization problems and checking feasibility before solving them, this method effectively ensures successful navigation and collision avoidance while maintaining feasibility.
CLF-VOCBF-QP-DecNet has a slightly lower completion rate due to its reliance on the decision network for direction selection. 
Incorrect guidance from the decision network can lead to infeasibility in some scenarios, resulting in suboptimal performance.
HOCBF performs worse than CLF-VOCBF-QPs, with a lower completion rate and a higher infeasible rate. 
This is because HOCBF, while incorporating time-varying CBFs, is fundamentally distance-based and less effective for dynamic obstacle avoidance.

\begin{figure}
    \centering
    \begin{subfigure}{0.45\linewidth}
        \centering
        \includegraphics[width=0.98\linewidth]{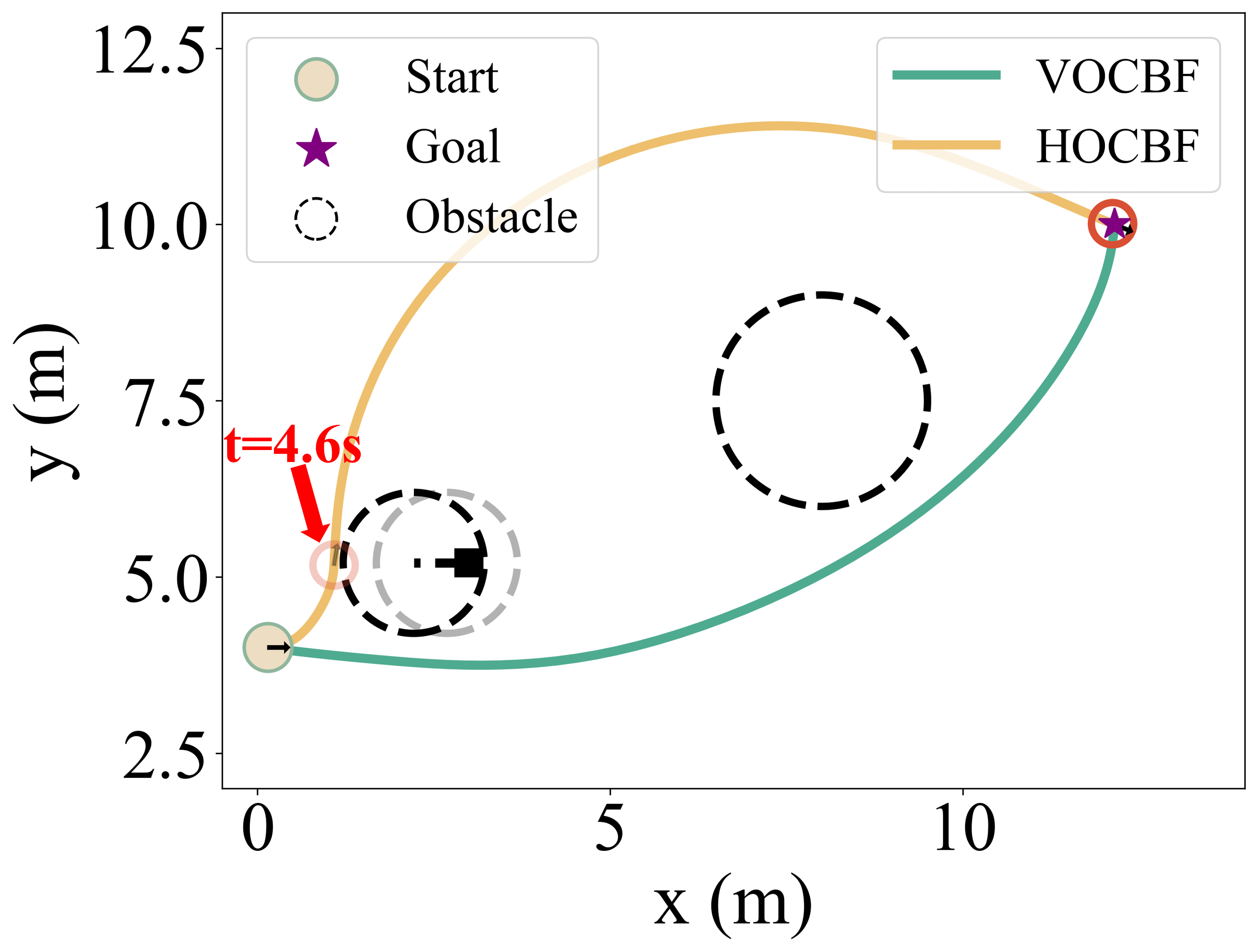}
        \caption{Low velocity of obstacle}
        \label{subfig:cmp_case1}
    \end{subfigure}
    \centering
    \begin{subfigure}{0.45\linewidth}
        \centering
        \includegraphics[width=0.98\linewidth]{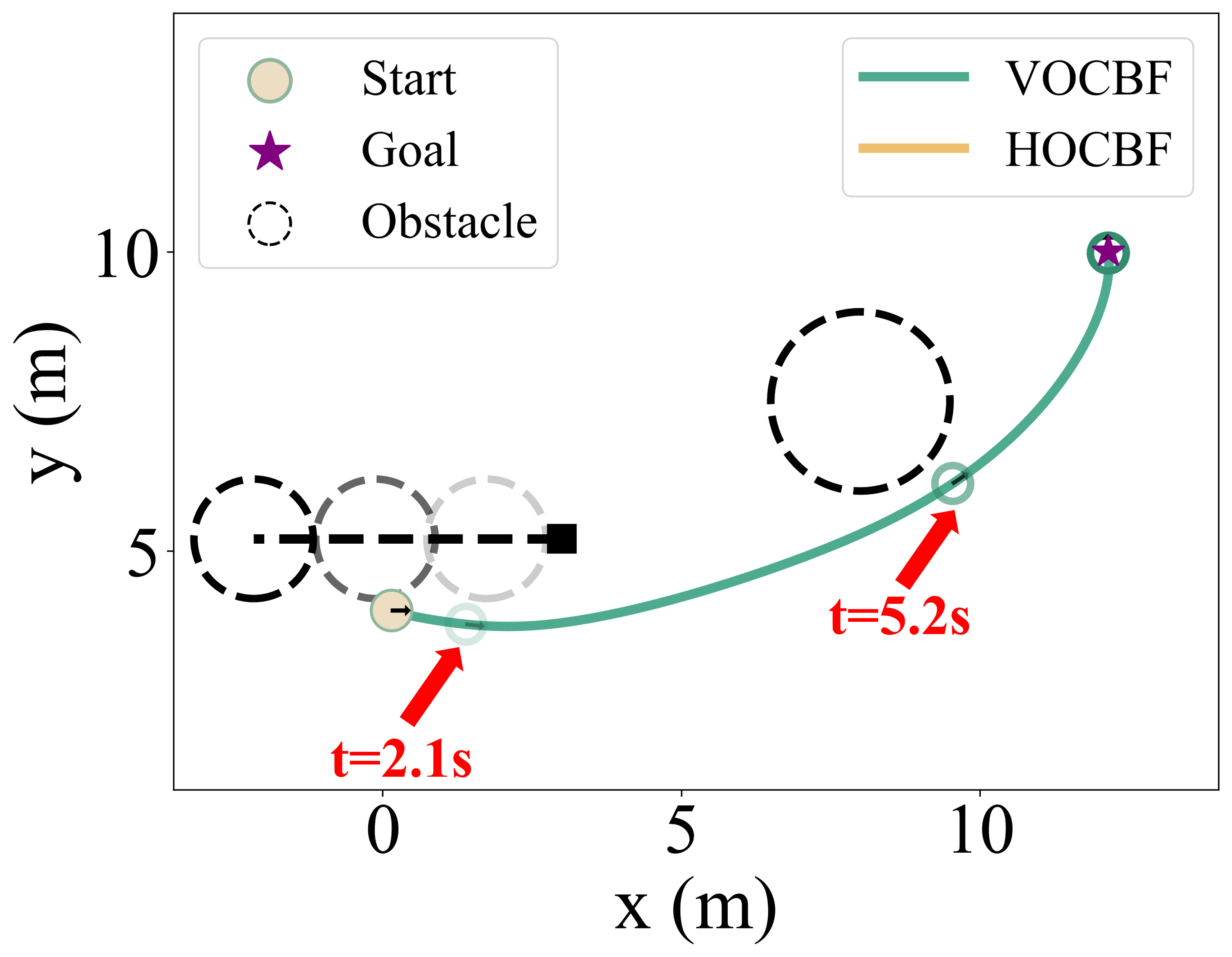}
        \caption{Fast velocity of obstacle}
        \label{subfig:cmp_case2}
    \end{subfigure}
    \caption{Comparison between VOCBF and HOCBF. In (a), both methods can navigate the robot to its destination while avoiding collisions with obstacles. In (b), HOCBF fails to avoid collisions with the dynamic obstacle due to the optimization problem being infeasible.} 
    \label{fig:compare}
\end{figure}
Additionally, we design two scenarios to thoroughly evaluate the performance of HOCBF and VOCBF in handling dynamic obstacles. Both scenarios share the same start and target positions and include one static obstacle and one dynamic obstacle with a constant velocity. 
The difference lies in the velocity of the dynamic obstacle: in the first scenario, it moves slowly at $(-0.05 \, \si[per-mode=symbol]{\metre\per\second}, 0.0 \, \si[per-mode=symbol]{\metre\per\second})$, whereas in the second, it moves faster at $(-0.6 \, \si[per-mode=symbol]{\metre\per\second}, 0.0 \, \si[per-mode=symbol]{\metre\per\second})$. 
Both approaches employ time-varying CBFs to avoid collisions with dynamic obstacles. 
In this analysis, we use CLF-VOCBF-QP-DecNet, as it achieves the same results as CLF-VOCBF-QPs while requiring lower computational cost.
\begin{table}
\caption{Comparison between VOCBF and HOCBF.}
\label{tab:cmp_hocbf_vocbf}
\centering
\begin{tabular}{cccc}
\hline
Method                 & Scenario & Reach Time (s) & Average Solving Time (ms) \\ \hline
\multirow{2}{*}{VOCBF} & Slow vel.    & 8.1    & 10.5          \\
                       & Fast vel.    & 7.9    & 10.6          \\ \hline
\multirow{2}{*}{HOCBF} & Slow vel.    & 15.7   & 10.2           \\
                       & Fast vel.    & N/A    & N/A           \\ \hline
\end{tabular}
\end{table}

When the dynamic obstacle moves slowly, both methods successfully guide the robot to its destination without collisions, as shown in Fig.~\ref{fig:compare}(\subref{subfig:cmp_case1}).
Additionally, our approach demonstrates faster convergence compared to HOCBF, enabling the robot to reach its target more quickly, as demonstrated in Table~\ref{tab:cmp_hocbf_vocbf}.
The HOCBF-based approach requires substantial relaxation of CLFs to ensure safety over a subset of the original safe set, resulting in larger relaxation variables and a longer reach time.
In contrast, CLF-VOCBF-QP-DecNet avoids excessive relaxation of CLFs, leading to a faster time.
When the dynamic obstacle moves at a faster velocity, only CLF-VOCBF-QP-DecNet successfully guides the robot to its target while ensuring safety, while HOCBF fails to avoid collisions with the dynamic obstacle due to the infeasibility of the optimization problem, as shown in Fig.~\ref{fig:compare}(\subref{subfig:cmp_case2}).
This indicates that our method, which constructs VOCBF in the velocity space, is better suited for dynamic obstacle avoidance.
\begin{remark}
\label{rem:vo}
VOCBF requires the robot's velocity to lie outside VO to ensure safety.
However, if the current distance between the robot and obstacle exceeds the sum of their radii while the robot's velocity falls within the VO, the situation is safe in terms of Euclidean distance but unsafe in terms of VO.
This highlights the conservative nature of VO-based approaches, as VOCBF also guarantees safety over a subset of the original safe set.
\end{remark}

\subsection{Navigation of Distributed Multi-Robot Systems}
\label{sec:5-4}
In this section, we demonstrate that our proposed approach can be applied to navigation and collision avoidance in distributed multi-robot systems. 
Each robot in the distributed multi-robot systems can independently make decisions to reach its destination while avoiding collisions with others.
As discussed in Section~\ref{sec:design_cbf}, the constraints of VO-based CBFs and their variants are equivalent, enabling the application of CLF-VOCBF-QP-DecNet.

We consider the circle scenario to evaluate our method: robots are uniformly distributed on a circle of radius $5 \, \si[per-mode=symbol]{\metre}$ centered at $(7\, \si[per-mode=symbol]{\metre}, 7 \, \si[per-mode=symbol]{\metre})$, and the initial and target positions of robots are symmetric along the center of the circle, as shown in Fig.~\ref{fig:multi_robot}(\subref{subfig:multi1}).
With CLF-VOCBF-QP-DecNet, each robot navigates independently toward its target while avoiding collisions, with robot positions at different times indicated by color gradients, as shown in Fig.~\ref{fig:multi_robot}.

In summary, while our proposed approach demonstrates promising results, challenges remain when scaling to very large multi-robot systems. As the number of robots increases, the prediction accuracy of the decision network tends to decline, and the computational cost of CLF-VOCBF-QPs grows exponentially. These limitations highlight that our method requires further refinement before it can be effectively applied to large-scale robotic systems.
\begin{figure}
    \centering
    \begin{subfigure}{0.45\linewidth}
        \centering
        \includegraphics[width=0.98\linewidth]{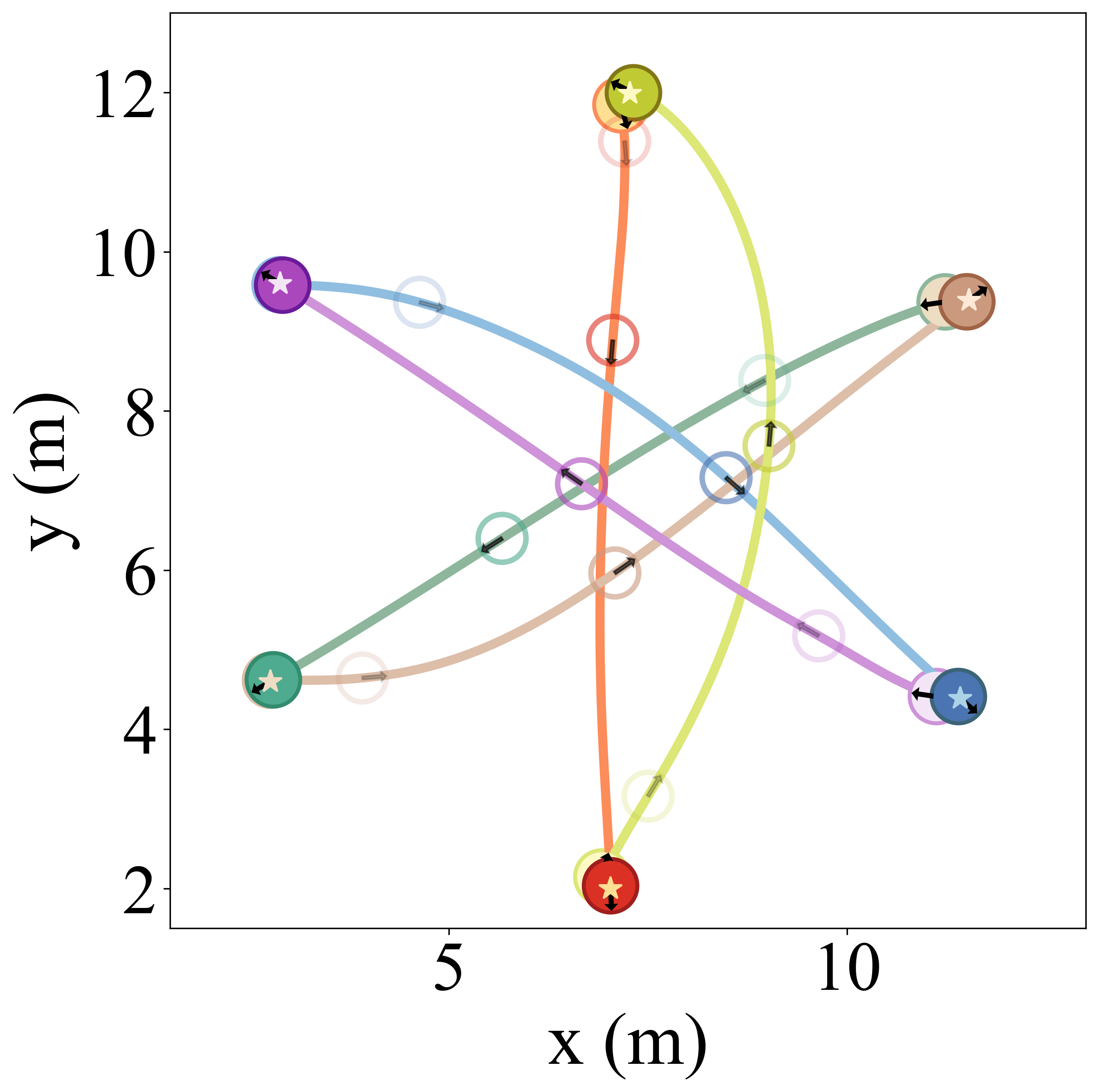}
        \caption{Six robots}
        \label{subfig:multi1}
    \end{subfigure}
    \centering
    \begin{subfigure}{0.45\linewidth}
        \centering
        \includegraphics[width=0.98\linewidth]{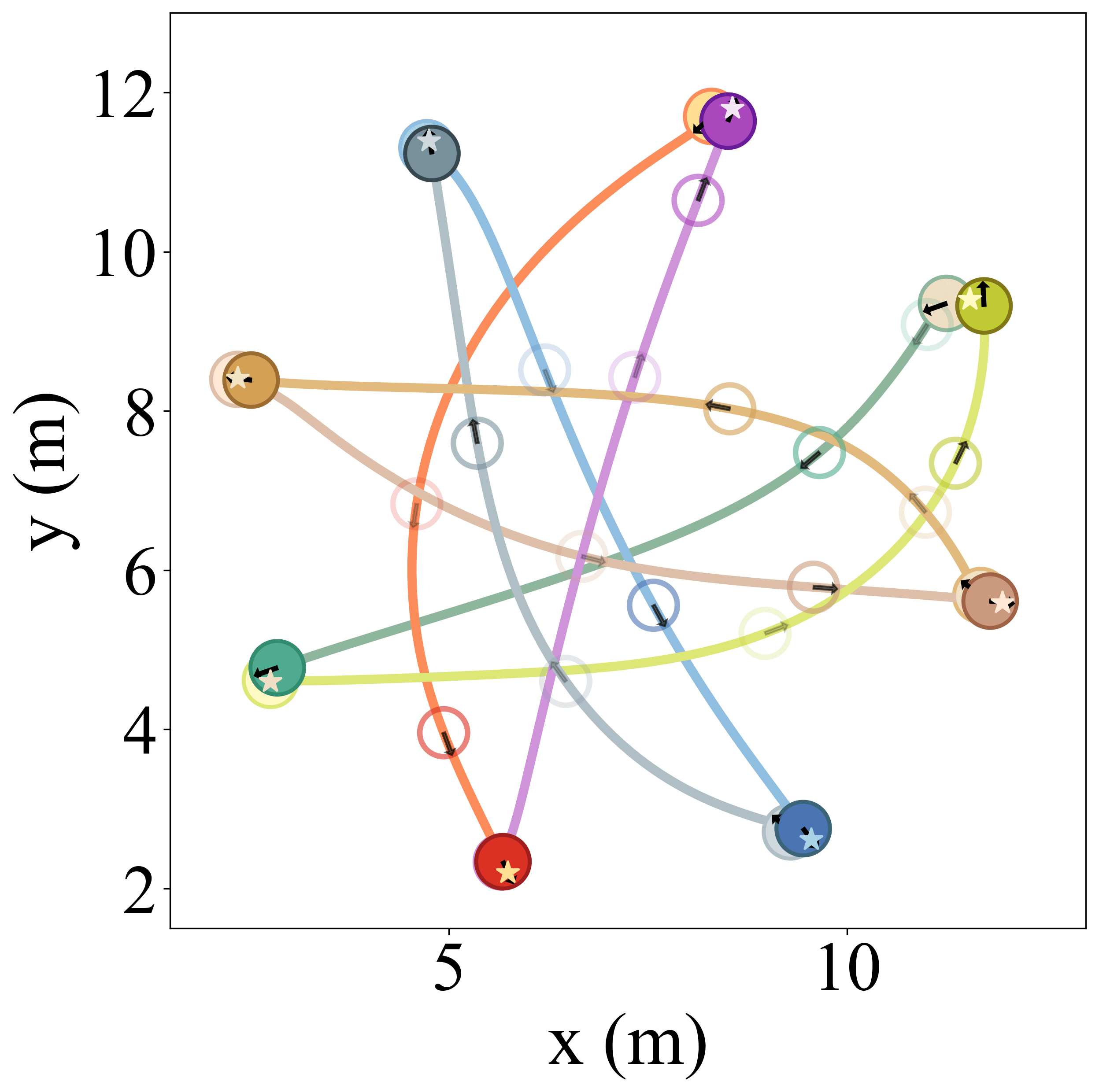}
        \caption{Eight robots}
        \label{subfig:multi2}
    \end{subfigure}
    \caption{Navigation of distributed multi-robot systems. The positions of different robots are shown as circles with different colors. Heading angle of the robot is illustrated by an arrow within the circle. Positions over time are illustrated using color gradients.}
    \label{fig:multi_robot}
\end{figure}
\section{Conclusions}
\label{sec:conclu}
In this paper, we propose a safety-critical controller CLF-VOCBF-QP for the acceleration-controlled unicycle model.
Designing CLFs and CBFs for this model often faces the challenge that not all control inputs explicitly appear in the constraints of CLFs and CBFs.
To address this issue, we propose designing state-feedback-based CLFs and constructing VOCBFs to ensure control inputs explicitly appear in the constraints.
Additionally, we formulate the constraint of CLFs and VOCBFs, along with other constraints of the robot's kinematics in the form of CLF-VOCBF-MIQP.
To efficiently solve the original MIQP problem, we split it into multiple sub-optimization problems and employ a decision network to guide the selection and resolution of a single sub-optimization problem.
Numerical simulations are conducted to validate that our approach can successfully achieve navigation and collision avoidance with both static and dynamic obstacles.
Furthermore, we also extend our approach to distributed multi-robot systems, enabling each robot to reach its destination while avoiding collisions with other robots.
Future work will focus on enhancing the accuracy of the decision network and extending our approach to large-scale multi-robot systems.

\bibliographystyle{IEEEtran}
\bibliography{reference}
% \balance

%%%%%%%%%%%%%%%%%%%%%
% author information%
%%%%%%%%%%%%%%%%%%%%%
\vspace{-3.0em}
\begin{IEEEbiography}
[{\includegraphics[width=1in,height=1.25in,clip,keepaspectratio]{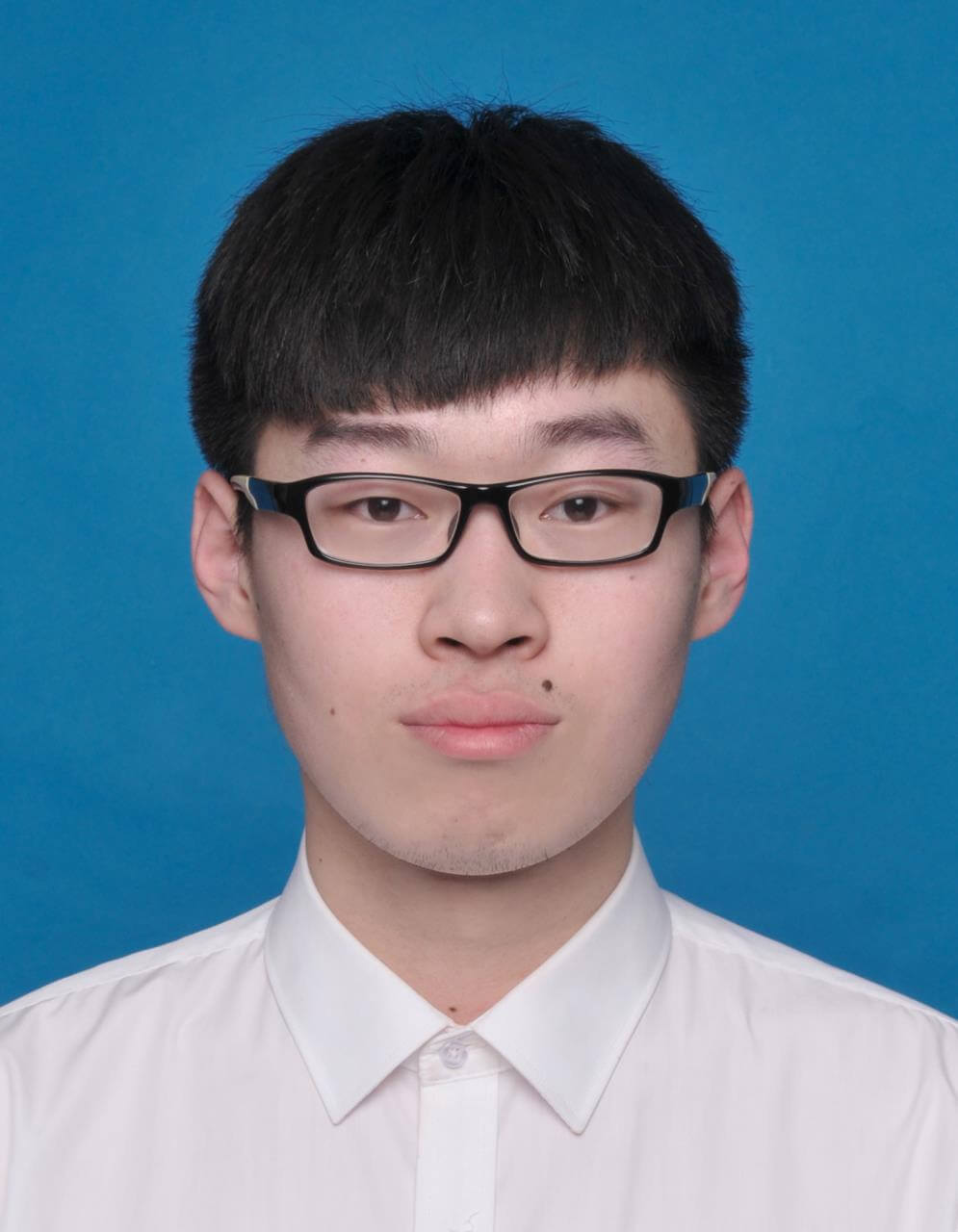}}]{Jihao Huang}
 received the B.Eng. degree in automation in 2020 from Hangzhou Dianzi University, Hangzhou, China. 
 
 He is currently pursuing the Ph.D. degree in control science and engineering at Zhejiang University, Hangzhou, China. His current research interests include control theory, distributed multi-robot systems and motion planning of robots.
\end{IEEEbiography}

\vspace{-3.0em}
\begin{IEEEbiography}
[{\includegraphics[width=1in,height=1.25in,clip,keepaspectratio]{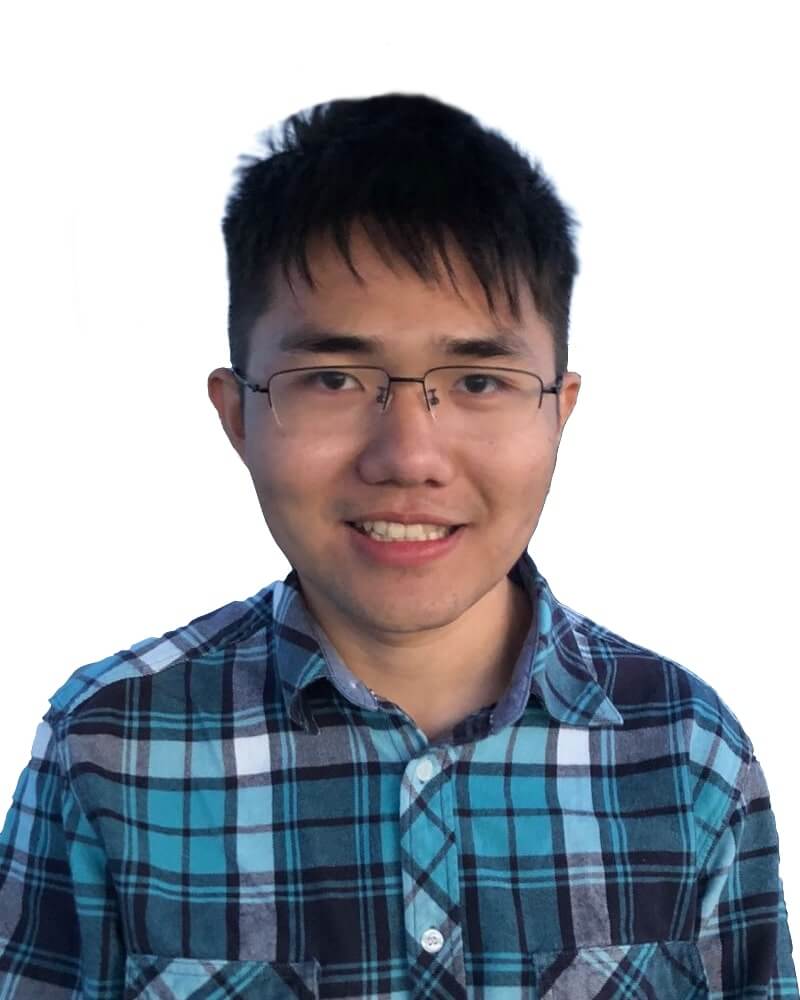}}]{Jun Zeng}
received his Ph.D. in Control and Robotics at the Department of Mechanical Engineering at University of California, Berkeley, USA in 2022 and Dipl. Ing. from Ecole Polytechnique, France in 2017, and a B.S.E degree from Shanghai Jiao Tong University (SJTU), China in 2016.
His research interests lie at the intersection of optimization, control, planning, and learning with applications on various robotics platforms.
\end{IEEEbiography}

\vspace{-3.0em}
\begin{IEEEbiography}
[{\includegraphics[width=1in,height=1.25in,clip,keepaspectratio]{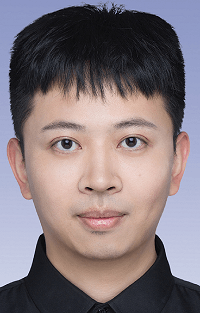}}]{Xuemin Chi}
received the B.Eng. degree in vehicle engineering in 2017 from Shenyang University of Technology, Shenyang, China, and the M.Sc. degree in vehicle engineering in 2019 from Dalian University of Technology, Dalian, China. 

He is currently working toward the Ph.D. degree in control engineering at Zhejiang University, Hangzhou, China.
His research interests include motion planning, safe model predictive control algorithms.
\end{IEEEbiography}

\vspace{-3.0em}
\begin{IEEEbiography}
[{\includegraphics[width=1in,height=1.25in,clip,keepaspectratio]{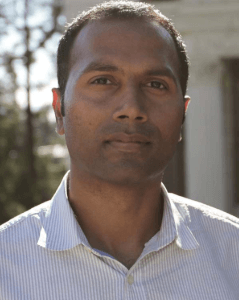}}]{Koushil Sreenath} received the Ph.D. degree in electrical engineering and computer science from the University of Michigan, Ann Arbor, MI, USA, in 2011. He was a Postdoctoral Scholar with the GRASP Laboratory, University of Pennsylvania, from 2011 to 2013, and an Assistant Professor at Carnegie Mellon University, from 2013 to 2017. He is currently an Associate Professor in mechanical engineering with UC Berkeley. His research interests include dynamic robotics, applied nonlinear control, and safety-critical control. He received the NSF CAREER Award, a Hellman Fellowship Award, Best Paper Award at the Robotics: Science and Systems (RSS), and the Google Faculty Research Award in Robotics.
\end{IEEEbiography}

\vspace{-3.0em}
\begin{IEEEbiography}
[{\includegraphics[width=1in,height=1.25in,clip,keepaspectratio]{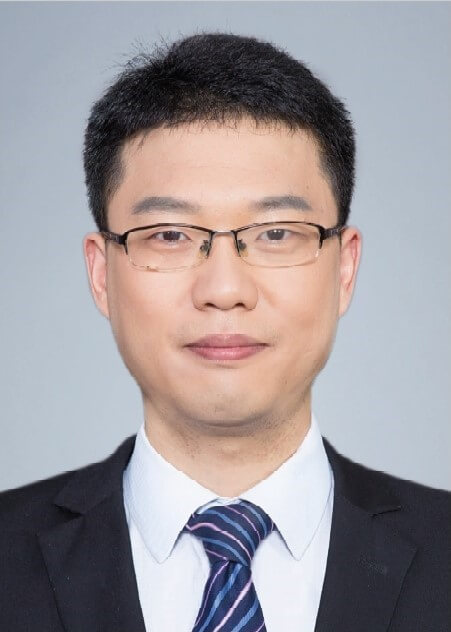}}]{Zhitao Liu}
 (M’13) received the B.S. degree from Shandong University, China, in 2005, and the Ph.D. degree in control science and engineering from Zhejiang University, Hangzhou, China, in 2010. 
 
 From 2011 to 2014, he was a Research Fellow with TUM CREATE, Singapore. He was an Assistant Professor from 2015 to 2016 and an Associate Professor from 2017 to 2021 in Zhejiang University, where he is currently a Professor with the Institute of Cyber-Systems and Control, Zhejiang University. 
 He is also a visiting professor at the Institute of Intelligence Science and Engineering, Shenzhen Polytechnic University.
 His current research interest include robust adaptive control, wireless transfer systems and energy management systems.
\end{IEEEbiography}

\vspace{-3.0em}
\begin{IEEEbiography}
[{\includegraphics[width=1in,height=1.25in,clip,keepaspectratio]{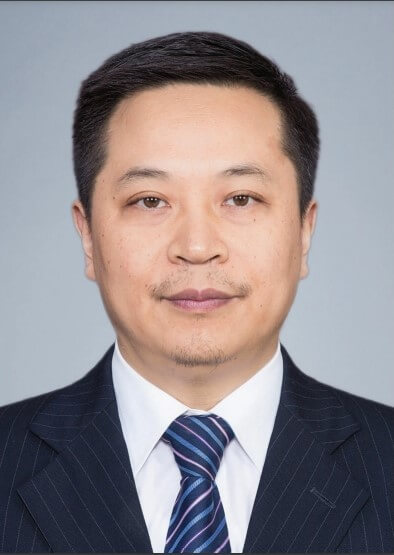}}]{Hongye Su}
 (SM’14) was born in 1969. He received the B.S. degree in industrial automation from the Nanjing University of Chemical Technology, Jiangsu, China, in 1990, and the M.S. and Ph.D. degrees in industrial automation from Zhejiang University, Hangzhou, China, in 1993 and 1995, respectively.

 From 1995 to 1997, he was a Lecturer with the Department of Chemical Engineering, Zhejiang University. From 1998 to 2000, he was an Associate Professor with the Institute of Advanced Process Control, Zhejiang University, where he is currently a Professor with the Institute of Cyber-Systems and Control. His current research interests include robust control, time-delay systems, and advanced process control theory and applications.
\end{IEEEbiography}

\end{document}